\algnewcommand{\IIf}[1]{\State\algorithmicif\ #1\ \algorithmicthen}
\algnewcommand{\EndIIf}{\unskip\ \algorithmicend\ \algorithmicif}
\newtheorem{property}{Property}
\newtheorem{theorem}{Theorem}
\newcommand{\nisha}{\textcolor{black}}
\newcommand{\bala}{\textcolor{black}}
\newcommand{\jordan}{\textcolor{black}}
\title{Explainable AI for Trees: From Local Explanations to Global Understanding}
\author[1]{Scott M. Lundberg}
\author[1,2]{Gabriel Erion}
\author[1]{Hugh Chen}
\author[1,2]{Alex DeGrave}
\author[3]{Jordan M. Prutkin}
\author[4,5]{Bala Nair}
\author[6]{Ronit Katz}
\author[6]{Jonathan Himmelfarb}
\author[6]{Nisha Bansal}
\author[1,*]{Su-In Lee}
\affil[1]{{\small Paul G. Allen School of Computer Science and Engineering, University of Washington}}
\affil[2]{{\small Medical Scientist Training Program, University of Washington}}
\affil[3]{{\small Division of Cardiology, Department of Medicine, University of Washington}}
\affil[4]{{\small Department of Anesthesiology and Pain Medicine, University of Washington}}
\affil[5]{{\small Harborview Injury Prevention and Research Center, University of Washington}}
\affil[6]{{\small Kidney Research Institute, Division of Nephrology, Department of Medicine, University of Washington}}
\affil[*]{{\small Corresponding: suinlee@cs.washington.edu}}
\begin{document}


\setcounter{page}{1}

\date{}

{\setstretch{1}
\maketitle
}

\noindent \small {\bf One sentence summary:} Explanations for ensemble tree-based predictions; a unique exact solution that guarantees desirable explanation properties.
\\

\begin{abstract}
\noindent
Tree-based machine learning models such as random forests, decision trees, and gradient boosted trees are the most popular non-linear predictive models used in practice today, yet comparatively little attention has been paid to explaining their predictions. Here we significantly improve the interpretability of tree-based models through three main contributions: 1) The first polynomial time algorithm to compute optimal explanations based on game theory. 2) A new type of explanation that directly measures local feature interaction effects. 3) A new set of tools for understanding global model structure based on combining many local explanations of each prediction. We apply these tools to three medical machine learning problems and show how combining many high-quality local explanations allows us to represent global structure while retaining local faithfulness to the original model. These tools enable us to i) identify high magnitude but low frequency non-linear mortality risk factors in the general US population, ii) highlight distinct population sub-groups with shared risk characteristics, iii) identify non-linear interaction effects among risk factors for chronic kidney disease, and iv) monitor a machine learning model deployed in a hospital by identifying which features are degrading the model's performance over time. Given the popularity of tree-based machine learning models, these improvements to their interpretability have implications across a broad set of domains.
\end{abstract}

\section{Introduction} 
Machine learning models based on trees are the most popular non-linear models in use today \cite{kaggle2017,friedman2001elements}. Random forests, gradient boosted trees, and other tree-based models are used in finance, medicine, biology, customer retention, advertising, supply chain management, manufacturing, public health, and many other areas to make predictions based on sets of input features (Figure~\ref{fig:overview}A left). In these applications it is often important to have models that are \emph{both} accurate and interpretable, where being interpretable means that we can understand how the model uses the input features to make predictions \cite{lundberg2017unified}. Yet while there is a rich history of {\it global} interpretation methods for trees that summarize the impact of input features on the model as a whole, much less attention has been paid to {\it local} explanations that explain the impact of input features on individual predictions (i.e. for a single sample) (Figure~\ref{fig:overview}A).

\begin{figure}
  \centering
  \includegraphics[width=1.0\textwidth]{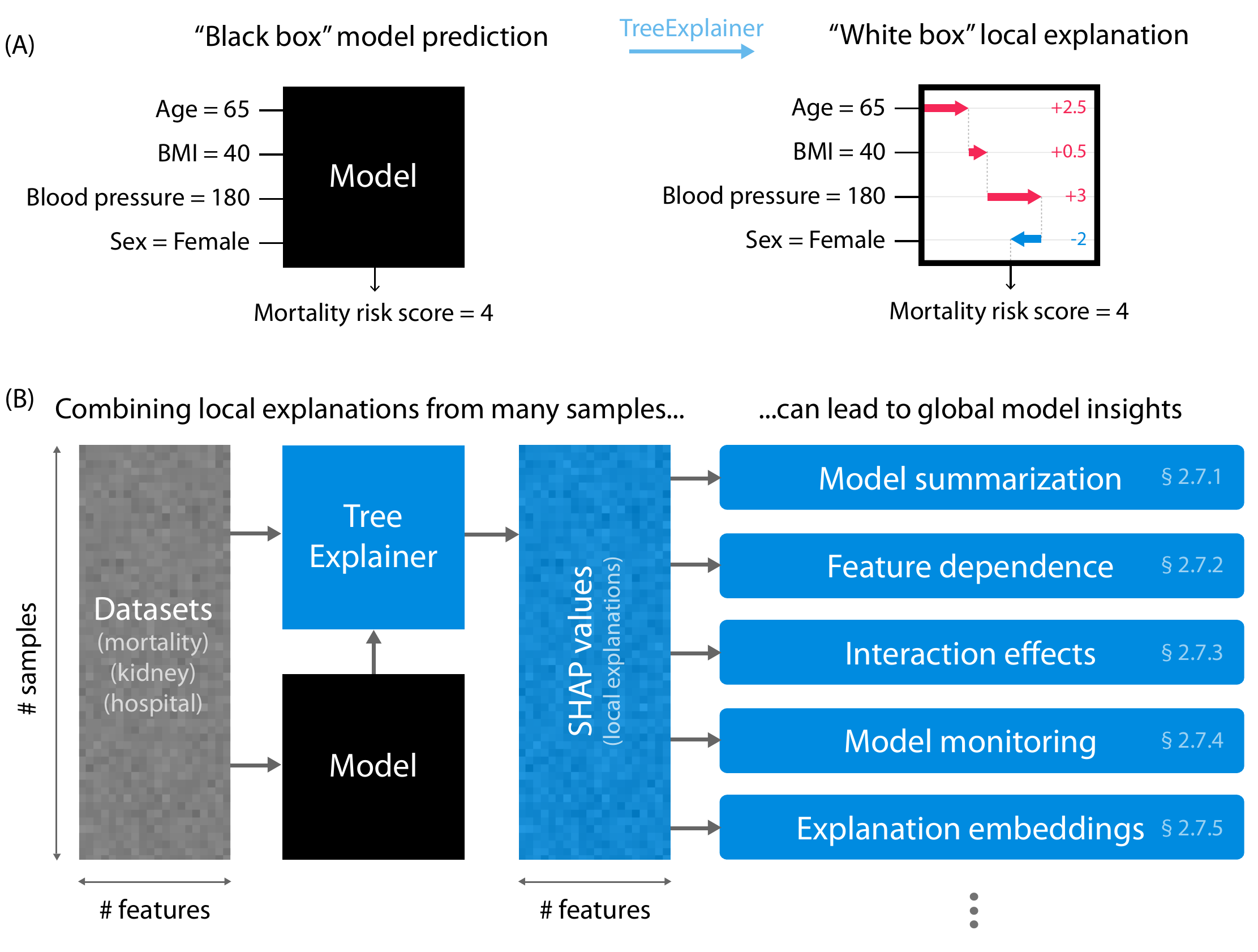}
  \caption[Paper overview]{{\bf Local explanations based on TreeExplainer enable a wide variety of new ways to understand global model structure.} (A) A local explanation based on assigning a numeric measure of credit to each input feature (Section~\ref{sec:tree_explainer}). (B) By combining many local explanations we can represent global structure while retaining local faithfulness to the original model. To demonstrate this we use three illustrative medical datasets to train gradient boosted decision trees and then compute local explanations based on SHapley Additive exPlanation (SHAP) values (Section~\ref{sec:tree_explainer}). Computing local explanations across all samples in a dataset enables many new tools for understanding global model structure (Section~\ref{sec:building_blocks}).} 
  \label{fig:overview}
\end{figure}

There are three ways we are aware of to explain individual predictions from trees (i.e. local explanation methods): 1) reporting the decision path; 2) an unpublished heuristic approach that assigns credit to each input feature \cite{treeinterpreter}; and 3) a variety of model-agnostic approaches that require executing the model many times for each explanation \cite{ribeiro2016should,datta2016algorithmic,vstrumbelj2014explaining,lundberg2017unified,baehrens2010explain}. These methods have the following limitations: 1) Simply reporting the decision path of a prediction is unhelpful for most models, particularly those based on multiple trees. 2) The behavior of the heuristic credit allocation approach has not yet been carefully analyzed, and as we show in Section~\ref{sec:inconsistent_local}, it is strongly biased to alter the impact of features based on their tree depth. 3) Since model-agnostic methods rely on post-hoc modeling of an arbitrary function, they can be slow and suffer from sampling variability (Section~\ref{sec:slow_agnostic}).

Here we propose TreeExplainer, a new local explanation method for trees that enables the tractable 
computation of \emph{optimal} local explanations, as defined by desirable properties from game theory (Section~\ref{sec:tree_explainer}).
It bridges theory to practice by building on our previous 
model-agnostic work based on the classic game-theoretic Shapley values \cite{lundberg2017unified,shapley1953value} and leads to three notable improvements: 

\begin{enumerate}[leftmargin=*]
\item {\it TreeExplainer enables the exact computation of optimal local explanations for tree-based models.} 
The classic Shapley values can be considered ``optimal'' in the sense that within a large class of approaches they are the only way to measure feature importance while maintaining several natural properties from cooperative game theory \cite{lundberg2017unified}. Unfortunately, in general these values can only be approximated since computing them exactly is NP-hard \cite{matsui2001np}, requiring a summation over all feature subsets. Sampling based approximations have been proposed \cite{vstrumbelj2014explaining,lundberg2017unified}, but using these methods to compute low variance versions of the results in this paper for even our smallest dataset would take years of CPU time (Section~\ref{sec:slow_agnostic}). However, by focusing specifically on trees we were able to develop an algorithm that computes local explanations based on the exact Shapley values in polynomial time. This enables us to provide local explanations that come with \emph{theoretical guarantees} of {\it local accuracy} and {\it consistency} \cite{lundberg2017unified} (defined in Section~\ref{sec:tree_explainer}; \ref{sec:methods_shap_unity}).

\item {\it TreeExplainer extends local explanations to directly capture feature interactions.}
Local explanations that assign a single number to each input feature are very intuitive, but they cannot directly represent \emph{interaction} effects. We provide a theoretically grounded way of measuring local interaction effects based on a generalization of Shapley values proposed in game theory literature \cite{fujimoto2006axiomatic}. We show that this can provide valuable insights into a model's behavior (Section~\ref{sec:interaction_effects}).

\item {\it TreeExplainer provides a new set of tools for understanding global model structure based on many local explanations.} The ability to efficiently and exactly compute local explanations using Shapley values across an entire dataset enables a whole range of new tools to understand the global behavior of the model (Figure~\ref{fig:overview}B; Section~\ref{sec:building_blocks}). 
We show that combining many local explanations allows us to represent global structure while retaining {\it local faithfulness} \cite{ribeiro2018anchors} to the original model, which produces more detailed and accurate representations of model behavior.
\end{enumerate}

The need to explain predictions from tree models is widespread. It is particularly important in medical applications, where the patterns uncovered by a model are often even more important than the model's prediction performance \cite{shortliffe2018clinical,lundberg2018nature}. We use three medical datasets to demonstrate the value of TreeExplainer (\ref{sec:methods_datasets}); they represent three types of loss functions (\ref{sec:methods_model_training_details}): 1) {\it Mortality} -- a mortality dataset with 14,407 individuals and 79 features based on the NHANES I Epidemiologic Followup Study \cite{cox1997plan}, where we model the risk of death over twenty years of followup. 2) {\it Chronic kidney disease} -- a kidney dataset that follows 3,939 chronic kidney disease patients from the Chronic Renal Insufficiency Cohort study over 10,745 visits with the goal of using 333 features to classify if patients will progress to end-stage renal disease within 4 years. 3) {\it Hospital procedure duration} -- a hospital electronic medical record dataset with 147,000 procedures and 2,185 features, where we predict the duration of an upcoming procedure.

We discuss why tree models are the most appropriate models in many situations, both because of their accuracy (Section~\ref{sec:why_trees_accurate}), and their interpretability (Section~\ref{sec:why_trees_interpretable}). We discuss the need for better local explanations of tree-based models (Sections~\ref{sec:inconsistent_local}-\ref{sec:slow_agnostic}), and how we address that need with TreeExplainer (Section~\ref{sec:tree_explainer}). We then extend local explanations to capture interaction effects (Section~\ref{sec:shap_interaction_values}). Finally, we demonstrate the value of the new explainable AI tools enabled by combining many local explanations from TreeExplainer (Section~\ref{sec:building_blocks}). To enable the wide use of TreeExplainer, high-performance implementations have also been released and integrated with many major tree-based machine learning packages (\mbox{\url{https://github.com/suinleelab/treeexplainer-study}}).

\section{Results}

\begin{figure}
  \centering
  \makebox[\textwidth][c]{\includegraphics[width=1.15\textwidth]{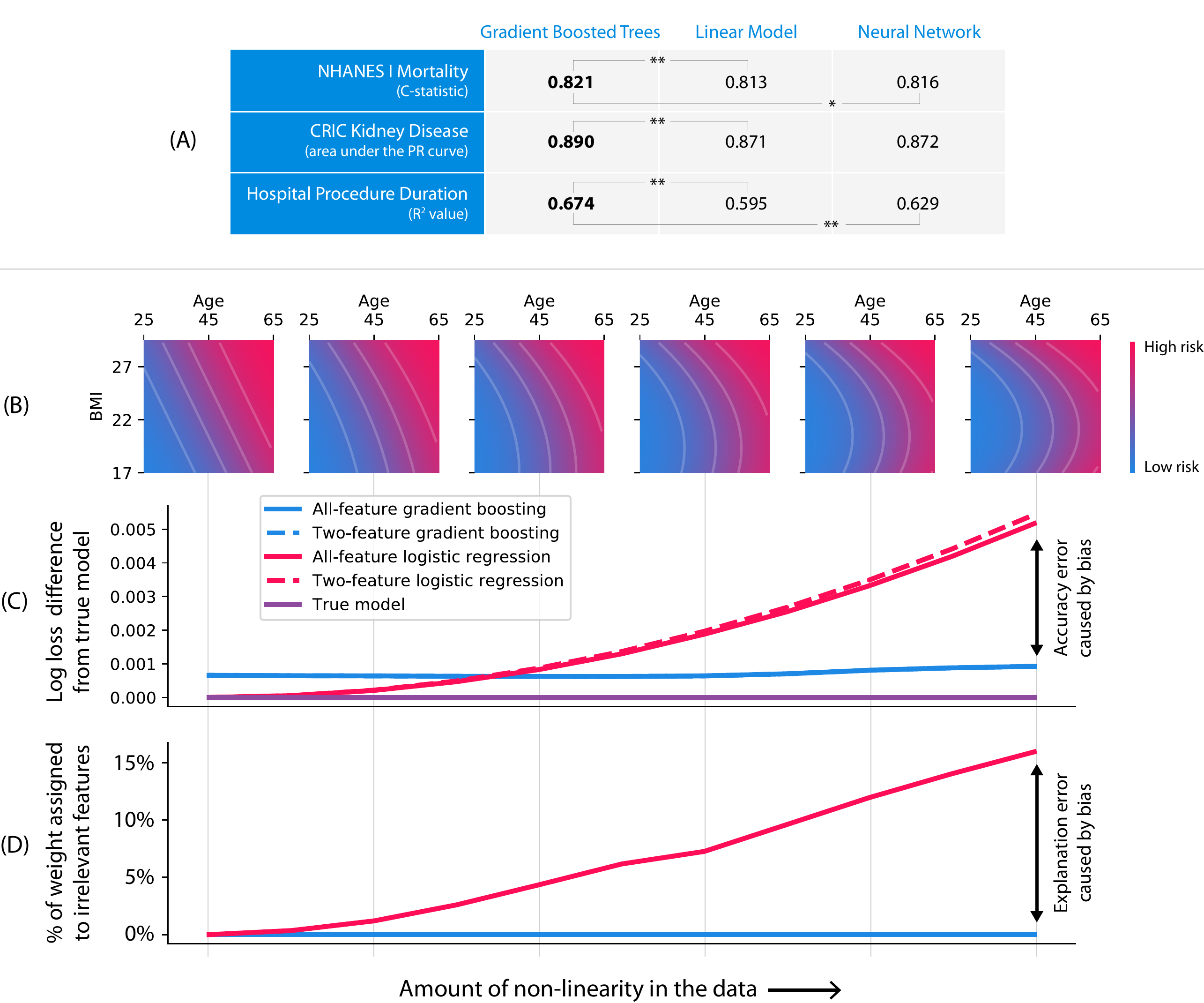}}
  \caption{{\bf Gradient boosted tree models can be both more accurate than neural networks and more interpretable than linear models.} (A) Gradient boosted tree models outperform both linear models and neural networks on all our medical datasets. (**) represents a P-value $< 0.01$, and (*) represents a P-value of $0.03$ (\ref{sec:methods_model_training_details}).
  (B-D) Linear models exhibit explanation error as well as accuracy error in the presence of non-linearity. (B) Data generating models used for the simulation, ranging from linear to quadratic along the body mass index (BMI) dimension. (C) The test performance of linear logistic regression (red) is better than gradient boosting (blue) up until a specific amount of non-linearity. 
  Not surprisingly, the bias of the linear model is higher than the gradient boosting model as shown by the steeper slope as we increase the non-linearity. 
  (D) As the true function becomes more non-linear the linear model assigns more credit (coefficient weight) to features that were not used by the data generating model.}
  \label{fig:performance_and_synth}
\end{figure}

\subsection{Tree-based models can be more accurate than neural networks} 
\label{sec:why_trees_accurate}
Tree-based ensemble methods such as random forests and gradient boosted trees achieve state-of-the-art performance in many domains. They have a long history of use in machine learning \cite{friedman2001elements},
and new high-performance implementations are an active area of research \cite{chen2016xgboost,ke2017lightgbm,prokhorenkova2018catboost,pedregosa2011scikit}. While deep learning models are more appropriate in fields like image recognition, speech recognition, and natural language processing, tree-based models consistently outperform standard deep models on tabular-style datasets where features are individually meaningful and do not have strong multi-scale temporal or spatial structures \cite{chen2016xgboost}. A balance of computational efficiency, ease of use, and high accuracy have made tree-based models the most popular non-linear model type; four out of the five most popular machine learning models used by data scientists involve trees \cite{kaggle2017}. The three medical datasets we examine here all represent tabular-style data, and gradient boosted trees outperform both deep learning and linear regression across all three datasets (Figure~\ref{fig:performance_and_synth}A) (\ref{sec:methods_model_training_details}).

\subsection{Tree-based models can be more interpretable than linear models}
\label{sec:why_trees_interpretable}

While it is well-known that the bias/variance trade-off in machine learning has implications for model accuracy, it is less appreciated that the trade-off also affects interpretability. While simple high-bias models (such as linear models) are often easy to understand, they are also more sensitive to model mismatch -- where the true relationships in the data do not match the form of the model. 

To illustrate why low-bias models can be more interpretable than high-bias models we compare gradient boosted trees with lasso regularized linear 
logistic regression using the mortality dataset. We simulated a binary outcome based on a participant's age and body mass index (BMI) (\ref{sec:methods_sim_bias_experiement_details}), and varied the amount of non-linearity in the simulated relationship (Figure~\ref{fig:performance_and_synth}B). As expected, when we increase the non-linearity, the bias of the linear model causes a drop in accuracy (Figure~\ref{fig:performance_and_synth}C). However, what is perhaps unexpected is that it also causes a drop in interpretability (Figure~\ref{fig:performance_and_synth}D). We know that the model should only depend on age and BMI, but even a moderate amount of non-linearity in the true relationship causes the linear model to start using other irrelevant features (Figure ~\ref{fig:performance_and_synth}D). This means that even when a linear model can achieve the same test accuracy as a gradient boosted tree model, the gradient boosted tree model is preferable because its connection to the training data is more interpretable 
(Figure~\ref{fig:performance_and_synth}D; \ref{sec:methods_sim_bias_experiement_details}).

\subsection{Current local explanations for tree-based models are inconsistent}
\label{sec:inconsistent_local}
Despite the long history of approaches designed to compute global measures of feature importance in ensemble tree models 
(\ref{sec:methods_previous_global_tree}), to our knowledge, there are only two approaches to quantify a feature's \emph{local} importance for an individual prediction.
The first is simply reporting the decision path, which is unhelpful for ensembles of many trees. The second is an unpublished heuristic approach (proposed by {\it Saabas}) that explains a prediction by following the decision path and attributing changes in the expected output of the model to each feature along the path (\ref{sec:methods_previous_local_tree}). The Saabas method has not been well studied, and we demonstrate here that it is strongly biased to alter the impact of features based on their distance from the root of a tree (Supplementary Figure~\ref{fig:tree_shap_performance}A). This causes Saabas values to be {\it inconsistent}, which means we can modify a model to make a feature clearly more important, and yet the Saabas value attributed to that feature will decrease (Supplementary Figure~\ref{fig:and_trees_simple}). The difference this makes can be seen by examining trees representing multi-way AND functions. No feature in an AND function should have any more credit than another, yet Saabas values give splits near the root much less credit than splits near the leaves (Supplementary Figure~\ref{fig:tree_shap_performance}A). Consistency is critical for an explanation method because it makes comparisons among feature importance values meaningful.


\subsection{Model-agnostic local explanations are slow and variable}
\label{sec:slow_agnostic}
Model-agnostic local explanation approaches can be used to explain tree models (\ref{sec:methods_previous_agnostic}), but they rely on post-hoc modeling of an arbitrary function and thus can be slow and/or suffer from sampling variability when applied to models with many input features. To illustrate this we generated random datasets of increasing size and then explained (over)fit XGBoost models with 1,000 trees. This experiment shows a linear increase in complexity as the number of features increases; model-agnostic methods take a significant amount of time to run over these datasets, even though we allowed for non-trivial estimate variability (Supplementary Figure~\ref{fig:tree_shap_performance}D; \ref{sec:methods_agnostic_convergence}) and only used a moderate numbers of features (Supplementary Figure~\ref{fig:tree_shap_performance}C). Calculating low variance estimates of the Shapley values for the results in this paper would be intractable; just the chronic kidney disease dataset experiments would have taken almost 2 CPU days for basic explanations, and over 3 CPU years for interaction values (Section~\ref{sec:shap_interaction_values}) (Supplementary Figure~\ref{fig:tree_shap_performance}E-F; \ref{sec:methods_agnostic_convergence}). While often practical for individual explanations, model-agnostic methods can quickly become impractical for explaining entire datasets (Supplementary Figure~\ref{fig:tree_shap_performance}C-F).

\subsection{TreeExplainer provides fast local explanations with guaranteed consistency}
\label{sec:tree_explainer}

Here we introduce a new local feature attribution method for trees, TreeExplainer, which can exactly compute the classic Shapley values from game theory \cite{shapley1953value}. TreeExplainer bridges theory to practice by reducing the complexity of exact Shapley value computation from exponential to polynomial time. 
%
%
This is important since within the class of {\it additive feature attribution methods}, a class that we have shown contains many previous approaches to local feature attribution \cite{lundberg2017unified}, results from game theory imply the Shapley values are the only way to satisfy three important properties:
{\it local accuracy}, {\it consistency}, and {\it missingness} (\ref{sec:methods_shap_unity}). Local accuracy 
states that when approximating the original model $f$ for a specific input $x$, the explanation's attribution values should sum up to the output $f(x)$. Consistency 
states that if a model changes so that some feature's contribution increases or stays the same regardless of the other inputs, that input's attribution should not decrease. Missingness 
is a trivial property satisfied by all previous explanation methods (\ref{sec:methods_shap_unity}). 

Shapley values, as applied here to feature importance, are defined as the sequential impact on the model's output of observing each input feature's value, averaged over all possible feature orderings (Supplementary Figure~\ref{fig:number_line}; Equation~\ref{eq:shapley} in Methods). This means for each of \emph{all possible} orderings, we introduce features one at a time into a conditional expectation of the model's output, then attribute the change in expectation to the feature 
that was introduced. Since Shapley values can be computed using any set function, not just conditional expectations, we will use the more specific term, {\it SHapley Additive exPlanation (SHAP) values} \cite{lundberg2017unified}, to clarify that we are using conditional expectations to measure the impact of a set of features on the model.

Perhaps surprisingly, the independently developed Saabas values are computed the same way as SHAP values, but rather than averaging over all feature orderings, Saabas values only consider the single ordering defined by a tree's decision path. This connection leads to two new insights: 1) The bias and consistency problems of Saabas values (Section~\ref{sec:inconsistent_local}) result from a failure to average feature impacts over all orderings. 2) For an infinite ensemble of random fully-developed trees on binary features, Saabas values effectively consider all orderings and so converge to the SHAP values. In practice, however, tree ensemble models are not infinite, random, or fully developed; to guarantee consistency we need to compute SHAP values exactly. TreeExplainer makes this possible by \emph{exactly} computing SHAP values in \emph{low order polynomial time}. 
This represents an exponential complexity improvement over previous exact Shapley methods. By default, TreeExplainer computes conditional expectations using tree traversal, but it also provides an option that enforces feature independence and supports explaining a model's loss function (\ref{sec:methods_tree_explainer}).

Efficiently and exactly computing the Shapley values guarantees that explanations will always be consistent and locally accurate. This results in several improvements over previous local explanation methods:

\begin{itemize}[leftmargin=*]

\item {\it TreeExplainer impartially assigns credit to input features regardless of their depth in the tree.} In contrast to Saabas values, TreeExplainer allocates credit uniformly among all features participating in multi-way AND operations (Supplementary Figures~\ref{fig:tree_shap_performance}A-B) and avoids inconsistency problems (Supplementary Figure~\ref{fig:and_trees_simple}).


\item {\it For moderate sized models, TreeExplainer is several orders of magnitude faster than model-agnostic alternatives, and has zero estimation variability} (Supplementary Figures~\ref{fig:tree_shap_performance}C-F). Since solutions from model-agnostic sampling methods are approximate, there is always the additional burden of checking their convergence and accepting a certain amount of noise in their estimates. This burden is eliminated by TreeExplainer's exact explanations.

\item {\it TreeExplainer consistently outperforms alternative methods across a benchmark of 21 different local explanation metrics} (Figure~\ref{fig:cric_tile}; Supplementary Figures~\ref{fig:corrgroups60_tile}-\ref{fig:independentlinear60_tile}). We designed 21 metrics to comprehensively evaluate the performance of local explanation methods, and applied these metrics to eight different explanation methods across three different model types and three datasets (\ref{sec:benchmark_metrics}). The results for the chronic kidney disease dataset are shown in Figure~\ref{fig:cric_tile}, and demonstrate consistent performance improvement for TreeExplainer. 
\item {\it TreeExplainer matches human intuition across a benchmark of 12 user study scenarios} (Supplementary Figure~\ref{fig:human_tile}). We evaluated how well explanation methods match human intuition by comparing their outputs with human consensus explanations of 12 scenarios based on simple models. In contrast to the heuristic Saabas values, Shapley value based explanation methods agree with human intuition in all the scenarios we tested (\ref{sec:methods_user_study}).
\end{itemize}

TreeExplainer simultaneously addresses the consistency issues faced by the heuristic Saabas values 
(Section~\ref{sec:inconsistent_local}), and the computational issues faced by model-agnostic methods 
(Section~\ref{sec:slow_agnostic}). This leads to fast practical explanations with strong theoretical guarantees that result in improved performance across many quantitative metrics.

\begin{figure*}
  \centering
  \makebox[\textwidth][c]{\includegraphics[width=1.2\textwidth]{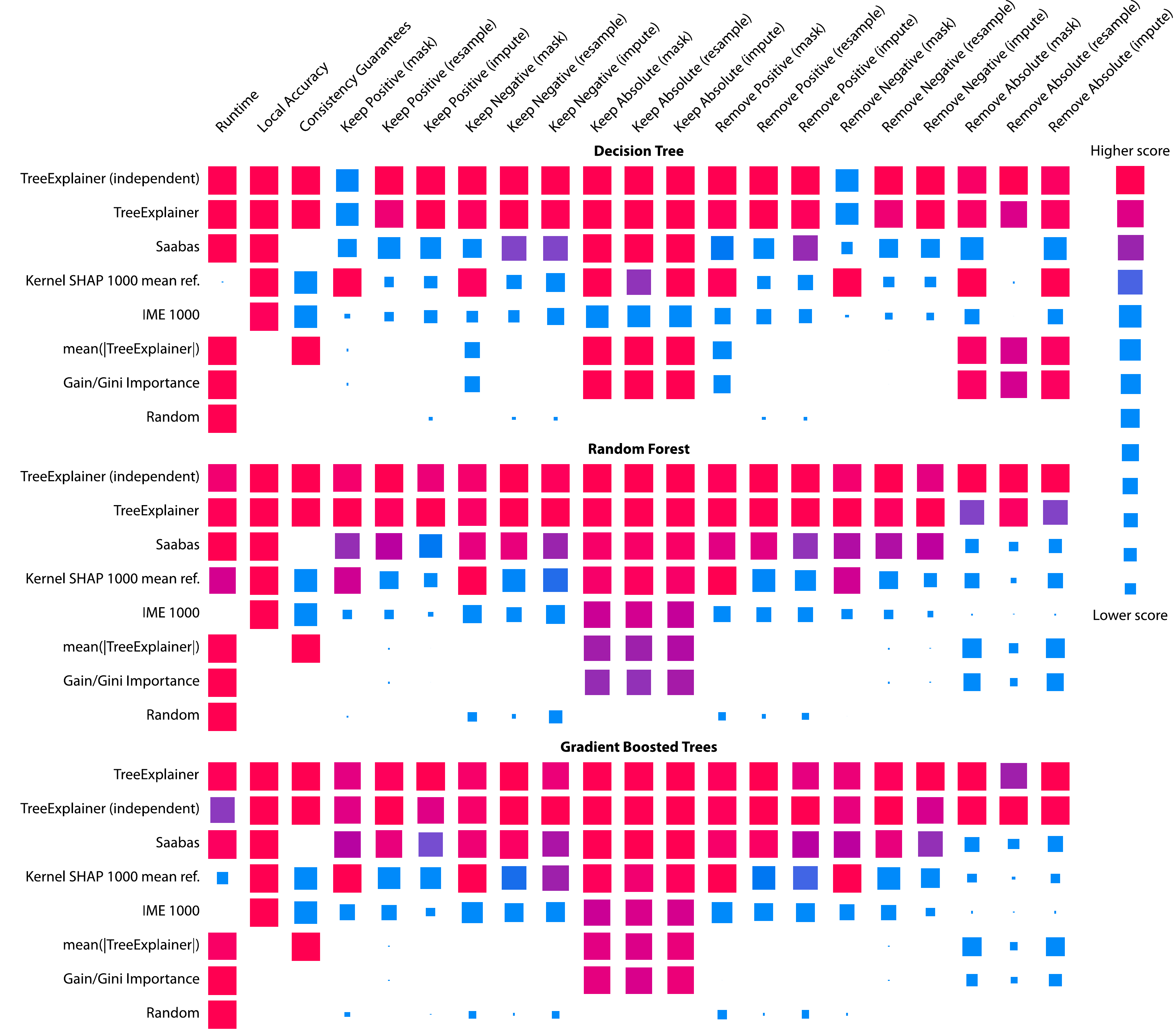}}
  \caption{{\bf Explanation method performance across 21 different evaluation metrics and three classification models in the chronic kidney disease dataset.} Each tile represents the performance of a local explanation method on a given metric for a given model. Within each model the columns of tiles are scaled between the minimum and maximum value, and methods are sorted by their overall performance. TreeExplainer outperforms previous approaches not only by having theoretical guarantees of consistency, but also across a large set of other metrics (\ref{sec:benchmark_metrics}). When these experiments were repeated in two synthetic datasets, TreeExplainer remained the top performing method (Supplementary Figures~\ref{fig:corrgroups60_tile}~and~\ref{fig:independentlinear60_tile}). Note that, as expected, Saabas becomes a better approximation to the Shapley values (and so a better attribution method) as the number of trees increases (\ref{sec:methods_shap_unity}).
  }
  \label{fig:cric_tile}
\end{figure*}

\subsection{TreeExplainer extends local explanations to measure interaction effects}
\label{sec:shap_interaction_values}
Traditionally, local explanations based on feature attribution assign a single number to each input feature. The simplicity of this natural representation comes at the cost of combining main and interaction effects. While interaction effects between features can be reflected in the global patterns of many local explanations, their distinction from main effects is lost in each local explanation
(Section~\ref{sec:feature_dependence}; Figure~\ref{fig:overview_and_dependence}B-G).

Here we propose {\it SHAP interaction values} as a new richer type of local explanation (\ref{sec:methods_shap_interaction_values}). These values use the `Shapley interaction index,' a relatively recent concept from game theory, to capture local interaction effects. 
They follow from generalizations of the original Shapley value properties \cite{fujimoto2006axiomatic} and allocate credit not just among each player of a game, but among all pairs of players. 
The SHAP interaction values consist of a matrix of feature attributions (the main effects on the diagonal and the interaction effects on the off-diagonal) and have uniqueness guarantees similar to SHAP values \cite{fujimoto2006axiomatic}. By enabling the separate consideration of main and interaction effects for individual model predictions, TreeExplainer can uncover important patterns that might otherwise be missed (Section~\ref{sec:interaction_effects}).

\subsection{Local explanations from TreeExplainer can be used as building blocks for global understanding}
\label{sec:building_blocks}
We present five new methods that combine many local explanations to provide global insight into a model's behavior. This allows us to retain local faithfulness to the model while still capturing global patterns, resulting in richer, more accurate representations of the model's behavior. 
Each application presented below illustrates how local explanations can be used as building blocks for explainable machine learning. For all experiments we use gradient boosted trees since they have high accuracy (Figure~\ref{fig:performance_and_synth}A), low bias (Figure~\ref{fig:performance_and_synth}B-D), and support fast exact local explanations through TreeExplainer (Sections~\ref{sec:tree_explainer}~and~\ref{sec:shap_interaction_values}). 

\begin{figure*}
  \centering
  \makebox[\textwidth][c]{\includegraphics[width=1.2\textwidth]{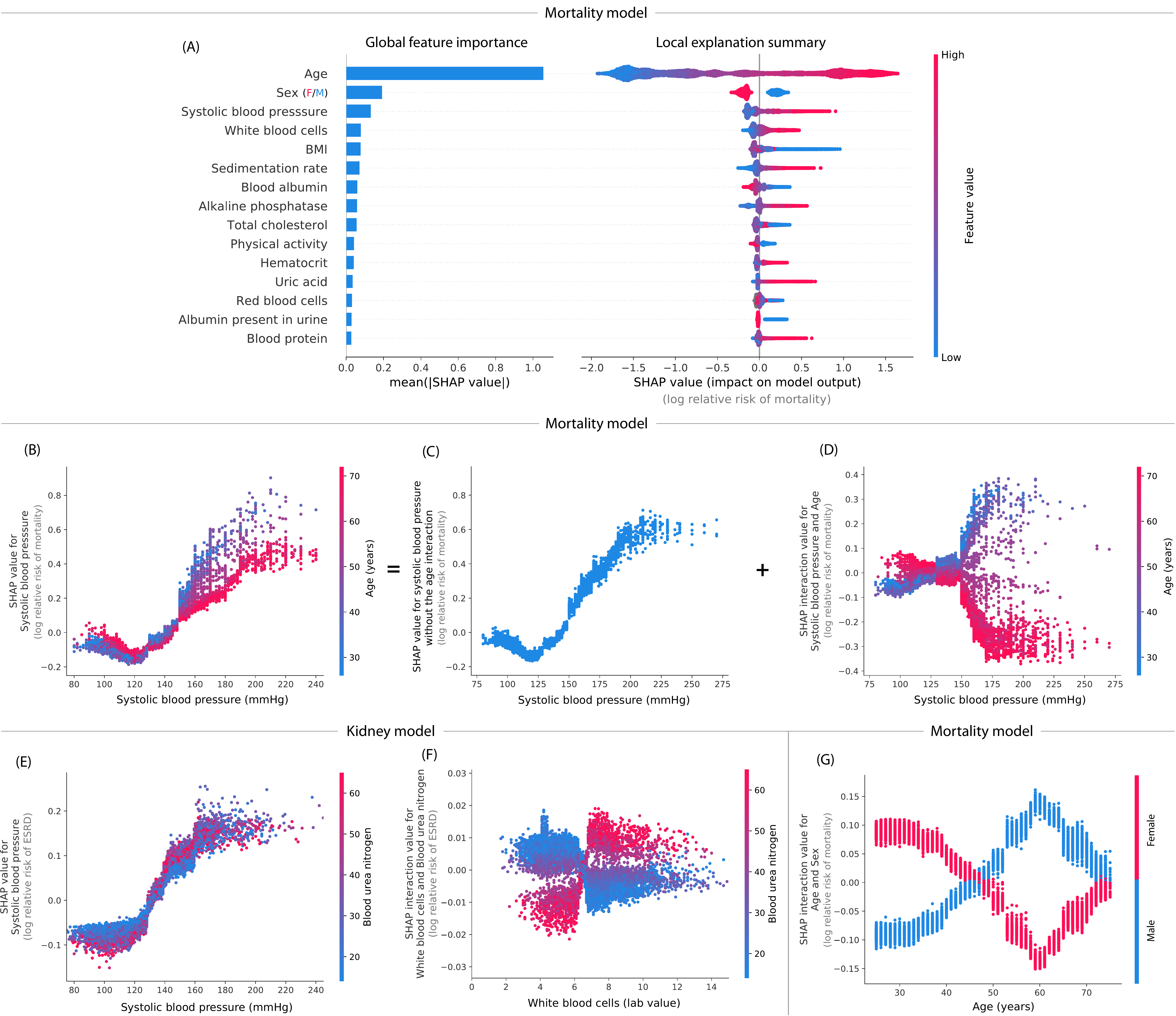}}
  \caption{{\bf By combining many local explanations we can provide rich summaries of both an entire model and individual features.} (A) Bar chart (left) and SHAP summary plot (right) for a gradient boosted decision tree model trained on the mortality dataset. The long right tails in the summary plot are from rare but high-magnitude risk factors. (B) SHAP dependence plot of systolic blood pressure vs. its SHAP value in the mortality model. A clear interaction effect with age is visible that increases the impact of early onset high blood pressure. (C) Using SHAP interaction values we can remove the interaction effect of age from the model. (D) Plotting just the interaction effect of systolic blood pressure with age shows how the effect of systolic blood pressure on mortality risk varies with age. Adding the y-values of C and D produces B. (E) A dependence plot of systolic blood pressure vs. its SHAP value in the kidney model shows an increase in kidney disease risk at a systolic blood pressure of 125 (which parallels the increase in mortality risk). (F) Plotting the SHAP interaction value of `white blood cells' with `blood urea nitrogen' shows that high white blood cell counts increase the negative risk conferred by high blood urea nitrogen. (G) Plotting the SHAP interaction value of sex vs. age in the mortality model shows how the differential risk of men and women changes over their lifetimes.}
  \label{fig:overview_and_dependence}
\end{figure*}

\subsubsection{Local model summarization reveals rare high-magnitude effects on mortality risk and increases feature selection power}
\label{sec:model_summary}

Combining local explanations from TreeExplainer across an entire dataset enhances traditional global representations of feature importance by:
1) avoiding the inconsistency problems of current methods (Supplementary Figure~\ref{fig:and_trees_simple}), 2) increasing the power to detect true feature dependencies in a dataset (Supplementary Figure~\ref{fig:feature_selection}), and 3) enabling us to build {\it SHAP summary plots} that succinctly display the magnitude, prevalence, and direction of a feature's effect. 
SHAP summary plots avoid conflating the magnitude and prevalence of an effect into a single number, and so reveal rare high magnitude effects. 
Figure~\ref{fig:overview_and_dependence}A illustrates its benefits using the mortality dataset: (left) a standard bar-chart based on the average magnitude of the SHAP values, and (right) a set of beeswarm plots where each dot corresponds to an individual person in the study. 
The position of the dot on the x-axis is the impact that feature has on the model's prediction for that person. When multiple dots land at the same x position they pile up to show density. 

Figure~\ref{fig:overview_and_dependence}A (right) reveals the direction of effects, such as men (blue) having a higher mortality risk than women (red); and the distribution of effect sizes, such as the long right tails of many medical test values. These long tails mean features with a low global importance can yet be extremely important for specific individuals. Interestingly, rare mortality effects always stretch to the right, which implies there are many ways to die abnormally early when medical measurements are out-of-range, but not many ways to live abnormally longer (\ref{sec:methods_model_summary}). 

\subsubsection{Local feature dependence reveals both global patterns and individual variability in mortality risk and chronic kidney disease}
\label{sec:feature_dependence}

{\it SHAP dependence plots} show how a feature's value (x-axis) impacted the prediction (y-axis) of every sample (each dot) in a dataset (Figures~\ref{fig:overview_and_dependence}B~and~E; \ref{sec:methods_feature_dependence}). This provides richer information than traditional partial dependence plots (Supplemental Figure \ref{fig:nhanes_pdp_sbp}). For the mortality model this reproduces the standard risk inflection point of systolic blood pressure \cite{sprint2015randomized}, while also highlighting that the impact of blood pressure risk is different for people of different ages (Figure~\ref{fig:overview_and_dependence}B). Many individuals have a recorded blood pressure of 180 mmHg in the mortality dataset, but the impact of that measurement on their mortality risk varies because early onset high blood pressure is more concerning to the model than late onset high blood pressure. These types of interaction effects show up as vertical dispersion in SHAP dependence plots.

For the chronic kidney disease model, a dependence plot again clearly reveals a risk inflection point for systolic blood pressure, but in this dataset the vertical dispersion from interaction effects appears to be partially driven by differences in blood urea nitrogen (Figure~\ref{fig:overview_and_dependence}E). Correctly modeling blood pressure risk while retaining interpretabilty is important since blood pressure control in select chronic kidney disease (CKD) populations may delay progression of kidney disease and reduce the risk of cardiovascular events (\ref{sec:methods_feature_dependence}).

\subsubsection{Local interactions reveal sex-specific life expectancy changes during aging and inflammation effects in chronic kidney disease}
\label{sec:interaction_effects}

Using SHAP interaction values, we can decompose the impact of a feature on a specific sample into a main effect and interaction effects with other features. This allows us to measure global interaction strength as well as decompose the SHAP dependence plots into main effects and interaction effects at a local (i.e., per sample) level (Figures~\ref{fig:overview_and_dependence}B-D; \ref{sec:methods_interaction_effects}). 

\nisha{In the mortality dataset, plotting the SHAP interaction value between age and sex shows a clear change in the relative risk between men and women over a lifetime (Figure~\ref{fig:overview_and_dependence}G). The largest difference in risk between men and women is at age 60. It is plausible that this increased risk is driven by increased cardiovascular mortality in men relative to women near that age \cite{mozaffarian2015heart}. This pattern is not clearly captured without SHAP interaction values because being male always confers greater risk of mortality than being female (Figure~\ref{fig:overview_and_dependence}A).}

\nisha{In the chronic kidney disease model, an interesting interaction is observed between `white blood cells' and `blood urea nitrogen' (Figure~\ref{fig:overview_and_dependence}F). High white blood cell counts are more concerning to the model when they are accompanied by high blood urea nitrogen. This supports the notion that inflammation may interact with high blood urea nitrogen to contribute to faster kidney function decline \cite{bowe2017association,fan2017white}.}

\begin{figure*}
  \centering
  \includegraphics[width=1.0\textwidth]{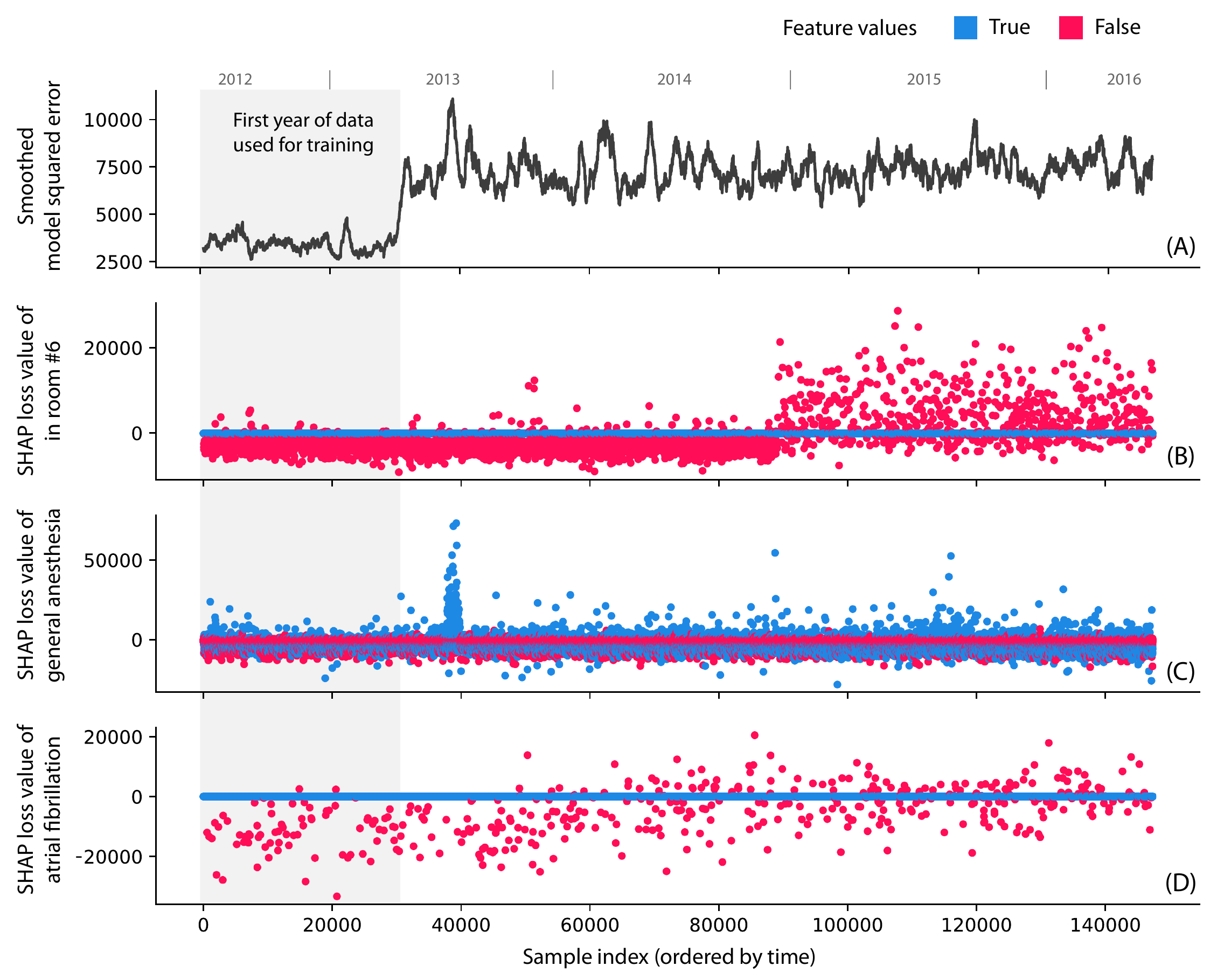}
  \caption{{\bf Monitoring plots reveal problems that would otherwise be invisible in a retrospective hospital machine learning model deployment.} 
  (A) The squared error of a hospital duration model averaged over the nearest 1,000 samples. The increase in error after training is because the test error is (as expected) higher than the training error. (B) The SHAP value of the model loss for the feature indicating if the procedure happens in room 6. The significant change is where we intentionally swapped the labels of room 6 and 13, which is invisible in the overall model loss. (C) The SHAP value of the model loss for the general anesthesia feature; the spike one-third of the way into the data is the result of a previously unrecognized transient data corruption at a hospital. (D) The SHAP value of the model loss for the atrial fibrillation feature. The upward trend of the plot shows feature drift over time (P-value $5.4 \times 10^{-19}$).
  }
  \label{fig:key_monitoring_plots}
\end{figure*}

\subsubsection{Local model monitoring reveals previously invisible problems with deployed machine learning models}
\label{sec:model_monitoring}
Here we show how using TreeExplainer to explain a model's \emph{loss}, instead of a model's prediction, can improve our ability to monitor deployed models (\ref{sec:methods_model_monitoring}). 
Deploying machine learning models in practice is challenging because of the potential for input features to change after deployment.
It is hard to detect when such changes occur, so many bugs in machine learning pipelines go undetected, even in core software at top tech companies \cite{zinkevich2017rules}. 
We demonstrate that local model monitoring helps debug model deployments by decomposing the loss among the model's input features and so identifying problematic features (if any) directly. 
This is a significant improvement over simply speculating about the cause of global model performance fluctuations. 


We simulated a model deployment with the hospital procedure duration dataset using the first year of data for training and the next three years for deployment. We present three examples: (1) is an intentional error, (2) and (3) are previously undiscovered problems.
1) We intentionally swapped the labels of operating rooms 6 and 13 two-thirds of the way through the dataset to mimic a typical feature pipeline bug. The overall loss of the model's predictions gives no indication that a problem has occurred (Figure~\ref{fig:key_monitoring_plots}A), whereas the {\it SHAP monitoring plot} for the room 6 feature clearly shows when the labeling error begins (Figure~\ref{fig:key_monitoring_plots}B). 
2) \bala{Figure~\ref{fig:key_monitoring_plots}C shows a spike in error for the general anesthesia feature shortly after the deployment window begins. This spike corresponds to a subset of procedures affected by a previously undiscovered temporary electronic medical record configuration problem (\ref{sec:methods_model_monitoring}).}
3) \jordan{Figure~\ref{fig:key_monitoring_plots}D shows an example of feature drift over time, not of a processing error. During the training period and early in deployment, using the `atrial fibrillation' feature lowers the loss; however, the feature becomes gradually less useful over time and ends up hurting the model. We found this drift was caused by significant changes in atrial fibrillation ablation procedure duration, driven by technology and staffing changes (Supplementary Figure~\ref{fig:afib_duration_plot}; \ref{sec:methods_model_monitoring}).} 

Current deployment practice is to monitor the overall loss of a model (Figure~\ref{fig:key_monitoring_plots}A) over time, and potentially statistics of input features. 
TreeExplainer enables us to instead directly allocate a model's loss among individual features.


\subsubsection{Local explanation embeddings reveal population subgroups relevant to mortality risk and complementary diagnostic indicators in chronic kidney disease}
\label{sec:embedding}

\nisha{Unsupervised clustering and dimensionality reduction are widely used to discover patterns characterizing subgroups of samples (e.g., study participants), such as disease subtypes \cite{van2010identification, sorlie2003repeated}. 
They present two drawbacks: 1) the distance metric does not account for the discrepancies among the units/meaning of features (e.g., weight vs. age), and 2) there is no way for an unsupervised approach to know which features are relevant for an outcome of interest, and so should be weighted more strongly.
We can address both of these limitations by using {\it local explanation embeddings} to embed each sample into a new ``explanation space.'' If we run clustering in this new space, we will get a {\it supervised clustering} where samples are grouped together based on their \emph{explanations}. Supervised clustering naturally accounts for the differing units of various features, only highlighting changes that are relevant to a particular outcome (\ref{sec:methods_embedding}).}

Running hierarchical supervised clustering using the mortality model results in many groups of people that share a similar mortality risk for similar reasons (Figure~\ref{fig:clustering_and_embedding}A; \ref{sec:methods_embedding}). Analogously, we can also run PCA on local explanation embeddings for chronic kidney disease samples, which uncovers the two primary categories of risk factors that identify unique individuals at risk of end-stage renal disease. This is consistent with the fact that clinically these factors should be measured in parallel
(Figures~\ref{fig:clustering_and_embedding}B-D; \ref{sec:methods_embedding}). This type of insight into the overall structure of kidney risk is not at all apparent when just looking at a standard unsupervised embedding (Supplementary Figure~\ref{fig:kidney_raw_pca}).

\begin{figure*}
  \centering
  \includegraphics[width=1.0\textwidth]{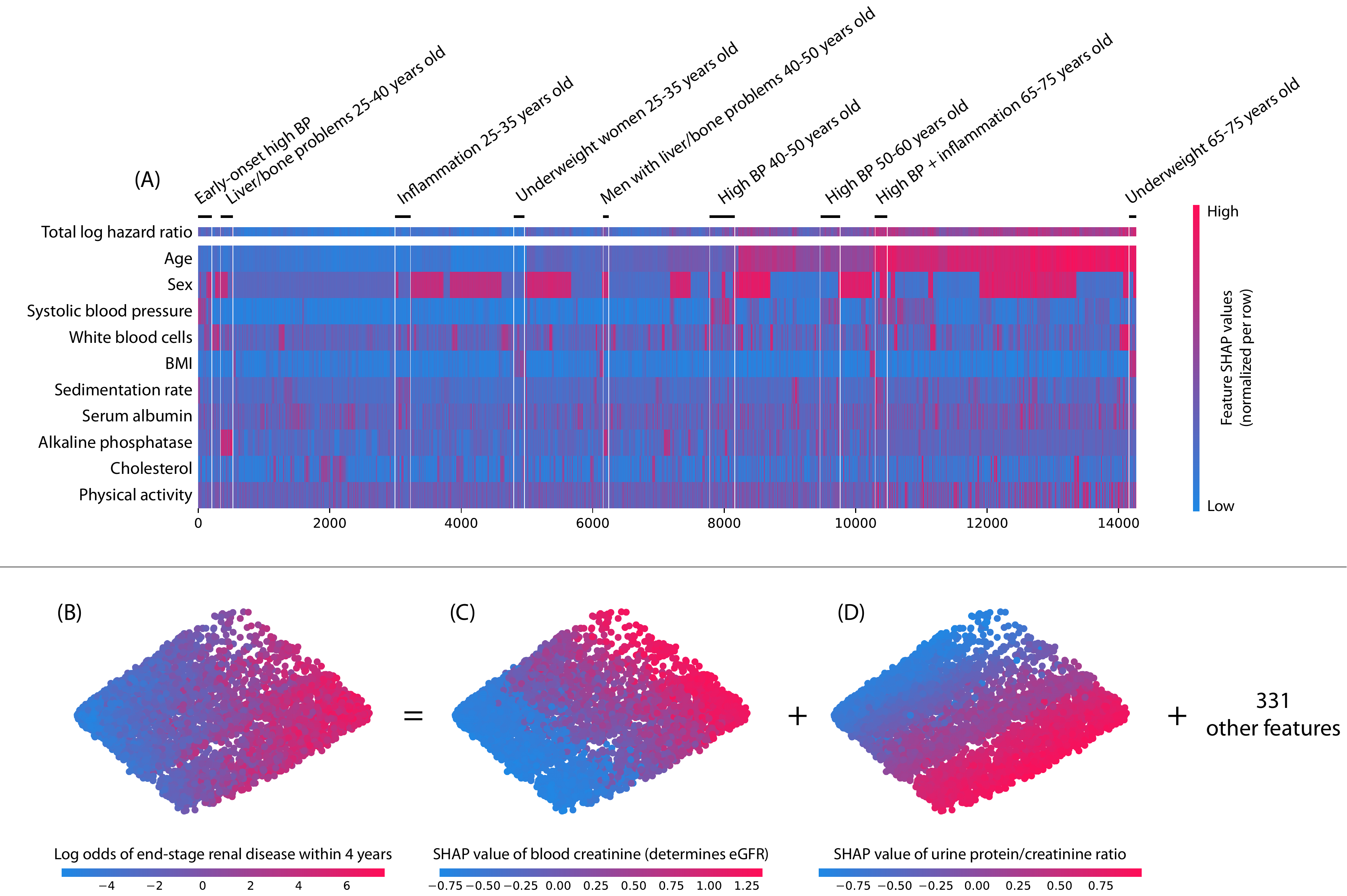}
  \caption{{\bf Local explanation embeddings support both supervised clustering and interpretable dimensionality reduction.} (A) A clustering of mortality study individuals by their local explanation embedding. Columns are patients, and rows are features' normalized SHAP values. Sorting by a hierarchical clustering reveals population subgroups that have distinct mortality risk factors. 
  (B-D) A local explanation embedding of kidney study visits projected onto two principal components. Local feature attribution values can be viewed as an embedding of the samples into a space where each dimension corresponds to a feature and all axes have the units of the model's output. (B) The embedding colored by the predicted log odds of a participant developing end-stage renal disease with 4 years of that visit. (C) The embedding colored by the SHAP value of blood creatinine. (D) The embedding colored by the SHAP value of the urine protein/creatinine ratio. Many other features also align with these top two principal components (Supplementary Figure~\ref{fig:kidney_embedding2}), and an equivalent unsupervised PCA embedding is far less interpretable (Supplementary Figure~\ref{fig:kidney_raw_pca}) }
  \label{fig:clustering_and_embedding}
\end{figure*}

\section{Discussion}

Tree-based machine learning models are used in many domains where interpretability is important. We have sought to significantly improve the interpretability of these models in three main ways: First, we present a new exact method for computing the game-theoretic Shapley values, the only explanations that have several desirable properties. 
Second, we present a new richer type of local explanation that directly captures interaction effects. Finally, we propose many new tools for model interpretation based on combining local explanations. Local explanations have a distinct advantage over global explanations because by only focusing on a single sample they can remain more faithful to the original model. We anticipate that in the future local explanations will become foundational building blocks for many downstream tasks in machine learning. 

\subsection*{Acknowledgements}
We are grateful to Ruqian Chen, Alex Okeson, Cassianne Robinson, Vadim Khotilovich, Nao Hiranuma, Joseph Janizek, Marco Tulio Ribeiro, Jacob Schreiber, and members of the Lee lab for the feedback and assistance they provided during the development and preparation of this research. This work was funded by National Science Foundation [DBI-1759487, DBI-1355899, DGE-1762114, and DGE-1256082]; American Cancer Society [127332-RSG-15-097-01-TBG]; and National Institutes of Health [R35 GM 128638].

The Chronic Renal Insufficiency Cohort (CRIC) study was conducted by the CRIC Investigators and supported by the National Institute of Diabetes and Digestive and Kidney Diseases (NIDDK). The data from the CRIC study reported here were supplied by the NIDDK Central Repositories. This manuscript was not prepared in collaboration with Investigators of the CRIC study and does not necessarily reflect the opinions or views of the CRIC study, the NIDDK Central Repositories, or the NIDDK.

\clearpage
\section*{\hfil Methods \hfil}
\setcounter{section}{0}
\renewcommand*{\thesection}{Methods~\arabic{section}}
\renewcommand*{\thesubsection}{Methods~\arabic{section}.\arabic{subsection}}

\section{Institutional review board statement}

The chronic kidney disease data used in this study was obtained from the Chronic Renal Insufficiency Cohort (CRIC) study. University Washington Human Subjects Division determined that our study does not involve human subjects because we do not have access to identifiable information (IRB ID: STUDY00006766).

The hospital procedure data used for this study was retrieved from three institutional electronic medical record and data warehouse systems after receiving approval from the Institutional Review Board (University of Washington Human Subjects Division, Approval no. 46889). Protected health information was excluded from the dataset that was used for the machine-learning methods.







\section{The three medical datasets used for experiments}
\label{sec:methods_datasets}

\subsection{Mortality dataset}
The mortality data was obtained from the National Health and Nutrition Examination Survey (NHANES~I) conducted by the U.S. Centers for Disease Control (CDC), as well as the NHANES I Epidemiologic Follow-up Study (NHEFS) \cite{cox1997plan}. Raw tape-format data files were obtained from the CDC website and converted to a tabular format by custom scripts. This reformatted version of the public data has been released at \mbox{\url{http://github.com/suinleelab/treexplainer-study}}. NHANES I examined 23,808 individuals in the United States between 1971 and 1974, recording a large number of clinical and laboratory measurements. The NHANES~I Epidemiologic Followup Study researched the status of the original NHANES I participants as of 1992 to identify when they had died, or alternatively when they were last known to be alive. After filtering NHANES~I to subjects that had both followup mortality data as well as common health measurements (such as systolic blood pressure) we obtained 79 features for 14,407 individuals, of which 4,785 individuals had recorded death dates before 1992. This data was used to train several cox proportional hazard ratio model types (Section~\ref{sec:why_trees_accurate}). Because NHANES~I represents a cross section of the United States population, it is a classic dataset that has been often used to understand the association between standard clinical measurements and long term health outcomes \cite{launer1994body,fang2000serum}.

\subsection{Chronic kidney disease dataset}

\nisha{The kidney dataset is from the Chronic Renal Insufficiency Cohort (CRIC) Study which follows individuals with chronic kidney disease recruited from 7 clinical centers \cite{lash2009chronic}. Participants are assessed at an annual visit and study follow-up is ongoing. We joined both visit and person level attributes to create 333 features for 10,745 visits among 3,939 patients. For each visit, we determined if the patient progressed to `end-stage renal disease' within the following four years. (End-stage renal disease is the last stage of chronic kidney disease when kidneys are functioning at 10-15\% of their normal capacity; dialysis or kidney transplantation is necessary to sustain life.)
Predicting this progression outcome results in a binary classification task. Understanding what leads some people with chronic kidney disease to progress to end-stage renal disease while others do not is a priority in clinical kidney care that can help doctors and patients better manage and treat their condition \cite{orlandi2018hematuria,wilson2014urinary,tangri2016multinational,matsushita2012comparison}. In the United States chronic kidney disease affects 14\% of the population, so improving our management and understanding of the disease can have a significant positive impact on public health \cite{ckdrate2018}.}

\subsection{Hospital procedure duration dataset}

\bala{Our hospital’s operating rooms have installed an Anesthesia Information Management System - AIMS (Merge AIMS, Merge Inc., Hartland, WI) that integrates with medical devices and other electronic medical record systems to automatically acquire hemodynamic, ventilation, laboratory, surgery schedule and patient registration data. The automatic capture of data is supplemented by the manual documentation of medications and anesthesia interventions to complete the anesthesia record during a surgical episode. We extracted data from the AIMS database from June 2012 to May 2016. The corresponding medical history data of each patient were also extracted from our electronic medical record data warehouse (Caradigm Inc., Redmond, WA). Patient and procedure specific data available prior to surgery were captured and summarized into 
2,185 features over 147,000 procedures.
These data consist of diagnosis codes, procedure types, location, free text notes (represented as a bag of words), and various other information recorded in the EMR system. We measured the duration of a procedure as the time spent in the room by the patient. This is an important prediction task since one potential application of machine learning in hospitals is to reduce costs by better anticipating procedure duration (and so improve surgery scheduling). When hospital scheduling systems depend on machine learning models it is important to monitor the ongoing performance of the model. In Section~\ref{sec:model_monitoring} we demonstrate how local explanations can significantly improve the process of monitoring this type of model deployment.}

\section{Model accuracy performance experiments}
\label{sec:methods_model_training_details}

Modern gradient boosted decision trees often provide state-of-the-art performance on tabular style datasets where features are individually meaningful, as consistently demonstrated by open data science competitions \cite{friedman2001greedy,chen2016xgboost}. 
All three medical datasets we examine here represent tabular-style data, and gradient boosted trees achieve the highest accuracy across all three datasets (Figure~\ref{fig:performance_and_synth}A).  We used 100 random train/test splits followed by retraining to assess the statistical significance of the separation between methods. In the mortality dataset, gradient boosted trees outperformed the linear lasso with a P-value $<0.01$ and the neural network with a P-value of $0.03$. In the chronic kidney disease dataset, gradient boosted trees trees outperformed the linear lasso with a P-value $<0.01$ and the neural network with a P-value of $0.08$. In the hospital procedure duration dataset, gradient boosted trees trees outperformed both the linear lasso and neural network with a P-value $<0.01$. The details of how we trained each method and obtained the results in Figure~\ref{fig:performance_and_synth}A are presented below.

\subsection{Mortality dataset}

For NHANES~I mortality prediction in Figure~\ref{fig:performance_and_synth}A, we used a cox proportional hazards loss, and a C-statistic \cite{heagerty2000time} to measure performance. For each of the three algorithms we split the $14,407$ samples according to a 64/16/20 split for train/validation/test. The features for the linear and the neural network models were mean imputed and standardized based on statistics computed on the training set. For the gradient boosted tree models, we passed the original data unnormalized and with missing values.

\vspace{1em}
\noindent {\bf Gradient Boosted Trees} - Gradient boosted trees were implemented using an XGBoost \cite{chen2016xgboost} model with the cox proportional hazards objective (we implemented this objective and merged it into XGBoost to support the experiments in this paper). Hyper-parameters were chosen using coordinate descent on the validation set loss. This resulted in a learning rate of $0.001$; $6,765$ trees of max depth $4$ chosen using early stopping; $\ell_2$ regularization of $5.5$; no $\ell_1$ regularization; no column sampling during fitting; and bagging sub-sampling of 50\%.

\vspace{1em}
\noindent {\bf Linear Model} - 
The linear model for the mortality dataset was implemented using the lifelines Python package \cite{lifelines2016}. The $\ell_2$ regularization weight was chosen using the validation set and set to $215.44$.

\vspace{1em}
\noindent {\bf Neural Network} -
The neural network model was implemented using the DeepSurv Python package \cite{katzman2018deepsurv}. Running DeepSurv to convergence for mortality data took hours on a modern GPU server, so hyper-parameter tuning was done by manual coordinate decent. This resulted in $\ell_2$ regularization of $1.0$; the use of batch normalization; a single hidden layer with 20 nodes; a dropout rate of $0.5$; a learning rate of $0.001$ with no decay; and momentum of $0.9$.

\subsection{Chronic kidney disease dataset}

For CRIC kidney disease prediction, we used logistic loss for binary classification, and measured performance using the area under the precision-recall curve in Figure~\ref{fig:performance_and_synth}A. For each of the three algorithms we split the $10,745$ samples according to a 64/16/20 split for train/validation/test. The features for the linear and the neural network models were mean imputed and standardized based on statistics computed on the training set. For the gradient boosted tree models we passed the original data unnormalized and with missing values. 

\vspace{1em}
\noindent {\bf Gradient Boosted Trees} - 
Gradient boosted trees were implemented using an XGBoost \cite{chen2016xgboost} model with the binary logistic objective function. Hyper-parameters were chosen using coordinate descent on the validation set loss. This resulted in a learning rate of $0.003$; $2,300$ trees of max depth $5$ chosen using early stopping; no $\ell_2$ regularization; no $\ell_1$ regularization; a column sampling rate during fitting of 15\%; and bagging sub-sampling of 30\%.

\vspace{1em}
\noindent {\bf Linear Model} - The linear model for the kidney dataset was implemented using scikit-learn \cite{pedregosa2011scikit}. Both $\ell_1$ and $\ell_2$ regularization were tested and $\ell_1$ was selected based on validation set performance, with an optimal penalty of $0.1798$.

\vspace{1em}
\noindent {\bf Neural Network} -
The neural network model was implemented using Keras \cite{chollet2015keras}. We chose to explore various feed-forward network architectures as well as 1D convolution based kernels (that learn a shared non-linear transform across many features). The best performance on the validation data came from a 1D convolution based architecture. After coordinate descent hyper-parameter tuning, we chose 15 1D convolution kernels which then go into a layer of 10 hidden units with rectified linear unit activation functions. Dropout of $0.4$ was used during training between the convolution kernels and the hidden layer, and between the hidden layer and the output. Because stochastic gradient descent can have varying performance from run to run, we chose the best model based on validation loss from 10 different optimization runs.

\subsection{Hospital procedure duration dataset}

For hospital procedure duration prediction, we used a squared error loss and measured performance by the coefficient of determination ($R^2$) in Figure~\ref{fig:performance_and_synth}A. Two different hospital procedure dataset splits were used in this paper. The first dataset split, used for model comparison in Section~\ref{sec:why_trees_accurate} (Figure~\ref{fig:performance_and_synth}A), consisted of 147,000 procedures divided according to a random 80/10/5/5 split for train/validation/test1/test2. The test2 set was only used to get final performance numbers and not during method development. The features for the linear and the neural network models were mean imputed and standardized based on statistics computed on the training set. For the gradient boosted tree models, we passed the original data unnormalized and with missing values. All hyper-parameter tuning was done with the validation dataset.

The second dataset split, used for the monitoring plots in Section~\ref{sec:model_monitoring}, divided procedures from the two hospitals based on time: the first 1 year of data was used for training and the last 3 years were used for test, in a manner intended to simulate actual model deployment. Models for this data were not hyper-parameter tuned, as the goal was not to achieve perfect performance but to demonstrate the value of monitoring methods; instead, they simply used the best hyper-parameter values found on the first (random-splits) dataset.

\vspace{1em}
\noindent {\bf Gradient Boosted Trees} - Gradient boosted trees were implemented using an XGBoost \cite{chen2016xgboost} model with a squared loss objective function. Hyper-parameters were chosen using grid search over the validation set loss. This resulted in a learning rate of $0.1$; $530$ trees of max depth $10$ chosen using early stopping; a column sampling rate during fitting of 50\%; and no bagging sub-sampling.

\vspace{1em}
\noindent {\bf Linear Model} - The linear model for the procedure duration dataset was implemented using the \texttt{LassoCV} class in scikit-learn \cite{pedregosa2011scikit}. $\ell_1$ regularization was tuned based on validation set performance to $0.120$.

\vspace{1em}
\noindent {\bf Neural Network} - The neural network model was implemented using Keras \cite{chollet2015keras}. We limited our architecture search to feed-forward networks with up to three hidden layers, and sizes up to 4,096 nodes per layer. The best performance on validation data came from a single-hidden-layer architecture with a 1,024-node layer followed by a dropout probability of 0.75 before the output.

\section{Interpretability comparison of linear models and tree-based models in the presence of non-linearities}
\label{sec:methods_sim_bias_experiement_details}

Even if linear models appear easier to interpret, their high bias may force the relationships they learn to be farther from the truth than a low-bias tree-based model (Figures~\ref{fig:performance_and_synth}B-D). We illustrated this using \emph{simulated outcomes} from the NHANES~I mortality dataset with varying amounts of non-linearity between certain features and the outcome. The input feature data was used as is. Even if the simulated outcome only depends on age and body mass index (BMI), the linear model learns features in the mortality dataset that are non-linearly dependent with age and BMI to try to approximate non-linear relationships with the outcome. This increases the test accuracy of the linear model slightly, as seen by the small increase in performance from the two-feature linear model (dashed red) to the all-feature linear model (solid red) (Figure~\ref{fig:performance_and_synth}C). However, this comes at the cost of placing much of its weight on features that are not actually used to generate the outcome. In contrast, the gradient boosted tree model correctly uses only age and BMI across all levels of function non-linearity (Figure~\ref{fig:performance_and_synth}D). 
Below we describe the experimental setup behind the results in  Figure~\ref{fig:performance_and_synth}B-D (Section~\ref{sec:why_trees_interpretable}). 


\subsection{Selection of features} 
We generated a synthetic label with a known relationship to two input features from the mortality dataset, but to the extent possible, we intended this synthetic relationship to be realistic; while still retaining the ability to control the amount of non-linearity in the relationship. Starting with the classifiers trained on the full mortality (NHANES~I) dataset, SHAP dependence plots (Section~\ref{sec:feature_dependence}) were used to find one feature that had a strong linear relationship with mortality (age) and one that had a ``U-shaped'' relationship with mortality (BMI). These two features were selected and used as the ``true'' label-generating features in our synthetic model.

\subsection{Label generation}

The synthetic label-generating function was constructed using a logistic function applied to the sum of a linear function of age and a quadratic function of BMI. The functional form allows us to smoothly vary the amount of nonlinearity in the label-generating model. The quadratic function of BMI was parameterized as $(1-p)$ times a linear function, plus $p$ times a quadratic function, where both functions had the same minimum and maximum values (the x-location of the minimum for the quadratic was set to the mean BMI). The contribution in logits was set to range from a minimum of -2.0 to a maximum of 2.0 for age, and -1.0 to 1.0 for BMI, so that even when nonlinear contribution were the strongest ($p=1$) the linear contribution of the main risk factor was still more important (as true in the original data). The output for each data point was then a probability to be predicted by our models (we did not add noise by binarizing these probabilities to 0-1, so as to avoid the need to average our results over many random replicates). Thus the final label-generating model was
$$ y= \sigma\left((1.265(\text{age})+0.0233)+(1-p)(0.388(\text{BMI})-0.325) + (p)(1.714(\text{BMI})^2-1)\right)$$
where $\sigma=\frac{1}{1+e^{-t}}$ is the logistic function. A total of 11 datasets were generated, with $p\in[0.0,0.1,0.2...1.0]$, by applying this labeling function to the true data covariates, so that the matrix of predictors $X$  was real data but the labels $y$ were generated synthetically by a known mechanism.

\subsection{Model training}
We trained both gradient boosted trees and linear logistic regression to predict the synthetic labels. For each of the 11 datasets with varying degrees $p$ of nonlinearity, models were tuned on a validation set and evaluated on test data with a 64/16/20 train/validation/test split. The only logistic regression hyper-parameter was the L$1$ regularization penalty which was optimized over the range $\lambda \in [10^{-4},10^{-3}...10^3,10^{4}]$. The tuned gradient boosting model hyper-parameters were optimized over the tree depths of $[1,2,4,8,10]$ and bagging sub-sampling over the rates $[0.2,0.5,0.8,1.0]$. The learning rate was fixed at $0.01$; the minimum loss reduction for splitting to $1.0$; the minimum child weight for splitting was $10$; and trees we trained for a maximum of $1,000$ rounds with early stopping based on validation set loss.

\subsection{Feature importance}
Per-sample importance values were calculated for each feature in each
of the 11 datasets using SHAP values for both the logistic (using Linear SHAP
values assuming feature independence \cite{lundberg2017unified}) and gradient boosted tree (using TreeExplainer's Tree SHAP algorithm) models. At
each value of $p$, the total weight of irrelevant samples was calculated
by taking the absolute value of all SHAP values for all features other
than age and BMI, and summing these values across all samples and
features.


\section{Previous Global Explanation Methods for Trees}
\label{sec:methods_previous_global_tree}

While local explanation methods for trees have not been extensively studied, interpreting tree-based models by assigning a global importance value to each input feature is a well studied problem and many methods have been proposed \cite{friedman2001elements,friedman2001greedy,sandri2008bias,strobl2007bias,strobl2008conditional,ishwaran2007variable,auret2011empirical,louppe2014understanding,kazemitabar2017variable}. The most basic global approach is to simply count the number of times a feature was used for splitting, but this fails to account for the differing impacts of different splits. A better approach is to attribute the reduction in loss (aka. Gain) provided by each split in each decision tree to the feature that was split on \cite{breiman1984classification, friedman2001elements}. This ``Gain'' measure of feature importance was shown to correctly recover the mutual information between the input features and the outcome label in the limit of an infinite ensemble of totally random fully developed trees \cite{louppe2014understanding}. However, it becomes biased for finite ensembles of greedily built trees, and so approaches have been designed to account for this bias when using Gain for feature selection \cite{irrthum2010inferring, chebrolu2005feature}. Another popular method for determining feature importance is to permute the data column corresponding to a feature and then observe the change in the model's loss \cite{breiman2001random}. If the model's loss increases significantly when a feature is permuted then it indicates the model was heavily depending on that feature. This permutation approach can be further extended to account for statistical dependencies between features by permuting only within specified groups of samples \cite{strobl2008conditional}. All of these approaches are designed to estimate the global importance of a feature over an entire dataset, so they are not directly applicable to local explanations that are specific to each prediction. If we try to use global methods in place of true local explanations we get significantly worse performance on many benchmark metrics (Figure~\ref{fig:cric_tile}).

While not explicitly designed to be a global feature attribution method, TreeExplainer can be used as a global method by averaging many local explanations. If we do this over all samples in a dataset, then we get a global measure of feature importance that does not suffer from the inconsistencies of the classic Gain method (Supplementary Figure~\ref{fig:and_trees_simple}), and unlike the permutation method, does not miss high-order interaction effects. Global feature attribution based on TreeExplainer has a higher power to detect important features in the presence of interactions than current state-of-the-art methods (Supplementary Figure~\ref{fig:feature_selection}). This has important implications for the popular task of feature selection based on tree-ensembles. 

\section{Previous local explanation methods for trees}
\label{sec:methods_previous_local_tree}

As described in Section~\ref{sec:inconsistent_local}, we are aware of only two previous tree-specific local explanation methods: reporting the decision path; and an unpublished heuristic difference in expectations method (proposed by Saabas) \cite{treeinterpreter}. Since reporting the decision path is not useful for large tree ensembles we instead focus on the heuristic Saabas method.
The Saabas difference in expectations approach explains a prediction by following the decision path and attributing changes in the expected output of the model to each feature along the path. This is efficient since the expected value of every node in the tree can be estimated by averaging the model output over all the training samples that pass through that node.
Let $f$ be a decision tree model, $x$ the instance we are going to explain, $f(x)$ the output of the model for the current instance, and $f_x(S) \approx E[f(x) \mid x_S]$ the estimated expectation of the model output conditioned on the set $S$ of feature values (\ref{sec:methods_tree_explainer}, Algorithm~\ref{alg:exp_value}), then we can define the {\it Saabas value} for the $i$'th feature as

\begin{equation}
\phi^{s}_i(f,x) = \sum_{j \in D^i_x} f_x(A_j \cup j) - f_x(A_j), 
\label{eq:saabas}
\end{equation}

\noindent where $D^i_x$ is the set of nodes on the decision path from $x$ that split on feature $i$, and $A_j$ is the set of all features split on by ancestors of $j$. Equation~\ref{eq:saabas} results in a set of feature attribution values that sum up to the difference between the expected output of the model and the output for the current prediction being explained (Supplementary Figure~\ref{fig:number_line}). When explaining an ensemble model made up of a sum of many decision trees, the Saabas values for the ensemble model are defined as the sum of the Saabas values for each tree.

\section{Model agnostic local explanation methods}
\label{sec:methods_previous_agnostic}

Many different local explanation methods have been proposed in the literature \cite{baehrens2010explain,vstrumbelj2014explaining,ribeiro2016should,datta2016algorithmic,lundberg2017unified,ribeiro2018anchors}. The most well known is simply taking the gradient of the model's output with respect to its inputs at the current sample. This is common in the deep learning literature, as is the related approach of multiplying the gradient times the value of the input features. Relying entirely on the gradient of the model at a single point though can often be misleading \cite{shrikumar2016not}. To provide a better allocation of credit for deep learning models, various methods have been proposed that either modify the standard gradient back propagation rule to instead propagate attributions \cite{springenberg2014striving,zeiler2014visualizing,bach2015pixel,shrikumar2016not,kindermans2017learning,ancona2018towards}, or integrate the gradient along a path based on fair allocation rules from economics \cite{sundararajan2017axiomatic}. 

In contrast to deep learning methods, model-agnostic methods make no assumptions about the internal structure of the model. These methods rely only on observing the relationship between changes in the model inputs and model outputs. This can be done by training a global mimic model to approximate the original model, then locally explaining the mimic model by either taking its gradient \cite{baehrens2010explain}, or fitting a local linear model as in MAPLE \cite{plumb2018model}. Alternatively, the mimic model can be fit to the original model locally for each prediction. For a local linear mimic model the coefficients can be used as an explanation, as in the popular LIME method \cite{ribeiro2016should}. For a local decision rule mimic model the rules can be used as the explanation as in Anchors \cite{ribeiro2018anchors}. Another class of approaches do not explicitly fit a mimic model, but instead perturb sets of features to measure their importance, then use methods from game theory to fairly allocate the importance of these sets among the input features, this class includes IME \cite{vstrumbelj2014explaining} and QII \cite{datta2016algorithmic}. 

Perhaps surprisingly, despite the seeming variety of different local explanation methods, two back propagation-style deep learning methods \cite{bach2015pixel,shrikumar2016not}, local linear mimic models \cite{ribeiro2016should}, and several game theoretic methods \cite{lipovetsky2001analysis,vstrumbelj2014explaining,datta2016algorithmic} were recently unified into a single class of {\it additive feature attribution methods} in our prior study \cite{lundberg2017unified}. This class is of particular interest since results from cooperative game theory imply there is a unique optimal explanation approach in the class (the Shapley values) that satisfies several desirable properties \cite{shapley1953value, roth1988shapley}. Unfortunately computing the Shapley values is NP-hard in general \cite{matsui2001np}, with a runtime cost exponential in the number of input features. When faced with an NP-hard optimization problem it is typical to build approximate methods, which exactly IME \cite{vstrumbelj2014explaining} or QII \cite{datta2016algorithmic} do. However, here we take an alternative approach and restrict our focus specifically to tree-based machine learning models. By doing this we are able to show constructively that solving for the exact Shapley values in trees is not NP-hard, and can be solved by TreeExplainer in low-order polynomial time (\ref{sec:methods_tree_explainer}).

\section{Convergence experiments for model agnostic Shapley value approximations}
\label{sec:methods_agnostic_convergence}


In Supplementary Figures~\ref{fig:tree_shap_performance}C-D we generated random datasets of increasing size and then explained (over)fit XGBoost models with 1,000 trees. The runtime and standard deviation of the local explanations are reported for Kernel SHAP \cite{lundberg2017unified}, IME \cite{vstrumbelj2014explaining}, and TreeExplainer; except that for Kernel SHAP and IME the reported times are only a lower bound. 
Both the IME and Kernel SHAP model-agnostic methods must evaluate the original model a specific number of times for each explanation, so the time spent evaluating the original model represents a lower bound on the runtime of the methods (note that the QII method \cite{datta2016algorithmic} is not included in our comparisons since for local feature attribution it is identical to IME). In Supplementary Figure~\ref{fig:tree_shap_performance}C we report this lower bound, but it is important to note that in practice there is also additional overhead associated with the actual execution of the methods that depends on how efficiently they are implemented. We also only used a single background reference sample for the model-agnostic approaches. This allows them to converge faster at the expense of using less accurate estimates of conditional expectations. Increasing the number of background samples would only further reduce the computational performance of these methods.
Each method is run ten times, then the standard deviation for each feature is divided by the mean of each feature to get a normalized standard deviation (Supplementary Figure~\ref{fig:tree_shap_performance}D).  In order to maintain a constant level of normalized standard deviation, Kernel SHAP and IME are allowed a linearly increasing number of samples as the number of features in a dataset, $M$, grows.  
In Supplementary Figure~\ref{fig:tree_shap_performance}C, TreeExplainer is so much faster than the model-agnostic methods that it appears to remain unchanged as we scale $M$, though in reality there is a small growth in its runtime. In Supplementary Figure~\ref{fig:tree_shap_performance}D there is truly no variability since the TreeExplainer method is exact and not stochastic.

In Supplementary Figures~\ref{fig:tree_shap_performance}E-F, the different explainers are compared in terms of estimation error (absolute deviation from the ground truth) on the chronic kidney disease dataset. We chose this dataset model for our comparisons because it is much smaller than the hospital procedure duration dataset, and has a more common loss function than the mortality dataset model (logistic loss vs. a cox proportional hazards loss). Ground truth is obtained via TreeExplainer's exact Independent Tree SHAP algorithm with the reference set fixed to be the mean of the data. The plots are obtained by increasing the number of samples allowed to each explainer and reporting the max and mean estimation error. For IME, we tune the minimum samples per feature, a hyper-parameter that is utilized to estimate which features' attributions have larger variability. After the minimum samples per feature has been achieved, the rest of the samples are allocated so as to optimally reduce the variance of the sum of the estimated values (by giving more samples to features with high sampling variance). As expected this is beneficial to the max evaluation error, but can potentially lead to bias (as for IME (min 10) in Supplementary Figure~\ref{fig:tree_shap_performance}E). While often helpful, $\ell_1$ regularization was not useful to improve Kernel SHAP for this dataset, so we report the results from unregularized regression.

To measure the cost of computing low variance estimates of explanations for the chronic kidney disease dataset we defined ``low variance'' as 1\% of the tenth largest feature impact (out of 333 features), then measured how many samples it took on average to reach a standard deviation below that level (where standard deviation is measured across repeated runs of the explanation method). This was done for both the maximum standard deviation across all features (Supplementary Figure~\ref{fig:tree_shap_performance}E), and the mean standard deviation (Supplementary Figure~\ref{fig:tree_shap_performance}F). Calculating low variance estimates for the experiments presented in this paper on the chronic kidney disease dataset would have taken almost 2 CPU days for basic explanations, and over 3 CPU years for interaction values (Section~\ref{sec:shap_interaction_values}).


\section{Unifying previous heuristics with Shapley values}
\label{sec:methods_shap_unity}

Here we review the uniqueness guarantees of Shapley values from game theory as they apply to local explanations of predictions from machine learning models \cite{shapley1953value}. We then detail how to reinterpret previous local explanation heuristics for trees, and so connect them with Shapley values.

As applied here, Shapley values are computed
by introducing each feature, one at at time, into a conditional expectation function of the model's output, $f_x(S) \approx E[f(x) \mid x_S]$ (\ref{sec:methods_tree_explainer}, Algorithm~\ref{alg:exp_value}), and attributing the change produced at each step to the feature that was introduced; then averaging this process over all possible feature orderings (Supplementary Figure~\ref{fig:number_line}). Shapley values represent the only possible method in the broad class of {\it additive feature attribution methods} \cite{lundberg2017unified} that will simultaneously satisfy three important properties: {\it local accuracy}, {\it consistency}, and {\it missingness}. 




Local accuracy (known as {\it additivity} in game theory) states that when approximating the original model $f$ for a specific input $x$, the explanation's attribution values should sum up to the output $f(x)$:

\begin{property}[Local accuracy / Additivity]
\begin{equation}
f(x) = \phi_0(f) + \sum_{i = 1}^M \phi_i(f, x)
\end{equation}
The sum of feature attributions $\phi_i(f, x)$ matches the original model output $f(x)$, where $\phi_0(f) = E[f(z)] = f_x(\emptyset)$.
\label{prop:local_accuracy}
\end{property}

Consistency (known as {\it monotonicity} in game theory) states that if a model changes so that some feature's contribution increases or stays the same regardless of the other inputs, that input's attribution should not decrease:

\begin{property}[Consistency / Monotonicity]
For any two models $f$ and $f'$, if
\begin{equation}
f'_x(S) - f'_x(S \setminus i) \ge f_x(S) - f_x(S \setminus i)
\end{equation}
for all subsets of features $S \in \mathcal{F}$, then $\phi_i(f',x) \ge \phi_i(f,x)$.
\label{prop:consistency}
\end{property}

Missingness (similar to {\it null effects} in game theory) requires features with no effect on the set function $f_x$ to have no assigned impact. All local previous methods we are aware of satisfy missingness.

\begin{property}[Missingness]
If
\begin{equation}
f_x(S \cup i) = f_x(S)
\end{equation}
for all subsets of features $S \in \mathcal{F}$, then $\phi_i(f,x) = 0$.
\label{prop:missingness}
\end{property}

The only way to simultaneously satisfy these properties is to use the classic Shapley values:

\begin{theorem}
\label{thrm:shapley}
Only one possible feature attribution method based on $f_x$ satisfies Properties 1, 2 and 3:
\begin{equation}
\phi_i(f,x) = \sum_{R \in \mathcal{R}} \frac{1}{M!} \left [ f_x(P^R_i \cup i) - f_x(P^R_i) \right ]
\label{eq:shapley}
\end{equation}
where $\mathcal{R}$ is the set of all feature orderings, $P^R_i$ is the set of all features that come before feature $i$ in ordering $R$, and $M$ is the number of input features for the model.
\end{theorem}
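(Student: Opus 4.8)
The plan is to establish the two halves of the claim separately: that the formula in Equation~\ref{eq:shapley} satisfies Properties~\ref{prop:local_accuracy}--\ref{prop:missingness}, and that it is the \emph{only} attribution rule that does. Fix the input $x$, so the object being explained is the set function $S \mapsto f_x(S)$ on subsets $S \subseteq N$, where $N = \{1,\dots,M\}$ is the full feature set and $f_x(N) = f(x)$; an attribution rule assigns to each such $f_x$ a vector $(\phi_1,\dots,\phi_M)$, with $\phi_0 = f_x(\emptyset)$ already pinned down.

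The existence direction is a direct verification. Local accuracy holds because, for each fixed ordering $R$, the marginal contributions $f_x(P^R_i \cup i) - f_x(P^R_i)$ telescope over the $M$ features to $f_x(N) - f_x(\emptyset) = f(x) - \phi_0$, and averaging over the $M!$ orderings preserves the identity. Consistency holds because, grouping orderings by the prefix $P^R_i = S$, Equation~\ref{eq:shapley} expresses $\phi_i$ as a \emph{nonnegatively weighted} average of the marginals $f_x(S \cup i) - f_x(S)$ over $S \subseteq N \setminus i$; if each such marginal weakly increases in passing from $f$ to $f'$, the average does too. Missingness is the special case in which every marginal of $i$ vanishes. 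These steps are routine.

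The real content is uniqueness, and the engine is the consistency property used in both directions: if $f'_x(S) - f'_x(S \setminus i) = f_x(S) - f_x(S \setminus i)$ for all $S$, then Property~\ref{prop:consistency} yields both $\phi_i(f',x) \ge \phi_i(f,x)$ and the reverse, so $\phi_i$ depends on $f_x$ \emph{only through the marginal contributions of feature $i$}. To turn this into a computation I would expand $f_x$ in the unanimity basis, $f_x = f_x(\emptyset) + \sum_{\emptyset \neq T \subseteq N} c_T\, u_T$ with $u_T(S) = 1$ when $T \subseteq S$ and $0$ otherwise. Since the marginals of $i$ in $u_T$ vanish whenever $i \notin T$, the marginality reduction lets me replace $f_x$ by the partial sum $\sum_{T \ni i} c_T\, u_T$ without changing $\phi_i$. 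I would then induct on the number of nonzero coefficients: for any feature $i$ outside the common intersection $I = \bigcap_{T:\, c_T \neq 0} T$, discarding the (nonempty) collection of terms with $i \notin T$ strictly lowers that count while leaving $\phi_i$ fixed, so the inductive hypothesis identifies $\phi_i$ with the Shapley value. The base case is the zero game, where missingness forces every $\phi_i = 0$.

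The hard part is the features lying in the common intersection $I$, which the marginality reduction cannot peel apart. These features enter every surviving unanimity game in exactly the same way, so they are mutually interchangeable and must receive equal attributions; local accuracy then determines their common value once all features outside $I$ are fixed, matching the Shapley allotment of $c_T/|T|$ per member on each $u_T$. This interchangeability is precisely a symmetry argument, which is not literally one of the three listed properties; I would either derive it from the structure of additive feature attribution methods or flag that a symmetry axiom must be appended to Properties~\ref{prop:local_accuracy}--\ref{prop:missingness} to make the uniqueness claim fully rigorous (indeed, without it the non-uniform random-order values satisfy all three and differ from Equation~\ref{eq:shapley}).
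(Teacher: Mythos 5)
Your argument is sound, but you should know that the paper does not actually prove this theorem: immediately after the statement it notes that the result ``has been previously presented'' in \cite{lundberg2017unified} and ``follows from cooperative game theory results'' \cite{young1985monotonic}, i.e., it defers entirely to Young's axiomatization of the Shapley value. What you wrote out --- extracting marginality from two-sided consistency, expanding in the unanimity basis, inducting on the number of nonzero coefficients, and handling the features in the common intersection by an equal-treatment argument closed off with local accuracy --- is in substance Young's own proof, so you have reconstructed exactly the argument that the paper outsources to the literature.

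Your closing caveat is not a defect of your proof; it is a correct observation about the theorem statement itself. Young's characterization uses efficiency, strong monotonicity, \emph{and symmetry}, and the missingness (null player) property is not a substitute for symmetry. Concretely: fix the ordering $1,2,\dots,M$ once and for all (independently of the model) and define $\phi_i(f,x) = f_x(\{1,\dots,i\}) - f_x(\{1,\dots,i-1\})$. This rule satisfies local accuracy (the sum telescopes), consistency (each $\phi_i$ is a single marginal contribution of feature $i$, so it is monotone in those marginals), and missingness, yet on the unanimity game $u_{\{1,2\}}$ it awards $(0,1,0,\dots,0)$ rather than the Shapley split $\left(\tfrac{1}{2},\tfrac{1}{2},0,\dots,0\right)$. (Note this is not the Saabas rule: there the ordering is the tree's decision path, which changes when the model changes, which is precisely why Saabas values can violate consistency while a model-independent fixed ordering cannot.) Restricting attention to additive feature attribution methods does not exclude such rules, so uniqueness genuinely requires the symmetry axiom you identified; of your two suggested repairs, the paper's citation of \cite{young1985monotonic} implicitly adopts the first, since symmetry is one of Young's hypotheses.
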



The equivalent of Theorem \ref{thrm:shapley} has been previously presented in \cite{lundberg2017unified} and follows from cooperative game theory results \cite{young1985monotonic}, where the values $\phi_i$ are known as the Shapley values \cite{shapley1953value}. Shapley values are defined independent of the set function used to measure the importance of a set of features. Since here we are using $f_x$, a conditional expectation function of the model's output, we are computing the more specific {\it SHapley Additive exPlanation (SHAP) values} \cite{lundberg2017unified}.


Surprisingly, there is a close parallel between the Saabas values (Equation~\ref{eq:saabas}) and the SHAP values (Equation~\ref{eq:shapley}). While SHAP values average the importance of introducing a feature over all possible feature orderings, Saabas values only consider the single ordering defined by a tree's decision path (Supplementary Figure~\ref{fig:number_line}). This mean that Saabas values satisfy the local accuracy property since they always sum up to the difference between the expected value of the model $E[f(x)]$ and the current output $f(x)$. But since they do not average over all orderings they do not match the SHAP values, and so must violate consistency. Examples of such inconsistencies can be found for even very small trees, where changing the model to depend more on one feature can actually cause the Saabas attribution values for that feature to decrease (Supplementary Figure~\ref{fig:and_trees_simple}).
The difference this makes can be seen by examining trees representing multi-way AND functions. If we let all the input features be independent and identically distributed then no feature in an AND function should have any more credit than another, yet for Saabas values, splits near the root are given much less credit than splits near the leaves (Supplementary Figures~\ref{fig:tree_shap_performance}A). This means that while mathematically all the features play an equal role in the model, the features near the root of the tree get little or no credit. This is particularly troubling since features near the root are likely the most important as they were chosen first by greedy splitting.


There is a close parallel between Saabas values and the classic ``Gain'' method for global feature attribution (sometimes known as Gini importance) \cite{friedman2001elements}. Just as Gain attributes the change in loss after each split to the feature that was split on, so Saabas attributes the change in conditional expectation after each split to the feature that was split on. Both methods only consider a single order for introducing features into the model, the order defined by paths in the tree. Choosing to use only a single ordering leads to inconsistent allocation of credit (Supplementary Figure~\ref{fig:tree_shap_performance}A~and~\ref{fig:and_trees_simple}). Shapley values guarantee a consistent allocation of credit by averaging the change after each split over all possible orderings. It has been previously shown through a connection with mutual information (which is consistent) that the Gain method becomes consistent in the limit of an infinite ensemble of totally random fully developed trees \cite{louppe2014understanding}. This suggests that the Saabas method may also become consistent in the limit of an infinite ensemble of totally random fully developed trees. This is indeed the case, and we show in Theorem~\ref{thrm:saabas_limit} that for binary features the Saabas values converge to the SHAP values in the limit of an infinite ensemble of totally random fully developed trees. 

\begin{theorem}
\label{thrm:saabas_limit}
In the limit of an infinite ensemble of totally random fully developed trees on binary features the Saabas values equal the SHAP values \cite{lundberg2017unified} (which are Shapley values of a conditional expectation function of the model's output).
\end{theorem}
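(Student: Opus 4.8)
The plan is to show that, for a fully developed tree on binary features, the per-tree Saabas value collapses to a single marginal-contribution term of exactly the form summed in the Shapley formula, and that averaging these terms over the totally-random ensemble reproduces the uniform average over all feature orderings that defines the SHAP value. First I would exploit the structure of a fully developed tree on binary features: along any root-to-leaf path every feature is split on exactly once (a binary split reveals the bit, and ``fully developed'' forces all $M$ features to appear), so the set $D^i_x$ in Equation~\ref{eq:saabas} is a singleton. Hence the Saabas value for a single tree reduces to $f_x(A_j \cup i) - f_x(A_j)$, where $j$ is the unique node on $x$'s path splitting on feature $i$ and $A_j$ is the set of features split on by its ancestors. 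Reading the path from root to leaf as an ordering $R$ of the features, $A_j$ is precisely $P^R_i$, the predecessors of $i$ in $R$, so the per-tree Saabas value equals the marginal contribution $f_x(P^R_i \cup i) - f_x(P^R_i)$ appearing inside the sum of Equation~\ref{eq:shapley}.

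Next I would verify that the set function $f_x(S)$ is the \emph{same} for every fully developed tree, namely the idealized conditional expectation $E[f(x)\mid x_S]$, independently of the order in which the tree splits. This holds because in a fully developed tree the leaves partition $\{0,1\}^M$ into singleton assignments whose values are the corresponding training averages, and the Saabas estimate of $f_x(S)$ averages these leaf values over all assignments consistent with $x_S$, weighted by training proportions --- a quantity that depends only on $S$ and $x_S$, not on split order. With $f_x$ thereby fixed across the ensemble, the only quantity that varies from tree to tree is the path-induced ordering $R$.

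Then I would argue that, under the totally-random construction, the ordering $R$ read off $x$'s path is a uniformly random permutation of the $M$ features. The root split feature is chosen uniformly from all $M$ features; conditioning on it and on $x$'s value sending $x$ to a particular child, the split feature at that child is chosen uniformly and data-independently from the remaining $M-1$ features, and so on down the path. Composing these independent uniform choices shows each ordering arises with probability $1/M!$. Taking the infinite-ensemble limit, the law of large numbers identifies the ensemble Saabas value with the expectation over the tree distribution, which by the two previous steps equals $\sum_{R}\frac{1}{M!}\left[f_x(P^R_i \cup i) - f_x(P^R_i)\right] = \phi_i(f,x)$, the SHAP value; linearity of expectation simultaneously guarantees that the ensemble's conditional-expectation set function is the one used by SHAP, so the baseline $\phi_0$ and the local-accuracy bookkeeping match.

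I expect the main obstacle to be the second step: verifying rigorously that the Saabas set function coincides with $E[f(x)\mid x_S]$ for \emph{every} fully developed tree regardless of its internal split order. This requires checking that the weighted averaging of leaf values over the features left free (those outside $S$) reproduces exactly the empirical conditional expectation, and that it does so uniformly, so that the averaging over orderings factors cleanly through a single fixed $f_x$ rather than through a tree-dependent set function. Once that invariance is secured, the identification of the path ordering with a uniform permutation and the passage to the limit are comparatively routine.
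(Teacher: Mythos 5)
Your proposal is correct and follows essentially the same route as the paper's proof: the per-tree Saabas value on a fully developed binary-feature tree collapses to a single permutation term of Equation~\ref{eq:shapley}, the totally-random construction makes the path-induced ordering uniform over all $M!$ permutations, and the infinite-ensemble average therefore reproduces the Shapley average. Your second step---explicitly verifying that the prefix set function $f_x$ is the empirical conditional expectation and hence identical across all trees regardless of split order---is a worthwhile rigor-filling addition, since the paper compresses this decoupling of tree structure from leaf values into its brief remark that all orderings are represented for each possible leaf value.
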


\begin{proof}
Assume all features are binary, then the decision path of a single input instance $x$ for a single tree will be a random ordering of all the input features. The Saabas values for that tree will be equivalent to a single permutation from the formula for Shapley values (Equation~\ref{eq:shapley}). Since there is an infinite ensemble of trees, all possible feature orderings will be represented in equal proportions. Given a finite output domain, there will furthermore be all possible feature orderings represented for each possible leaf value.  Taking the average of the Saabas values over the ensemble of trees then becomes the same as the averaging function in the definition of the Shapley values (Equation~\ref{eq:shapley}).
\end{proof}

\section{TreeExplainer algorithms}
\label{sec:methods_tree_explainer}

Here, we describe the algorithms behind TreeExplainer in three stages. First, we describe an easy to understand (but slow) version of the main Tree SHAP algorithm, then we present the complex polynomial time version of Tree SHAP, and finally we describe the Independent Tree SHAP algorithm used for explaining non-linear model output transformations (such as the model's loss). While solving for the Shapley values is in general NP-hard \cite{matsui2001np}, these algorithms show that by restricting our attention to trees, we can find exact algorithms that run in low-order polynomial time.

\subsection{Tree SHAP}

Tree SHAP, the main algorithm behind TreeExplainer, can exactly compute the Shapley values, and so guarantee consistent explanations (Property~\ref{prop:consistency}). Tree SHAP exactly computes Equation~\ref{eq:shapley} in low order polynomial time, where the conditional expectation function, $f_x$, is defined using tree traversal (Algorithm~\ref{alg:exp_value}). Letting $T$ be the number of trees, $D$ the maximum depth of any tree, and $L$ the number of leaves, Tree SHAP has worst case complexity of $O(TLD^2)$. This represents an exponential complexity improvement over previous exact Shapley methods, which would have a complexity of $O(TLM2^M)$, where $M$ is the number of input features. By directly computing the Shapley values we are able to guarantee that the explanations will always be consistent and locally accurate.

\subsubsection{Estimating SHAP values directly in $O(TLM2^M)$ time} 
\label{sec:shap_direct}

If we ignore computational complexity then we can compute the SHAP values for a tree by estimating $E[f(x) \mid x_S]$ and then using Equation \ref{eq:shapley} (\ref{sec:methods_shap_unity}). For a tree model, $E[f(x) \mid x_S]$ can be estimated recursively using Algorithm \ref{alg:exp_value}, where $tree$ contains the information of the tree. $v$ is a vector of node values; for internal nodes, we assign the value $internal$. 
The vectors $a$ and $b$ represent the left and right node indexes for each internal node. The vector $t$ contains the thresholds for each internal node, and $d$ is a vector of indexes of the features used for splitting in internal nodes. The vector $r$ represents the cover of each node (i.e., how many data samples fall in that sub-tree). 

Algorithm \ref{alg:exp_value} finds $E[f(x) \mid x_S]$ by recursively following the decision path for $x$ if the split feature is in $S$, and taking the weighted average of both branches if the split feature is not in $S$. The computational complexity of Algorithm~\ref{alg:exp_value} is proportional to the number of leaves in the tree, which when used on all $T$ trees in an ensemble and plugged into Equation~\ref{eq:shapley} leads to a complexity of $O(TLM2^M)$ for computing the SHAP values of all $M$ features.

\begin{algorithm}
\caption{Estimating $E[f(x) \mid x_S]$ \label{alg:exp_value}}
\begin{algorithmic}[1]
\Procedure{EXPVALUE}{$x$, $S$, $tree = \{v, a, b, t, r, d\}$}
\Procedure{G}{$j$} \Comment{Define the $G$ procedure which we will call on line 10}
  \If{$v_j \ne internal$} \Comment{Check if node $j$ is a leaf}
    \State \Return{$v_j$} \Comment{Return the leaf's value}
  \Else
    \If{$d_j \in S$} \Comment{Check if we are conditioning on this feature}
      \State \Return \Call{G}{$a_j$} {\bf if} $x_{d_j} \le t_j$ {\bf else} \Call{G}{$b_j$} \Comment{Use the child on the decision path}
    \Else
    \State \Return $[$\Call{G}{$a_j$}$ \cdot r_{a_j}$ + \Call{G}{$b_j$}$ \cdot  r_{b_j}] /r_j$ \Comment{Weight children by their coverage}
    \EndIf
  \EndIf
\EndProcedure
\State \Return \Call{G}{$1$} \Comment{Start at the root node}
\EndProcedure
\end{algorithmic}
\end{algorithm}

\subsubsection{Estimating SHAP values in $O(TLD^2)$ time}
\label{sec:shap_fast}

Now we calculate the same values as above, but in polynomial time instead of exponential time. Specifically, we propose an algorithm that runs in $O(TLD^2)$ time and $O(D^2 + M)$ memory, where for balanced trees the depth becomes $D = \log L$. Recall $T$ is the number of trees, $L$ is the maximum number of leaves in any tree, and $M$ is the number of features.


The intuition of the polynomial time algorithm is to recursively keep track of what proportion of all possible subsets flow down into each of the leaves of the tree.
This is similar to running Algorithm \ref{alg:exp_value} simultaneously for all $2^M$ subsets $S$ in Equation~\ref{eq:shapley}. Note that a single subset $S$ can land in multiple leaves. It may seem reasonable to simply keep track of how many subsets (weighted by the cover splitting of Algorithm~\ref{alg:exp_value} on line 9) pass down each branch of the tree. However, this combines subsets of different sizes and so prevents the proper weighting of these subsets, since the weights in Equation~\ref{eq:shapley} depend on $|S|$. To address this we keep track of each possible subset size during the recursion, not just single a count of all subsets. The {\it EXTEND} method in Algorithm~\ref{alg:tree_shap} grows all these subset sizes according to a given fraction of ones and zeros, while the {\it UNWIND} method reverses this process and is commutative with {\it EXTEND}. The {\it EXTEND} method is used as we descend the tree. The {\it UNWIND} method is used to undo previous extensions when we split on the same feature twice, and to undo each extension of the path inside a leaf to compute weights for each feature in the path. Note that {\it EXTEND} keeps track of not just the proportion of subsets during the recursion, but also the weight applied to those subsets by Equation~\ref{eq:shapley}. Since the weight applied to a subset in Equation~\ref{eq:shapley} is different when it includes the feature $i$, we need to {\it UNWIND} each feature separately once we land in a leaf, so as to compute the correct weight of that leaf for the SHAP values of each feature. The ability to {\it UNWIND} only in the leaves depends on the commutative nature of {\it UNWIND} and {\it EXTEND}.

\begin{algorithm}
\caption{Tree SHAP \label{alg:tree_shap}}
\begin{algorithmic}[1]
\Procedure{TREESHAP}{$x$, $tree = \{v, a, b, t, r, d\}$}
\State $\phi = \textrm{array of $len(x)$ zeros}$
\Procedure{RECURSE}{$j$, $m$, $p_z$, $p_o$, $p_i$}
  \State $m =~$\Call{EXTEND}{$m$, $p_z$, $p_o$, $p_i$} \Comment{Extend subset path with a fraction of zeros and ones}
  \If{$v_j \ne internal$} \Comment{Check if we are at a leaf node}
    \For{$i \gets 2 \textrm{ to } len(m)$} \Comment{Calculate the contributions from every feature in our path}
      \State $w = sum(\Call{UNWIND}{m, i}.w)$ \Comment{Undo the weight extension for this feature}
      \State $\phi_{m_i} = \phi_{m_i} + w(m_i.o - m_i.z) v_j$ \Comment{Contribution from subsets matching this leaf}
    \EndFor
  \Else
    \State $h,c = (a_j,b_j)$ {\bf if} $x_{d_j} \le t_j$ {\bf else} $(b_j,a_j)$ \Comment{Determine hot and cold children}
    \State $i_z = i_o = 1$
    \State $k = \Call{FINDFIRST}{m.d, d_j}$
    \If{$k \ne \textrm{nothing}$} \Comment{Undo previous extension if we have already seen this feature}
      \State $i_z,i_o = (m_k.z,m_k.o)$
      \State $m = \Call{UNWIND}{m, k}$
    \EndIf
    \State \Call{RECURSE}{$h$, $m$, $i_z r_h/r_j$, $i_o$, $d_j$} \Comment{Send both zero and one weights to the hot child}
    \State \Call{RECURSE}{$c$, $m$, $i_z r_c/r_j$, $0$, $d_j$} \Comment{Send just zero weights to the cold child}
  \EndIf
\EndProcedure
\Procedure{EXTEND}{$m$, $p_z$, $p_o$, $p_i$}
\State $l, m = len(m), copy(m)$
\State $m_{l+1}.(d,z,o,w) = (p_i, p_z, p_o, (1 $ {\bf if} $l = 0$ {\bf else} $0))$ \Comment{Init subsets of size $l$}
\For{$i \gets l \textrm{ to } 1$} \Comment{Grow subsets using $p_z$ and $p_o$ }
  \State $m_{i+1}.w = m_{i+1}.w + p_o \cdot m_i.w \cdot (i/l)$ \Comment{Subsets that grow by one}
  \State $m_i.w =  p_z \cdot m_i.w \cdot (l-i)/l$ \Comment{Subsets that stay the same size}
\EndFor
\State \Return m \Comment{Return the new extended subset path}
\EndProcedure
\Procedure{UNWIND}{$m$, $i$} \Comment{The inverse of the $i$th call to EXTEND$(m, ...)$}
\State $l,n,m = len(m), m_l.w, copy(m_{1...l-1})$
\For{$j \gets l-1 \textrm{ to } 1$} \Comment{Shrink subsets using $m_i.z$ and $m_i.o$ }
  \If{$m_i.o \ne 0$}
    \State $t = m_j.w$
    \State $m_j.w = n \cdot l/(j \cdot m_i.o)$
    \State $n = t - m_j.w \cdot m_i.z \cdot (l-j)/l$
  \Else
    \State $m_j.w = (m_j.w \cdot l)/(m_i.z (l-j))$
  \EndIf
\EndFor
\For{$j \gets i \textrm{ to } l-1$}
  \State $m_j.(d,z,o) = m_{j+1}.(d,z,o)$
\EndFor
\State \Return m
\EndProcedure
\State \Call{RECURSE}{$1$, $[~]$, $1$, $1$, $0$} \Comment{Start at first node with all zero and one extensions}
\State \Return $\phi$
\EndProcedure
\end{algorithmic}
\end{algorithm}

In Algorithm~\ref{alg:tree_shap}, $m$ is the path of unique features we have split on so far, and contains four attributes: i) $d$, the feature index, ii) $z$, the fraction of ``zero'' paths (where this feature is not in the set $S$) that flow through this branch, iii) $o$, the fraction of ``one'' paths (where this feature is in the set $S$) that flow through this branch, and iv) $w$, which is used to hold the proportion of sets of a given cardinality that are present weighted by their Shapley weight (Equation~\ref{eq:shapley}). Note that the weighting captured by $w$ does not need to account for features not yet seen on the decision path so the effective size of $M$ in Equation~\ref{eq:shapley} is growing as we descend the tree. We use the dot notation to access member values, and for the whole vector $m.d$ represents a vector of all the feature indexes. The values $p_z$, $p_o$, and $p_i$ represent the fraction of zeros and ones that are going to extend the subsets, and the index of the feature used to make the last split. We use the same notation as in Algorithm~\ref{alg:exp_value} for the tree and input vector $x$. The child followed by the tree when given the input $x$ is called the ``hot'' child. Note that the correctness of Algorithm~\ref{alg:tree_shap} (as implemented in the open source code) has been validated by comparing its results to the brute force approach based on Algorithm~\ref{alg:exp_value} for thousands of random models and datasets where $M < 15$.

{\bf Complexity analysis:} Algorithm \ref{alg:tree_shap} reduces the computational complexity of exact SHAP value computation from exponential to low order polynomial for trees and sums of trees (since the SHAP values of a sum of two functions is the sum of the original functions' SHAP values). The loops on lines 6, 12, 21, 27, and 34 are all bounded by the length of the subset path $m$, which is bounded by $D$, the maximum depth of a tree. This means the complexity of {\it UNWIND} and {\it EXTEND} is bounded by $O(D)$. Each call to {\it RECURSE} incurs either $O(D)$ complexity for internal nodes, or $O(D^2)$ for leaf nodes, since {\it UNWIND} is nested inside a loop bounded by $D$. This leads to a complexity of $O(LD^2)$ for the whole tree because the work done at the leaves dominates the complexity of the internal nodes. For an entire ensemble of $T$ trees this bound becomes $O(TLD^2)$. If we assume the trees are balanced then $D = \log L$ and the bound becomes $O(TL\log^2 L)$. $\square$ 

\subsection{Independent Tree SHAP: Estimating SHAP values under independence in $O(TRL)$ time}
\label{sec:independent_tree_shap_method}

The Tree SHAP algorithm provides fast exact solutions for trees and sums of trees (because of the linearity of Shapley values \cite{shapley1953value}), but there are times when it is helpful to explain not the direct output of the trees, but also a non-linear transform of the tree's output. A compelling example of this is explaining a model's loss function, which is very useful for model monitoring and debugging (Section~\ref{sec:model_monitoring}). 

Unfortunately, there is no simple way to adjust the Shapley values of a function to exactly account for a non-linear transformation of the model output. Instead, we combine a previously proposed compositional approximation (Deep SHAP) \cite{lundberg2017unified} with ideas from Tree SHAP to create a fast method specific to trees, {\it Independent Tree SHAP}. The compositional approach requires iterating over each background sample from the dataset used to compute the expectation. This means that conditional expectations can no longer be computed using Algorithm~\ref{alg:exp_value}, since only the path corresponding to the current background reference sample will be defined. To solve this we enforce independence between input features then develop Independent Tree SHAP as a single-reference version of Tree SHAP (Algorithm~\ref{alg:ind_tree_shap}). 

Independent Tree SHAP enforces an independence assumption between the conditional set $S$ and the set of remaining features ($x_S \bot x_{\bar S}$). Utilizing this independence assumption, Shapley values with respect to $R$ individual background samples can be averaged together to get the attributions for the full distribution.  Accordingly, Algorithm \ref{alg:ind_tree_shap} is performed by traversing hybrid paths made up of a single foreground and background sample in a tree.  At each internal node, {\it RECURSE} traverses down the tree, maintaining local state to keep track of the set of upstream features and whether each split went down the path followed by the foreground or background sample.  Then, at each leaf, two contributions are computed -- one positive and one negative.  Each leaf's positive and negative contribution depends on the feature being explained.  However, calculating the Shapley values by iterating over all features at each leaf would result in a quadratic time algorithm.  Instead, {\it RECURSE} passes these contributions up to the parent node and determines whether to assign the positive or negative contribution to the feature that was split upon based on the directions the foreground and background samples traversed.  Then the internal node aggregates the two positive contributions into a single positive contribution and two negative contributions into a single negative contribution and passes it up to its parent node.

Note that both the positive and negative contribution at each leaf is a function of two variables: 1) $U$: the number of features that matched the foreground sample along the path and 2) $V$: the total number of unique features encountered along the path.  This means that for different leaves, a different total number of features $V$ will be considered. This allows the algorithm to consider only $O(L)$ terms, rather than an exponential number of terms.  Despite having different $U$'s at each leaf, Independent Tree SHAP exactly computes the traditional Shapley value formula (which considers a fixed total number of features $\geq V$ for any given path) because the terms in the summation group together nicely.

\begin{algorithm}
\caption{Independent Tree SHAP \label{alg:ind_tree_shap}}
\begin{algorithmic}[1]
\Procedure{INDTREESHAP}{$x$, $refset$, $tree = \{v, a, b, t, r, d\}$} 
\State $\phi = \textrm{array of $len(x)$ zeros}$
\Procedure{CALCWEIGHT}{$U$, $V$} \Comment{Shapley value weight for a set size and number of features}
    \State \Return $\frac{U!(V-U-1)!}{V!}$
\EndProcedure
    \Procedure{RECURSE}{$j$, $U$, $V$, $xlist$, $clist$} 
  \If{$v_j\neq internal$} \Comment{Calculate possible contributions at leaf}
    \State $pos = neg = 0$
    \If{$U==0$} \Return ($pos$, $neg$) \EndIf
    \If{$U \neq 0$} $pos = calcweight(V,U-1)*v_j$ \EndIf
    \If{$U \neq V$} $neg = -calcweight(V,U)*v_j$ \EndIf
    \State \Return ($pos$, $neg$)
  \EndIf
  \State $k = None$ \Comment{Represents the next node}
  \If{$(x_{d_j} > t_j)$ and $(c_{d_j} > t_j)$} $k = b_j$  \Comment{Both x and $c$ go right} \EndIf
  \If{$!(x_{d_j} > t_j)$ and $!(c_{d_j} > t_j)$} $k = a_j$ \Comment{Both x and $c$ go left}  \EndIf
  \If{$xlist_{d_j}>0$} \Comment{Feature was previously x}
    \If{$x_{d_j} > t_j$}  $k = b_j$
    \Else $\text{ }k = a_j$ \EndIf
  \EndIf
  \If{$clist_{d_j}>0$} \Comment{Feature was previously $c$}
    \If{$c_{d_j} > t_j$} $k = b_j$ 
    \Else $\text{ }k = a_j$ \EndIf
  \EndIf
  \If{$k \neq None$}  \Comment{Recurse down a single path if next node is set}
    \State \Return RECURSE($k$, $U$, $V$, $xlist$, $clist$)
  \EndIf
  \If{$(x_{d_j} > t_j)$ and $!(c_{d_j} > t_j)$} 
  \Comment{Recurse x right and $c$ left}
    \State $xlist_{d_j} = xlist_{d_j}+1$
    \State ($posx$,$negx$) = RECURSE($b_j$, $U+1$, $V+1$, $xlist$, $clist$)
    \State $xlist_{d_j} = xlist_{d_j}-1$
    \State $clist_{d_j} = clist_{d_j}+1$
    \State ($posc$,$negc$) = RECURSE($a_j$, $U$, $V+1$, $xlist$, $clist$)
    \State $clist_{d_j} = clist_{d_j}-1$
  \EndIf
  \If{!$(x_{d_j} > t_j)$ and $(c_{d_j} > t_j)$} \Comment{Recurse x left and $c$ right}
    \State $xlist_{d_j} = xlist_{d_j}+1$
    \State ($posx$,$negx$) = RECURSE($a_j$, $U+1$, $V+1$, $xlist$, $clist$)
    \State $xlist_{d_j} = xlist_{d_j}-1$
    \State $clist_{d_j} = clist_{d_j}+1$
    \State ($posc$,$negc$) = RECURSE($b_j$, $U$, $V+1$, $xlist$, $clist$)
    \State $clist_{d_j} = clist_{d_j}-1$
  \EndIf
  \State $\phi_{d_j} = \phi_{d_j}+posx+negc$ \Comment{Save contributions for $d_j$}
  \State \Return ($posx+posc$, $negx+negc$) \Comment{Pass up both contributions}
\EndProcedure
\For{$c$ \textbf{in} $refset$}
  \State \Call{RECURSE}{$0$, $0$, $0$, \textrm{array of $len(x)$ zeros}, \textrm{array of $len(x)$ zeros}}
\EndFor
\State \Return $\phi/\text{len}(refset)$
\EndProcedure
\end{algorithmic}
\end{algorithm}

\textbf{Complexity Analysis: } If we assume {\it CALCWEIGHT} takes constant time (which it will if the factorial function is  implemented based on lookup tables), then Algorithm \ref{alg:ind_tree_shap} performs a constant amount of computation at each node.  This implies the complexity for a single foreground and background sample is $O(L)$, since the number of nodes in a tree is of the same order as the number of leaves.  Repeating this algorithm for each tree and for each background sample gives us $O(TRL)$. $\square$

Note that for the experiments in this paper we used $R = 200$ background samples to produce low variance estimates.

\section{Benchmark evaluation metrics}
\label{sec:benchmark_metrics}

We used 21 evaluation metrics to measure the performance of different explanation methods. These metrics were chosen to capture practical runtime considerations, desirable properties such as local accuracy and consistency, and a range of different ways to measure feature importance. We considered multiple previous approaches and based these metrics off what we considered the best aspects of prior evaluations \cite{ancona2018towards,hooker2018evaluating,lundberg2017unified,shrikumar2016not}. Importantly, we have included three different ways to hide features from the model. One based on mean masking, one based on random sampling under the assumption of feature independence, and one based on imputation. The imputation based metrics attempt to avoid evaluating the model on unrealistic data, but it should be noted that this comes at the cost of encouraging explanation methods to assign importance to correlated features. After extensive consideration, we did not include metrics based on retraining the original model since, while informative, these can produce misleading results in certain situations.

All metrics used to compute comprehensive evaluations of the Shapley value estimation methods we consider are described below (Figure~\ref{fig:cric_tile}, Supplementary Figures~\ref{fig:corrgroups60_tile}~and~\ref{fig:independentlinear60_tile}). Python implementations of these metrics are available online \url{https://github.com/suinleelab/treeexplainer-study}. Performance plots for all benchmark results are also available in Supplementary Data 1.

\subsection{Runtime}

Runtime is reported as the time to explain 1,000 predictions. For the sake of efficiency the runtime for each explanation method was measured using 100 random predictions, and then scaled by 10 to represent the time to explain 1,000 predictions. Both the initialization time of each method and the per-prediction time was measured, and only the per-prediction time was scaled.

\subsection{Local accuracy}

Local accuracy strictly holds only when the sum of the attribution values exactly sum up from some constant base value to the output of the model for each prediction (Property~\ref{prop:local_accuracy}). This means $E_x[(f(x) - \sum_i \phi_i)^2] = 0$. But to also capture how close methods come to achieving local accuracy when they fail, we compute the normalized standard deviation of the difference from the model's output over 100 samples

\begin{equation}
\sigma = \frac{\sqrt{E_x[(f(x) - \sum_i \phi_i)^2]}}{\sqrt{E_x[f(x)^2]}}
\end{equation}

\noindent then define nine cutoff levels of $\sigma$ for reporting a positive score between $0$ and $1$:
\begin{align}
   \sigma < 10^{-6} \implies 1.00 \\
    10^{-6} \le \sigma < 0.01 \implies 0.90 \\
    0.01 \le \sigma < 0.05 \implies 0.75 \\
    0.05 \le \sigma < 0.10 \implies 0.60 \\
    0.10 \le \sigma < 0.20 \implies 0.40 \\
    0.20 \le \sigma < 0.30 \implies 0.30 \\
    0.30 \le \sigma < 0.50 \implies 0.20 \\
    0.50 \le \sigma < 0.70 \implies 0.10
\end{align}

\subsection{Consistency guarantees}

Consistency guarantees are a theoretical property of an explanation method that ensure pairs of cases will never be inconsistent (Property~\ref{prop:consistency}). We broke agreement with this property into three different categories: an exact guarantee, a guarantee that holds in the case of infinite sampling, and no guarantee. Note that while inconsistency could be tested computationally, it would require enumerating a search space exponential in the number of input features, which is why we chose to directly report the theoretical guarantees provided by different methods.

\subsection{Keep positive (mask)}

The Keep Positive (mask) metric measures the ability of an explanation method to find the features that increased the output of the model the most. For a single input the most positive input features are kept at their original values, while all the other input features are masked with their mean value. This is done for eleven different fractions of features ordered by how positive an impact they have as estimated by the explanation method we are evaluating (those that have a negative impact are always removed and never kept). Plotting the fraction of features kept vs. the model output produces a curve that measures how well the local explanation method has identified features that increase the model's output for this prediction. Higher valued curves represent better explanation methods. We average this curve over explanations of 100 test samples for 10 different models trained on different train/test splits.  To summarize how well an explanation method performed we take the area under this curve. Masking features and observing their impact on a model's output is a common method for assessing local explanation methods \cite{ancona2018towards,lundberg2017unified,shrikumar2016not}. An example plot of this metric for a random forest model of the chronic kidney disease model is available in Supplementary Figure~\ref{fig:plot_cric_random_forest_keep_positive_mask}.

\subsection{Keep positive (resample)}

The Keep Positive (resample) metric is similar to the Keep Positive (mask) metric, but instead of replacing hidden features with their mean value, this resample version of the metric replaces them with values from a random training sample. This replacement with values from the training dataset is repeated 100 times and the model output's are averaged to integrate over the background distribution. If the input features are independent then this estimates the expectation of the model output conditioned on the observed features. The mask version of this metric described above can also be viewed as approximating the conditional expectation of the model's output, but only if the model is linear. The resample metric does not make the assumption of model linearity. An example plot of this metric for a random forest model of the chronic kidney disease model is available in Supplementary Figure~\ref{fig:plot_cric_random_forest_keep_positive_resample}.

\subsection{Keep positive (impute)}

The Keep Positive (impute) metric is similar to the Keep Positive (mask) and (resample) metrics, but instead of replacing hidden features with their mean value or a sample from the training set, this impute version of the metric replaces them with values imputed based on the data's correlation matrix. The imputations match the maximum likelihood estimate under the assumption that the inputs features follow a multivariate normal distribution. Unlike the mask and resample versions of this metric, the impute version accounts for correlations among the input features. By imputing we prevent the evaluation of the model on invalid inputs that violate the correlations observed in the data (for example having an input where 'hematocrit' is normal but 'anemia' is true). An example plot of this metric for a random forest model of the chronic kidney disease model is available in Supplementary Figure~\ref{fig:plot_cric_random_forest_keep_positive_impute}.

\subsection{Keep negative (mask)}

The Keep Negative (mask) metric measures the ability of an explanation method to find the features that decreased the output of the model the most. It works just like the Keep Positive (mask) metric described above, but keeps the most negative impacting features as computed by the explanation method (Supplementary Figure~\ref{fig:plot_cric_random_forest_keep_negative_mask}).

\subsection{Keep negative (resample)}

The Keep Negative (resample) metric measures the ability of an explanation method to find the features that decreased the output of the model the most. It works just like the Keep Positive (resample) metric described above, but keeps the most negative impacting features as computed by the explanation method (Supplementary Figure~\ref{fig:plot_cric_random_forest_keep_negative_resample}).

\subsection{Keep negative (impute)}

The Keep Negative (impute) metric measures the ability of an explanation method to find the features that decreased the output of the model the most. It works just like the Keep Positive (impute) metric described above, but keeps the most negative impacting features as computed by the explanation method (Supplementary Figure~\ref{fig:plot_cric_random_forest_keep_negative_impute}).

\subsection{Keep absolute (mask)}

The Keep Absolute (mask) metric measures the ability of an explanation method to find the features most important for the model's accuracy. It works just like the Keep Positive (mask) metric described above, but keeps the most important features as measured by the absolute value of the score given by the explanation method (Supplementary Figure~\ref{fig:plot_cric_random_forest_keep_absolute_mask__roc_auc}). Since removing features by the absolute value of their effect on the model is not designed to push the model's output either higher or lower, we measure not the change in the model's output, but rather the change in the model's accuracy. Good explanations will enable the model to achieve high accuracy with only a few important features.

\subsection{Keep absolute (resample)}

The Keep Absolute (resample) metric measures the ability of an explanation method to find the features most important for the model's accuracy. It works just like the Keep Absolute (mask) metric described above, but uses resampling instead of mean masking (Supplementary Figure~\ref{fig:plot_cric_random_forest_keep_absolute_resample__roc_auc}).

\subsection{Keep absolute (impute)}

The Keep Absolute (impute) metric measures the ability of an explanation method to find the features most important for the model's accuracy. It works just like the Keep Absolute (mask) metric described above, but uses imputing instead of mean masking (Supplementary Figure~\ref{fig:plot_cric_random_forest_keep_absolute_impute__roc_auc}).

\subsection{Remove positive (mask)}

The Remove Positive (mask) metric measures the ability of an explanation method to find the features that increased the output of the model the most. It works just like the Keep Positive (mask) metric described above, but instead of keeping the most positive features, it instead removes them, which should lower the model output (Supplementary Figure~\ref{fig:plot_cric_random_forest_remove_positive_mask}).

\subsection{Remove positive (resample)}

The Remove Positive (resample) metric measures the ability of an explanation method to find the features that increased the output of the model the most. It works just like the Keep Positive (resample) metric described above, but instead of keeping the most positive features, it instead removes them, which should lower the model output (Supplementary Figure~\ref{fig:plot_cric_random_forest_remove_positive_resample}).

\subsection{Remove positive (impute)}

The Remove Positive (impute) metric measures the ability of an explanation method to find the features that increased the output of the model the most. It works just like the Keep Positive (impute) metric described above, but instead of keeping the most positive features, it instead removes them, which should lower the model output (Supplementary Figure~\ref{fig:plot_cric_random_forest_remove_positive_impute}).

\subsection{Remove negative (mask)}

The Remove Negative (mask) metric measures the ability of an explanation method to find the features that decreased the output of the model the most. It works just like the Keep Negative (mask) metric described above, but instead of keeping the most negative features, it instead removes them, which should raise the model output (Supplementary Figure~\ref{fig:plot_cric_random_forest_remove_negative_mask}).

\subsection{Remove negative (resample)}

The Remove Negative (resample) metric measures the ability of an explanation method to find the features that decreased the output of the model the most. It works just like the Keep Negative (resample) metric described above, but instead of keeping the most negative features, it instead removes them, which should raise the model output (Supplementary Figure~\ref{fig:plot_cric_random_forest_remove_negative_resample}).

\subsection{Remove negative (impute)}

The Remove Negative (impute) metric measures the ability of an explanation method to find the features that decreased the output of the model the most. It works just like the Keep Negative (impute) metric described above, but instead of keeping the most negative features, it instead removes them, which should raise the model output (Supplementary Figure~\ref{fig:plot_cric_random_forest_remove_negative_impute}).

\subsection{Remove absolute (mask)}

The Remove Absolute (mask) metric measures the ability of an explanation method to find the features most important for the model's accuracy. It works just like the Keep Absolute (mask) metric described above, but instead of keeping the most important features, it instead removes them, which should lower the model's performance (Supplementary Figure~\ref{fig:plot_cric_random_forest_remove_absolute_mask__roc_auc}).

\subsection{Remove absolute (resample)}

The Remove Absolute (resample) metric measures the ability of an explanation method to find the features most important for the model's accuracy. It works just like the Keep Absolute (resample) metric described above, but instead of keeping the most important features, it instead removes them, which should lower the model's performance (Supplementary Figure~\ref{fig:plot_cric_random_forest_remove_absolute_resample__roc_auc}).

\subsection{Remove absolute (impute)}

The Remove Absolute (impute) metric measures the ability of an explanation method to find the features most important for the model's accuracy. It works just like the Keep Absolute (impute) metric described above, but instead of keeping the most important features, it instead removes them, which should lower the model's performance (Supplementary Figure~\ref{fig:plot_cric_random_forest_remove_absolute_impute__roc_auc}).

\section{User study experiments}
\label{sec:methods_user_study}

Here we explore how consistent different attribution methods are with human intuition. While most complex models cannot be easily explained by humans, very simple decision trees can be explained by people. This means we can run user study experiments that measure how people assign credit to input features for simple models that people understand, then compare people's allocations to allocations given by different local explanation methods. If an explanation method gives the same result as humans, then we say that method is consistent with human intuition.

To test human intuition, we use four simple models and ask 33 english speaking individuals in the U.S. on Amazon Mechanical Turk to explain three different samples for each model. Each model is a simple depth-two decision tree that only depends on two binary features. To make the model more approachable we called the model a "sickness score", and used three binary input features: {\it fever}, {\it cough}, and {\it headache}. The models we used were: {\it AND}, {\it OR}, {\it XOR}, and {\it SUM} (study participants were not told these names). The headache feature was never used by any of the models. The fever and cough features always each contributed a linear effect of $+2$ when they were on, but for models other than SUM there were also non-linear effects. For AND $+6$ was given when both features were true. For OR $+6$ was given when either feature was true. For XOR $+6$ was given when either feature was true but not both. For each model we explained three samples: 1) fever false, cough false, headache true; 2) fever false, cough true, headache true; and 3) fever true, cough true, headache true.

Users were asked to allocate blame for a sickness score output value among each of the three input features. No constraints were placed on the input fields used to capture user input, except that the fields were not left blank when there was a non-zero sickness score. This experiment resulted in twelve different consensus credit allocations (Supplementary Figures~\ref{fig:human_and_survey}-\ref{fig:human_sum_survey}). We then took these twelve consensus credit allocations and used them as ground truth for twelve metrics. Each metric is the sum of the absolute differences between the human consensus feature attribution and attribution given by an explanation method (Supplementary Figures~\ref{fig:plot_human_decision_tree_human_and_11}-\ref{fig:plot_human_decision_tree_human_xor_11}). Note that we used a healthy population with independent input features as the background reference dataset (the background is used for computing conditional expectations by the explanation methods).

The results show that all the local explanation methods based on Shapley values agree with the human consensus feature attribution values across all twelve cases. However, the heuristic Saabas method differs significantly from the human consensus in several of the nonlinear cases (Supplementary Figure~\ref{fig:human_tile}). Not only do Shapley values have attractive theoretical guarantees, and strong quantitative performance (Figure~\ref{fig:cric_tile}), but these experiments show they also match human intuition on small example models (Supplementary Figure~\ref{fig:human_tile}). Python implementations of these study scenarios are available online \url{https://github.com/suinleelab/treeexplainer-study}. Performance plots for all user study results are also available in Supplementary Data 1.

\section{SHAP interaction values}
\label{sec:methods_shap_interaction_values}

Here we describe the new richer explanation model we proposed to capture local interaction effects; it is based on the Shapley interaction index from game theory. The Shapley interaction index is a more recent concept than the classic Shapley values, and follows from generalizations of the original Shapley value properties \cite{fujimoto2006axiomatic}. It can allocate credit not just among each player of a game, but among all pairs of players. While standard feature attribution results in a vector of values, one for each feature, attributions based on the Shapley interaction index result in a matrix of feature attributions. The main effects are on the diagonal and the interaction effects on the off-diagonal. If we use the same definition of $f_x$ that we used to get Sabaas values and SHAP values, but with the Shapley interaction index, we get {\it SHAP interaction values} \cite{fujimoto2006axiomatic}, defined as:
\begin{equation}
\label{eq:shapley_interactions}
\Phi_{i,j}(f,x) = \sum_{S \subseteq \mathcal{M} \setminus \{i,j\}} \frac{|S|!(M - |S| - 2)!}{2(M - 1)!} \nabla_{ij}(f, x, S),
\end{equation}
\noindent when $i \ne j$, and
\begin{align}
\label{eq:shapley_interaction_grad}
\nabla_{ij}(f, x, S) &= f_x(S \cup \{i,j\}) - f_x(S \cup \{i\}) - f_x(S \cup \{j\}) + f_x(S) \\
&= f_x(S \cup \{i,j\}) - f_x(S \cup \{j\}) - [f_x(S \cup \{i\}) - f_x(S)]. \label{eq:shapley_interaction_grad2}
\end{align}
\noindent where $\mathcal{M}$ is the set of all $M$ input features. In Equation~\ref{eq:shapley_interactions} the SHAP interaction value between feature $i$ and feature $j$ is split equally between each feature so $\Phi_{i,j}(f, x) = \Phi_{j,i}(f, x)$ and the total interaction effect is $\Phi_{i,j}(f, x) + \Phi_{j,i}(f, x)$. The main effects for a prediction can then be defined as the difference between the SHAP value and the off-diagonal SHAP interaction values for a feature:
\begin{align}
\label{eq:shapley_interaction_main}
\Phi_{i,i}(f, x) &= \phi_i(f, x)  - \sum_{j \ne i} \Phi_{i,j}(f, x)
\end{align}
\noindent We then set $\Phi_{0,0}(f, x) = f_x(\emptyset)$ so $\Phi(f, x)$ sums to the output of the model:
\begin{align}
\label{eq:shapley_interaction_main2}
\sum_{i=0}^M \sum_{j=0}^M \Phi_{i,j}(f, x) &= f(x)
\end{align}


While SHAP interaction values could be computed directly from Equation~\ref{eq:shapley_interactions}, we can leverage Algorithm~\ref{alg:tree_shap} to drastically reduce their computational cost for tree models. As highlighted in Equation~\ref{eq:shapley_interaction_grad2}, SHAP interaction values can be interpreted as the difference between the SHAP values for feature $i$ when feature $j$ is present and the SHAP values for feature $i$ when feature $j$ is absent. This allows us to use Algorithm~\ref{alg:tree_shap} twice, once while ignoring feature $j$ as fixed to present, and once with feature $j$ absent. This leads to a run time of $O(TMLD^2)$ when using Tree SHAP, since we repeat the process for each feature. A full open-source implementation is available online \url{https://github.com/suinleelab/treeexplainer-study}.

SHAP interaction values have uniqueness guarantees similar to SHAP values \cite{fujimoto2006axiomatic}, and allow the separate consideration of main and interaction effects for individual model predictions. This separation can uncover important interactions captured by tree ensembles that might otherwise be missed (Section~\ref{sec:interaction_effects}). While previous work has used global measures of feature interactions \cite{lunetta2004screening,jiang2009random}, to the best of our knowledge SHAP interaction values represent the first local approach to feature interactions beyond simply listing decision paths.

\section{Model summarization experiments}
\label{sec:methods_model_summary}

Here, we describe in more detail the model summarization results introduced in Section~\ref{sec:model_summary}. One of the most basic ways to understand a model is to display the global importance of each feature, often as a bar chart (Figure~\ref{fig:overview_and_dependence}A left). For tree-based models such as gradient boosted trees a basic operation supported by any implementation is providing the total ``Gain'' over all splits for a feature as a global measure of feature importance \cite{friedman2001elements}. Computing SHAP values across a whole dataset, we can improve on this basic task of displaying global feature importance in two ways: 1) By averaging the SHAP values across a dataset, we can get a single global measure of feature importance that retains the theoretical guarantees of SHAP values. This avoids the troubling inconsistency problems of the classic Gain method (Supplementary Figure~\ref{fig:and_trees_simple}), and also provides better global feature selection power than either Gain or permutation testing (Supplementary Figure~\ref{fig:feature_selection}). This is particularly important since tree-based models are often used in practice for feature selection \cite{genuer2010variable,pan2009feature}. 2) A limitation of traditional global explanations for trees is that reporting a single number as the measure of a feature's importance conflates two important and distinct concepts: the magnitude of an effect, and the prevalence of an effect. By plotting many local explanations in a beeswarm-style {\it SHAP summary plot} we can see both the magnitude and prevalence of a feature's effect (Figure~\ref{fig:overview_and_dependence}A right), and by adding color, we can also display the effect's direction.

The value of summary plots based on many local explanations is illustrated in Figure~\ref{fig:overview_and_dependence}A for the NHANES I mortality dataset, where an XGBoost cox proportional hazards model was trained using hyper-parameters optimized on a validation dataset (\ref{sec:methods_model_training_details}), its predictions were explained using TreeExplainer, and then compiled into a summary plot. On the left of Figure~\ref{fig:overview_and_dependence}A is a familiar bar-chart, not based on a typical heuristic global measure of feature importance, but on the average magnitude of the SHAP values. On the right of Figure~\ref{fig:overview_and_dependence}A is a set of beeswarm plots where each dot corresponds to an individual person in the study. Each person has one dot for each feature, where the position of the dot on the x-axis corresponds to the impact that feature has on the model's prediction for that person (as measured by the prediction's SHAP value for that feature). When multiple dots land at the same x position they pile up to show density.

Unsurprisingly, the dominating factor for risk of death in the U.S. in the 1970s (which is when the NHANES I data was collected) is age. By examining the top row of the summary plot we can see that a high value for the age feature (red) corresponds to a large increase in the log hazard ratio (i.e., a large positive SHAP value), while a low value for age (blue) corresponds to a large decrease in the log hazard ratio (i.e., a large negative SHAP value). The next most important feature for mortality prediction is sex, with men having about a 0.6 increase in log-hazards relative to women, which corresponds to about 7 years of change in the age feature. Interestingly, the impact of being a man vs. woman on the model's output is not constant across individuals, as can be seen by the spread of the blue and red dots. The differences of effect within the same sex are due to interactions with other features in the model that modulate the importance of sex for different individuals (we explore this in more detail in \ref{sec:methods_feature_dependence}).

An important pattern in the mortality data revealed by the summary plot, but not by classic global feature importance, is that features with a low global importance can still be some of the most important features for a specific person (Figure~\ref{fig:overview_and_dependence}A). Blood protein, for example, has a low global impact on mortality, as indicated by its small global importance (Figure~\ref{fig:overview_and_dependence}A left). However, for some individuals, high blood protein has a very large impact on their mortality risk, as indicated by the long tail of red dots stretching to the right in the summary plot (Figure~\ref{fig:overview_and_dependence}A right). This trend of rare high magnitude effects is present across many of the features, and always stretches to the right. This reflects the fact that there are many ways to die abnormally early when medical measurements are out of range, but there not many ways to live abnormally longer (since there are no long tails stretching to the left). Summary plots combine many local explanations to provide a more detailed global picture of the model than a traditional list of global feature importance values. In many cases this more detailed view can provide useful insights, as demonstrated in our medical datasets  (Figure~\ref{fig:overview_and_dependence}A, Supplementary Figures~\ref{fig:kidney_summary}~and~\ref{fig:hospital_summary}).

\section{Feature dependence experiments}
\label{sec:methods_feature_dependence}

Here we describe in more detail the feature dependence results introduced in Section~\ref{sec:feature_dependence}. Just as summary plots provide richer information than traditional measures of global feature importance (Figure~\ref{fig:overview_and_dependence}A), dependence plots based on SHAP values can provide richer information than traditional partial dependence plots by combining the local importance of a single feature across many samples (Figure~\ref{fig:overview_and_dependence}B-G). Plotting a feature's value on the x-axis vs. the feature's SHAP value on the y-axis produces a {\it SHAP dependence plot} that shows how much that feature impacted the prediction of every sample in the dataset.

For the mortality model a SHAP dependence plot reproduces the standard risk inflection point known for systolic blood pressure between 120 mmHg and 140 mmHg \cite{sprint2015randomized} (Figure~\ref{fig:overview_and_dependence}B). This highlights the value of a flexible model that is able to capture non-linear 
patterns, and also shows how interaction effects can have a significant impact on a feature. Standard partial dependence plots capture the general trends, but do not provide any information about the heterogeneity present within people that have the same measured systolic blood pressure (Supplemental Figure \ref{fig:nhanes_pdp_sbp}). Each dot in a SHAP dependence plot represents a person, and the vertical dispersion of the dots is driven by interaction effects with other features in the model. Many different individuals have a recorded blood pressure of 180 mmHg in the mortality dataset, but the impact that measurement has on their log-hazard ratio varies from 0.2 to 0.6 because of other factors that differ among these individuals. We can color by another feature to better understand what drives this vertical dispersion. Coloring by age in Figure~\ref{fig:overview_and_dependence}B explains most of the dispersion, meaning that early onset high blood pressure is more concerning to the model than late onset high blood pressure.

\nisha{For the chronic kidney disease (CKD) model a SHAP dependence plot again clearly reveals the previously documented non-linear inflection point for systolic blood pressure risk, but in this dataset the vertical dispersion from interaction effects appears to be partially driven by differences in blood urea nitrogen (Figure~\ref{fig:overview_and_dependence}E). Correctly modeling blood pressure risk is important, since blood pressure control in select CKD populations may delay progression of kidney disease and reduce the risk of cardiovascular events. Lower blood pressure has been found to slow progression of CKD and decrease overall cardiovascular mortality in some studies \cite{walker1992renal,rosansky1990association,shulman1989prognostic,perry1995early,sarnak2005effect,bakris2003effects}. For example, long-term follow-up of the Modification of Diet in Renal Disease (MDRD) study suggested that lower systolic blood pressure led to improved kidney outcomes in patients with CKD \cite{sarnak2005effect}.  The SPRINT trial, which randomized patients to treatment to systolic blood pressure <120 vs. <140 mmHg found that treatment to lower systolic blood pressure was associated with lower risk of cardiovascular disease; though no difference was seen in rates of CKD progression between the treatment groups \cite{sprint2015randomized,cheung2017effects}.}

\section{Interaction effect experiments}
\label{sec:methods_interaction_effects}

Here we describe in more detail the interaction effect results introduced in Section~\ref{sec:interaction_effects}. As mentioned in Section~\ref{sec:interaction_effects}, using SHAP interaction values we can decompose the impact of a feature on a specific sample into a main effect and interaction effects with other features. 
SHAP interaction values allow pairwise interaction effects to be measured at an individual sample level. By combining many such samples we can then observe patterns of interaction effects across a dataset.

In the mortality dataset, we can compute the SHAP interaction values for every sample and then decompose the systolic blood pressure dependence plot into two components. One component contains the main effect of systolic blood pressure and interaction effects with features that are not age, and the other component is the (symmetric) SHAP interaction value of systolic blood pressure and age. The main effect plus the interaction effects equals the original SHAP value for a sample, so we can add the y-values of Figure~\ref{fig:overview_and_dependence}C and Figure~\ref{fig:overview_and_dependence}D to reproduce Figure~\ref{fig:overview_and_dependence}B. Interaction effects are visible in dependence plots through vertical dispersion of the samples, and coloring can often highlight patterns likely to explain this dispersion, but it is necessary to compute the SHAP interaction values to confirm the causes of vertical dispersion. In the systolic blood pressure dependence plot from the mortality model the vertical dispersion is primarily driven by an interaction with age (Figure~\ref{fig:overview_and_dependence}D), as suggested by the original coloring (Figure~\ref{fig:overview_and_dependence}B).

\nisha{Plotting interaction values can reveal interesting relationships picked up by complex tree-ensemble models that would otherwise be hidden, such as the interaction effect between age and sex described in Section~\ref{sec:interaction_effects}.}
%
\nisha{In the chronic kidney disease model an interesting interaction effect is observed between `white blood cells' and `blood urea nitrogen' (Figure~\ref{fig:overview_and_dependence}F). This means that high white blood cell counts are more concerning to the model when they are accompanied by high blood urea nitrogen. Recent evidence has suggested that inflammation may be an important contributor to loss of kidney function \cite{bowe2017association,fan2017white}. While there are numerous markers of inflammation, white blood cell count is one of the most commonly measured clinical tests available, and this interaction effect supports the notion that inflammation may interact with high blood urea nitrogen to contribute to faster kidney function decline.}

\section{Model monitoring experiments}
\label{sec:methods_model_monitoring}

Here we describe in more detail the model monitoring results introduced in Section~\ref{sec:model_monitoring}. As noted in Section~\ref{sec:model_monitoring}, deploying machine learning models in practice is challenging because they depend on a large range of input features, any of which could change after deployment and lead to degraded performance. Finding problems in deployed models is difficult because the result of bugs is typically not a software crash, but rather a change in an already stochastic measure of prediction performance. It is hard to determine when a change in a model's performance is due to a feature problem, an expected generalization error, or random noise. Because of this, many bugs in machine learning pipelines can go undetected, even in core software at top tech companies \cite{zinkevich2017rules}.

A natural first step when debugging model deployments is to identify which features are causing problems. Computing the SHAP values of a model's loss function directly supports this by decomposing the loss among the model's input features. This has two important advantages over traditional model loss monitoring: First, it assigns blame directly to the problematic features so that instead of looking at global fluctuations of model performance, one can see the impact each feature has on the performance. Second, by focusing on individual features we have higher power to identify problems that would otherwise be hidden in all the other fluctuations of the overall model loss. Improving our ability to monitor deployed models is an important part of enabling the safe use of machine learning in medicine.

As mentioned in Section~\ref{sec:model_monitoring}, to simulate a model deployment we used the hospital procedure duration prediction dataset. It contains four years of data from two large hospitals. We used the first year of data for training and ran the model on the next three years of data to simulate a deployment. This is a simple batch prediction task, and so is far less prone to errors than actual real-time deployments, yet even here we observed dataset issues that had not been previously detected during data cleaning (Figure~\ref{fig:key_monitoring_plots}).

Figure~\ref{fig:key_monitoring_plots}A shows the smoothed loss of the procedure duration model over time, and represents the type of monitoring used widely in industry today. There is a clear increase in the model's error once we switch to the test set, which was expected; then there are short spikes in the error that are hard to differentiate from random fluctuations. To test the value of using monitoring plots based on local explanations we intentionally swapped the labels of operating rooms 6 and 13 two-thirds of the way through the dataset. This is meant to represent the type of coding change that can often happen during active development of a real-time machine learning pipeline. If we look at the overall loss of the model's predictions two-thirds of the way through we see no indication that a problem has occurred (Figure~\ref{fig:key_monitoring_plots}A), which means this type of monitoring would not be able to catch the issue. In contrast, the {\it SHAP monitoring plot} for the room 6 feature clearly shows when the room labeling error begins (Figure~\ref{fig:key_monitoring_plots}B). The y-axis of the SHAP monitoring plot is the impact of the room 6 feature on the loss. About two-thirds of the way through the data we see a clear shift from negative values (meaning using the feature helps accuracy) to positive values (meaning using the feature hurts accuracy). The impact on accuracy is substantial, but because procedures occurring in room 6 and 13 are just a small fraction of the overall set of procedures, the increased error is invisible when looking at the overall loss (Figure~\ref{fig:key_monitoring_plots}A).

In addition to finding our intentional error, SHAP monitoring plots also revealed problems that were already present in the dataset. Two of these are shown Figure~\ref{fig:key_monitoring_plots}C and D.

\bala{In Figure~\ref{fig:key_monitoring_plots}C we can see a spike in error for the general anesthesia feature shortly after the deployment window begins. This spike represents a transient configuration issue in our hospital revealed by a SHAP monitoring plot. It corresponds to a subset of procedures from a single hospital where the anesthesia type data field was left blank. After going back to the hospital system we found that this was due to a temporary electronic medical record configuration issue whereby the link between the general anesthesia induction note and the anesthesia type got broken. This prevented the anesthesia type data field from being auto-populated with  “general anesthesia” when the induction note was completed. This is exactly the type of configuration issue that impacts machine learning models deployed in practice, and so needs to be detected during model monitoring.}

\jordan{In Figure~\ref{fig:key_monitoring_plots}D we see an example, not of a processing error, but of feature drift over time. The atrial fibrillation feature denotes if a patient is undergoing an atrial fibrillation ablation procedure. During the training period, and for some time into deployment, using the atrial fibrillation feature lowers the loss. But then over the course of time the feature becomes less and less useful until it begins hurting the model by the end of the deployment window. Based on the SHAP monitoring plot for atrial fibrillation we went back to the hospital to determine possible causes. During an atrial fibrillation ablation, the goal is to electrically isolate the pulmonary veins of the heart using long catheters placed from the groin area. Traditionally, the procedure was completed with a radiofrequency ablation catheter delivering point-by-point lesions to burn the left atrium around the pulmonary veins. During the deployment window, the hospital began to use the second generation cryoballoon catheter (Arctic Front Advance, Medtronic Inc., Minneapolis, MN), which freezes the tissue and has been demonstrated to have a shorter procedure duration compared to radiofrequency ablation \cite{Kuck2016}. At the same time, there were improvements in radiofrequency ablation catheter technology including the use of contact force sensing which allowed the operator to determine how strongly the catheter was touching the left atrial wall. This technology ensures that ablation lesions are delivered with enough force to create significant lesion size. With noncontact force catheters, the operator may think the catheter is touching the atrial wall but it may, in actuality, simply be floating nearby. Contact force sensing is also associated with shorter procedure times \cite{Marijon2014,Kimura2014}. Cryoballoon versus radiofrequency ablation is chosen based on patient
characteristics and physician preference. Lastly, during this time there were staffing changes including the use of specialized electrophysiology technologists which decreased procedural preparation time. All of these changes
led to a significant decrease in atrial fibrillation ablation procedural and in-room time during the test period (Supplementary Figure~\ref{fig:afib_duration_plot}), which translated into high model error attributable to the atrial fibrillation feature.  We quantified the significance of the SHAP monitoring plot trend using an independent t-test between atrial fibrillation ablation procedures that appear in the first 30,000 samples of the test time period vs. those that appear in the last 60,000 samples of the test time period. This lead to a P-value of $5.4 \times 10^{-19}$. The complexity of the reasons behind the feature drift in  Figure~\ref{fig:key_monitoring_plots}D illustrate why it is so difficult to anticipate how assumptions depended on by a model might break in practice. Using SHAP values to monitor the loss of a model allow us to retroactively identify how model assumptions may be changing individually for each input feature, even if we cannot a priori anticipate which features are likely to cause problems.}

Explaining the loss of a model is not only useful for monitoring over time, it can also be used in dependence plots to understand how a specific feature helps improve model performance. For hospital procedure duration prediction we can plot the dependence between the time of the day feature and it's impact on the model's output. The time of day feature indicates when a procedure started, and is 
particularly effective at reducing the model's loss just after 7:30am and 8:30am in the morning (Supplementary Figure~\ref{fig:hospital_time_of_day_loss_dependence}). These two times are when long-running elective surgeries are scheduled in this hospital system, so the dependence plot reveals that the model is using the time of the day to detect routine long-running surgeries.

Current practice is to monitor the overall loss of the model, and also potentially monitor the statistics of the features for changes over time. This is problematic because just monitoring the overall loss of the model can hide many important problems, while monitoring the changes in the statistics of features is essentially an unsupervised anomaly detection problem that is prone to both false positives and false negatives. Swapping the names of operating rooms that are used equally often would be invisible to such an unsupervised method. By directly attributing the loss to the features we can highlight precisely which features are impacting the loss and by how much. When changes show up in the monitoring plot, those changes are in the units of the model's loss and so we can quantify how much it is impacting our performance. These monitoring plots represent a compelling way that many local explanations can be combined to provide richer and more actionable insights into model behavior than the current state of the art.

\section{Local explanation embedding experiments}
\label{sec:methods_embedding}

Here we describe in more detail the local explanation embedding results introduced in Section~\ref{sec:embedding}. \nisha{
There are two challenges with standard unsupervised clustering methods: 1) 
The distance metric does not account for the discrepancies among the units and meaning of features (e.g., units of years vs. units of cholesterol), and simple standardization is no guarantee the resulting numbers are comparable.
2) Even after a distance metric is defined, 
there is no way for an unsupervised approach to know which features are relevant for an outcome of interest, and so should be weighted more strongly.
Some applications might seek to cluster patients by groups relating to kidney disease, and another by groups relating to diabetes. But given the same feature set, unsupervised clustering will give the same results in both cases.
}

\nisha{As mentioned in Section~\ref{sec:embedding}, we can address both of the above problems in traditional unsupervised clustering by using local explanation embeddings to embed each sample into a new ``explanation space.'' If we then run clustering in this new space, we will get a {\it supervised clustering} where samples are grouped together that have the same model output for the same reason. Since SHAP values have the same units as the model's output they are all comparable within a model, even if the original features were not comparable. Supervised clustering naturally accounts for the differing scales of different features, only highlighting changes that are relevant to a particular outcome.}
\nisha{Running hierarchical supervised clustering using the mortality model results in groups of people that share a similar mortality risk for similar reasons (Figure~\ref{fig:clustering_and_embedding}A). The heatmap in Figure~\ref{fig:clustering_and_embedding}A uses the leaf order of a hierarchical agglomerative clustering based on the SHAP values of each sample. On the left are young people, in the middle are middle aged people, and on the right are older people. Within these broad categories many smaller groups are of particular interest, such as people with early onset high-blood pressure, young people with inflammation markers, and underweight older people. Each of these recapitulate known high risk groups of people, which would not have been captured by a simple unsupervised clustering approach (Supplementary Figure~\ref{fig:mortality_clustering_unsupervised}).}

\nisha{In addition to making explicit clusters, we can also use dimensionality reduction to directly visualize the explanation space embedding produced by the SHAP values. This gives a continuous representation of the primary directions of model output variation in the dataset. For the kidney disease dataset, the top two principal components of the explanation embedding highlight two distinct risk factors for disease progression (Figures~\ref{fig:clustering_and_embedding}B-D). The first principal component aligns with blood creatinine levels, which are used to compute the estimated glomerular filtration rate (eGFR) of the kidneys. High levels of creatinine are a marker of lower eGFR and are the primary test for detection of kidney disease. The second principal component aligns with higher urine protein concentration in the urine. Quantified as the urine protein to urine creatinine ratio (PCR), this is a marker of kidney damage and is used in conjunction with eGFR to quantify levels of kidney disease.
If we color the explanation space embedding by the risk of kidney disease progression, we see a roughly continuous increase in risk from left to right (Figure~\ref{fig:clustering_and_embedding}B). This is largely explained by a combination of the two orthogonal risk directions described above: One direction follows the blood creatinine level feature (which determine the eGFR) (Figure~\ref{fig:clustering_and_embedding}C) and the other direction follows the urine protein feature (Figure~\ref{fig:clustering_and_embedding}D). Several of the other top features in the chronic kidney disease model also align with these two orthogonal embedding directions (Supplementary Figure~\ref{fig:clustering_and_embedding}B-D). It is well established that eGFR and urine PCR are the strongest predictors of progression to end-stage renal disease among patients with chronic kidney disease \cite{matsushita2015estimated,chronic2010association}. Physiologically, eGFR and PCR are likely complimentary, yet distinct kidney disease markers. Figure~\ref{fig:clustering_and_embedding}B-D shows that eGFR and PCR each identify unique individuals at risk of end-stage renal disease; thus confirming that clinically they should be measured in parallel. This type of insight into the overall structure of kidney risk is not at all apparent when just looking at a standard unsupervised embedding (Supplementary Figure~\ref{fig:kidney_raw_pca}).}

\clearpage

\section*{Supplementary Figures}

\renewcommand{\figurename}{Supplementary Figure}
\setcounter{figure}{0}


\begin{figure*}
  \centering
  \includegraphics[width=0.9\textwidth]{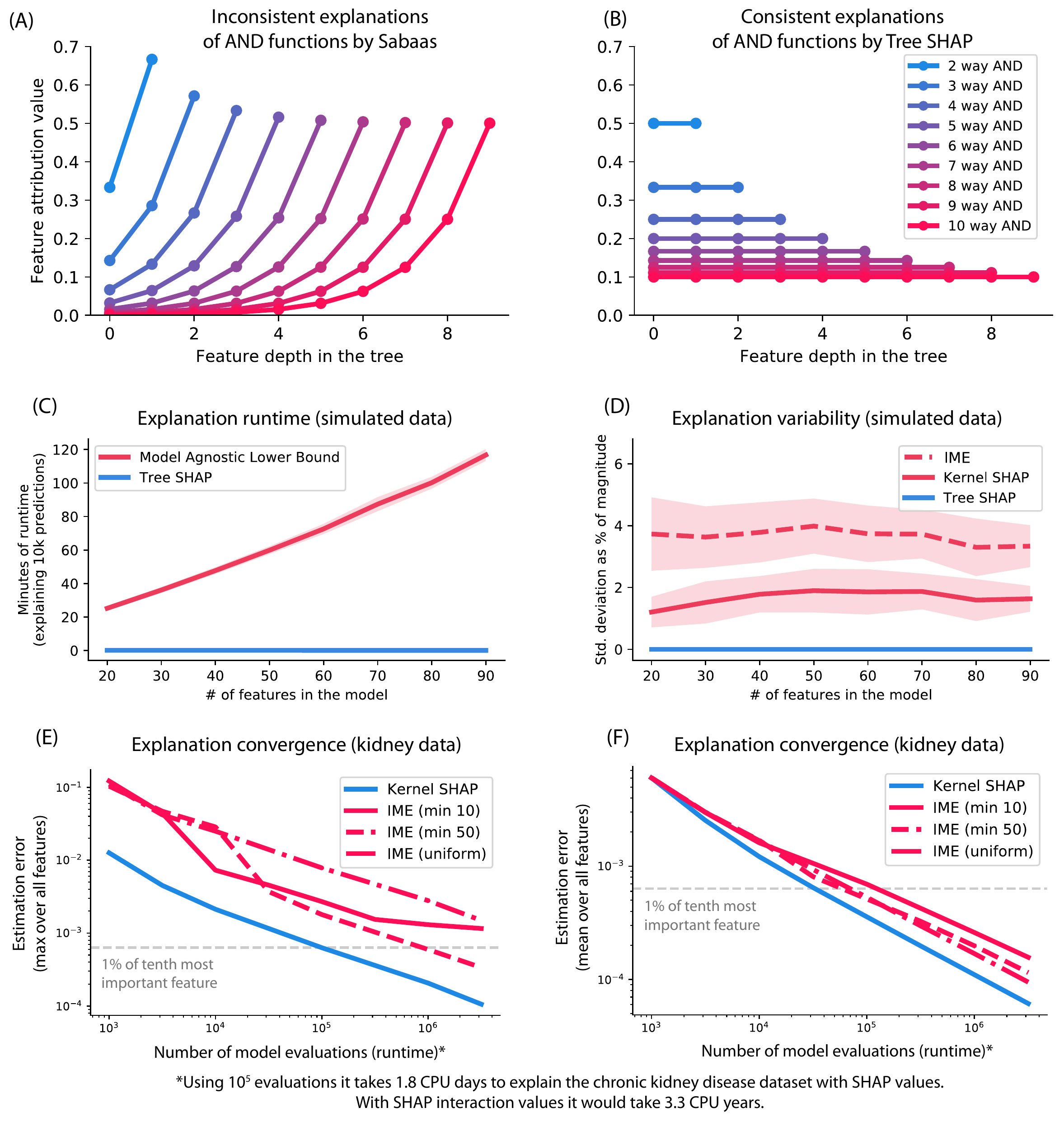}
  \caption{
  {\bf (A-B) TreeExplainer avoids the consistency problems of previous tree-specific approaches.} For tree models that represent a multi-way AND function the Saabas method gives little credit to features near the root, while Tree SHAP evenly distributes credit among all the features involved in the AND function.  {\bf (C-F) TreeExplainer represents a dramatic performance improvement over model agnostic approaches.} All model agnostic approaches rely on sampling, so their runtime is lower bounded by the run-time to evaluate the model, and they always have some sampling variability. TreeExplainer's Tree SHAP algorithm runs thousands of times faster, and has no sampling variability since it is exact. We consider both the Kernel SHAP \cite{lundberg2017unified} and IME \cite{vstrumbelj2014explaining} model agnostic estimators for Shapley values. (C-D) Using simulated data we can train XGBoost models over datasets of different sizes and observe that the number of samples required to maintain estimates with constant variance grows linearly with the number of features in the model. The reported runtime is a lower bound since it only accounts for the time to execute the model, not execute the explanation method itself.
  (E-F) As we increase the number of model evaluations used by model agnostic approaches we converge towards the exact solution. However, achieving low variance estimates requires days or years of CPU time even on the smallest of our medical datasets. We also only used a single background reference sample for computing conditional expectations in the model agnostic methods (instead of an entire dataset) so these represent lower bounds on the runtime. The estimation error on the y-axis represents the difference from the exact solution with a single background reference sample. Details of the experimental setup for C-F are described in \ref{sec:methods_agnostic_convergence}.} 
  \label{fig:tree_shap_performance}
\end{figure*}

\begin{figure*}[!ht]
  \centering
  \includegraphics[width=1.0\textwidth]{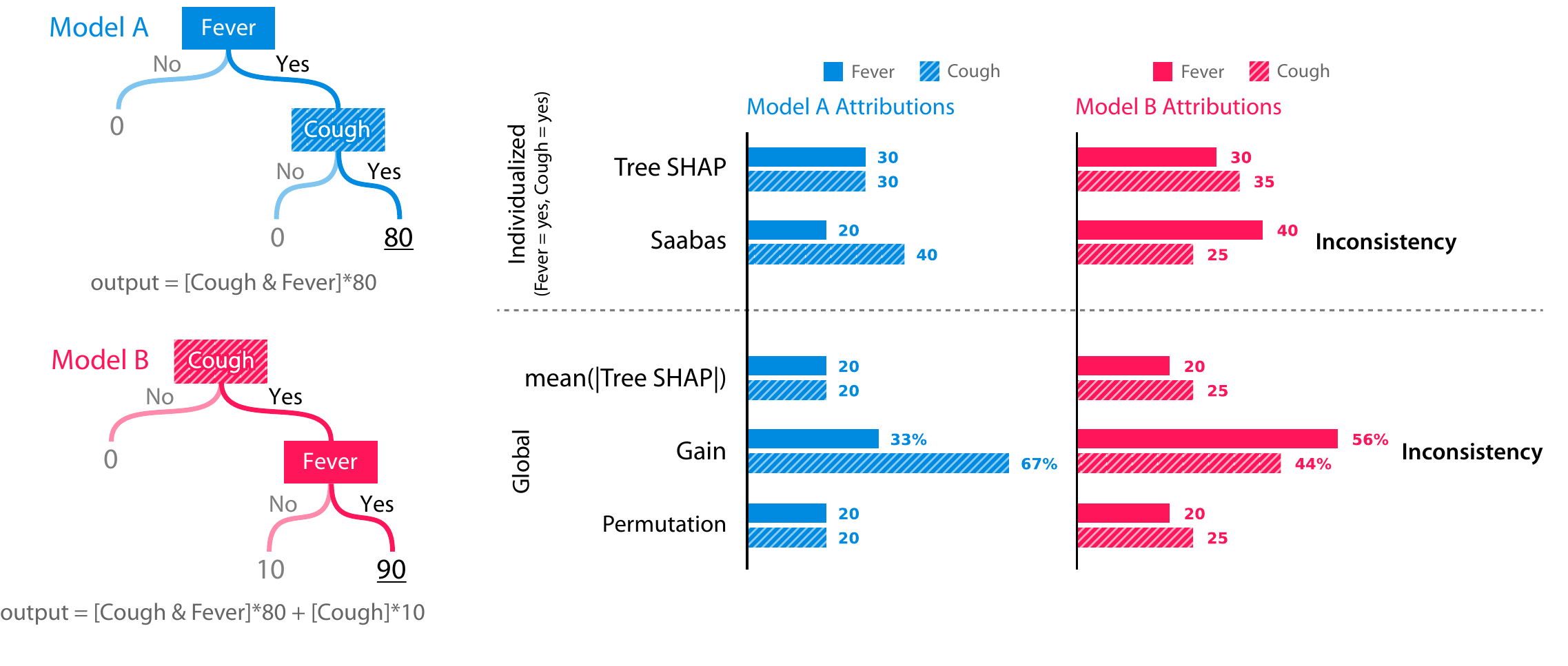}
  \caption[Simple example of an inconsistency]{{\bf Two simple tree models that demonstrate inconsistencies in the Saabas and gain feature attribution methods.} The Cough feature has a larger impact in Model B than Model A, but is attributed less importance in Model B. Similarly, the Cough feature has a larger impact than Fever in Model B, yet is attributed less importance. The individualized attributions explain a single prediction of the model (when both Cough and Fever are Yes) by allocating the difference between the expected value of the model's output (20 for Model A, 25 for Model B) and the current output (80 for Model A, 90 for Model B). Inconsistency prevents the reliable comparison of feature attribution values. The global attributions represent the overall importance of a feature in the model. ``Gain'' is the most common way of measuring feature importance in trees and is the sum of the reductions in loss that come from all splits using that feature. ``Permutation`` is the change in the model's accuracy when a single feature is permuted.
  }
  \label{fig:and_trees_simple}
\end{figure*}

\begin{figure*}[!ht]
  \centering
  \includegraphics[width=0.9\textwidth]{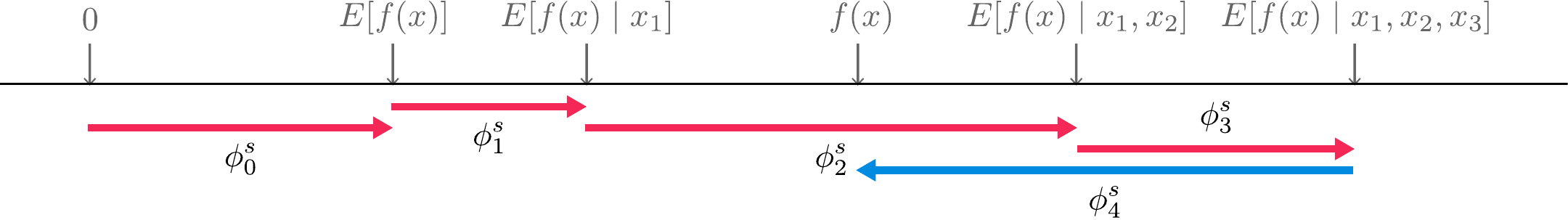}
  \caption{The Sabaas values, $\phi^s_i$, attribute feature importance by measuring differences in conditional expectations along the order defined by the decision path. This is very similar to SHAP (\underline{SH}apley \underline{A}dditive ex\underline{P}lanation) values, $\phi_i$, except SHAP values result from averaging over all possible orderings. This is important since for non-linear functions the order in which features are introduced matters. Proofs from game theory show that averaging over all orderings is the only possible consistent approach where $\sum_{i=0}^M \phi_i = f(x)$ (\ref{sec:methods_tree_explainer}).}
  \label{fig:number_line}
\end{figure*}


\begin{figure*}
  \centering
  \makebox[\textwidth][c]{\includegraphics[width=1.2\textwidth]{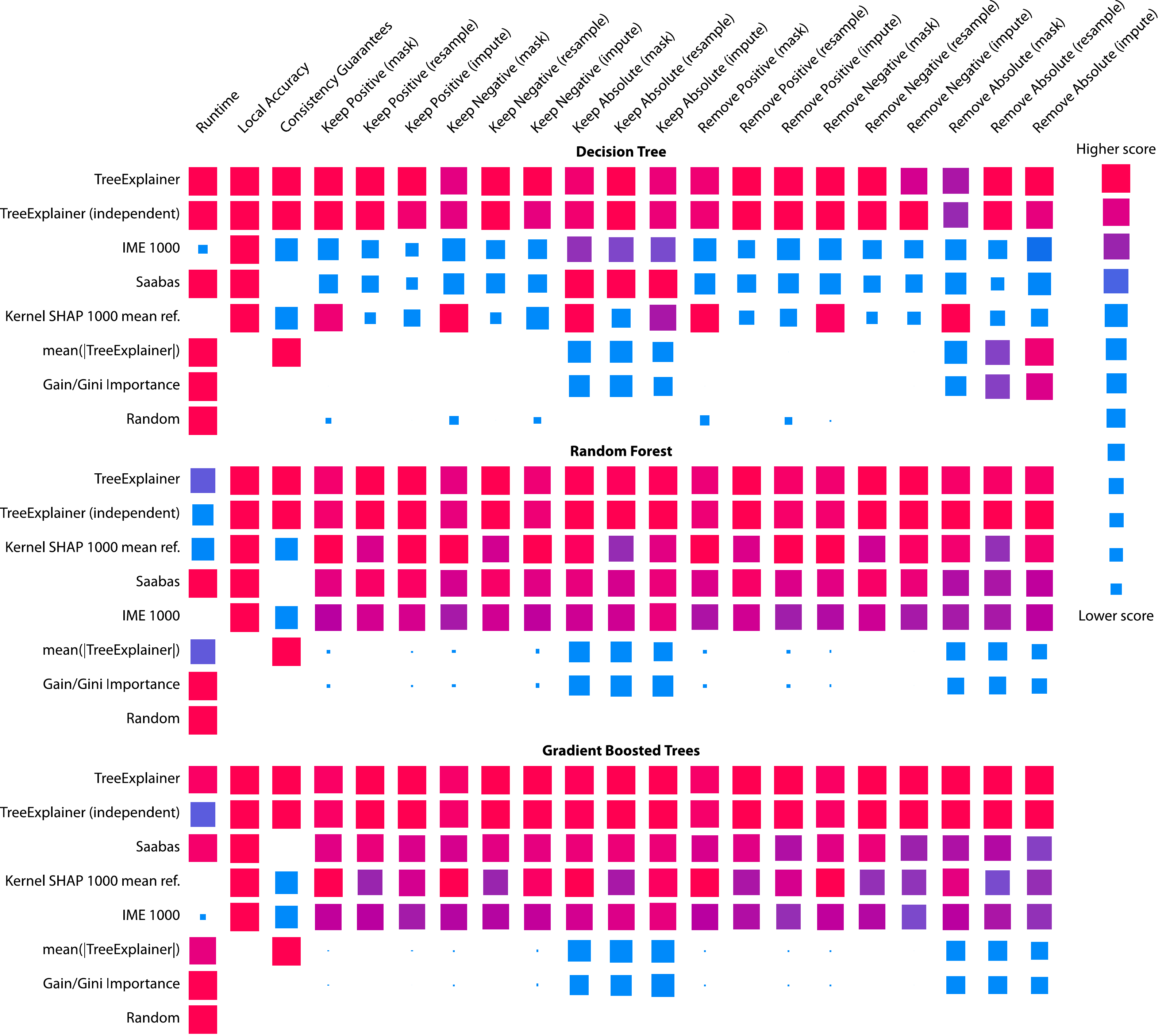}}
  \caption{{\bf Explanation method performance across thirteen different evaluation metrics and three regression models in a simulated dataset with 60 features divided into tightly correlated groups.} Each tile represents the performance of a feature attribution method on a given metric for a given model. Within each model the columns of tiles are scaled between the minimum and maximum value, and methods are sorted by their overall performance. Some of these metrics have been proposed before and others are new quantitative measures of explanation performance that we are introducing (see Methods).
}
  \label{fig:corrgroups60_tile}
\end{figure*}

\begin{figure*}
  \centering
  \makebox[\textwidth][c]{\includegraphics[width=1.2\textwidth]{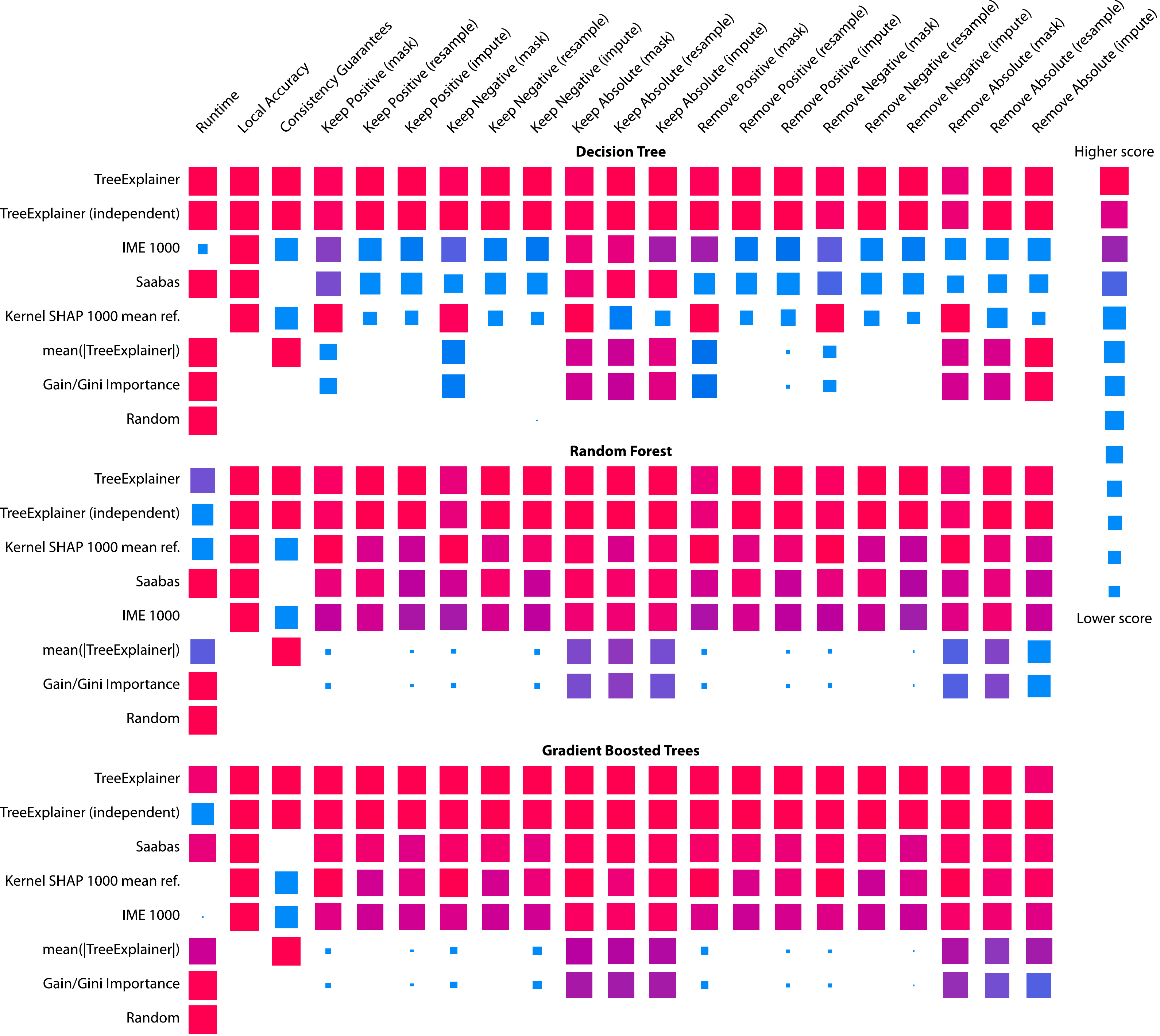}}
  \caption{{\bf Explanation method performance across thirteen different evaluation metrics and three regression models in a simulated dataset with 60 independent features.} Each tile represents the performance of a feature attribution method on a given metric for a given model. Within each model the columns of tiles are scaled between the minimum and maximum value, and methods are sorted by their overall performance. Some of these metrics have been proposed before and others are new quantitative measures of explanation performance that we are introducing (see Methods).}
  \label{fig:independentlinear60_tile}
\end{figure*}

\begin{figure*}
  \centering
  \includegraphics[width=1.0\textwidth]{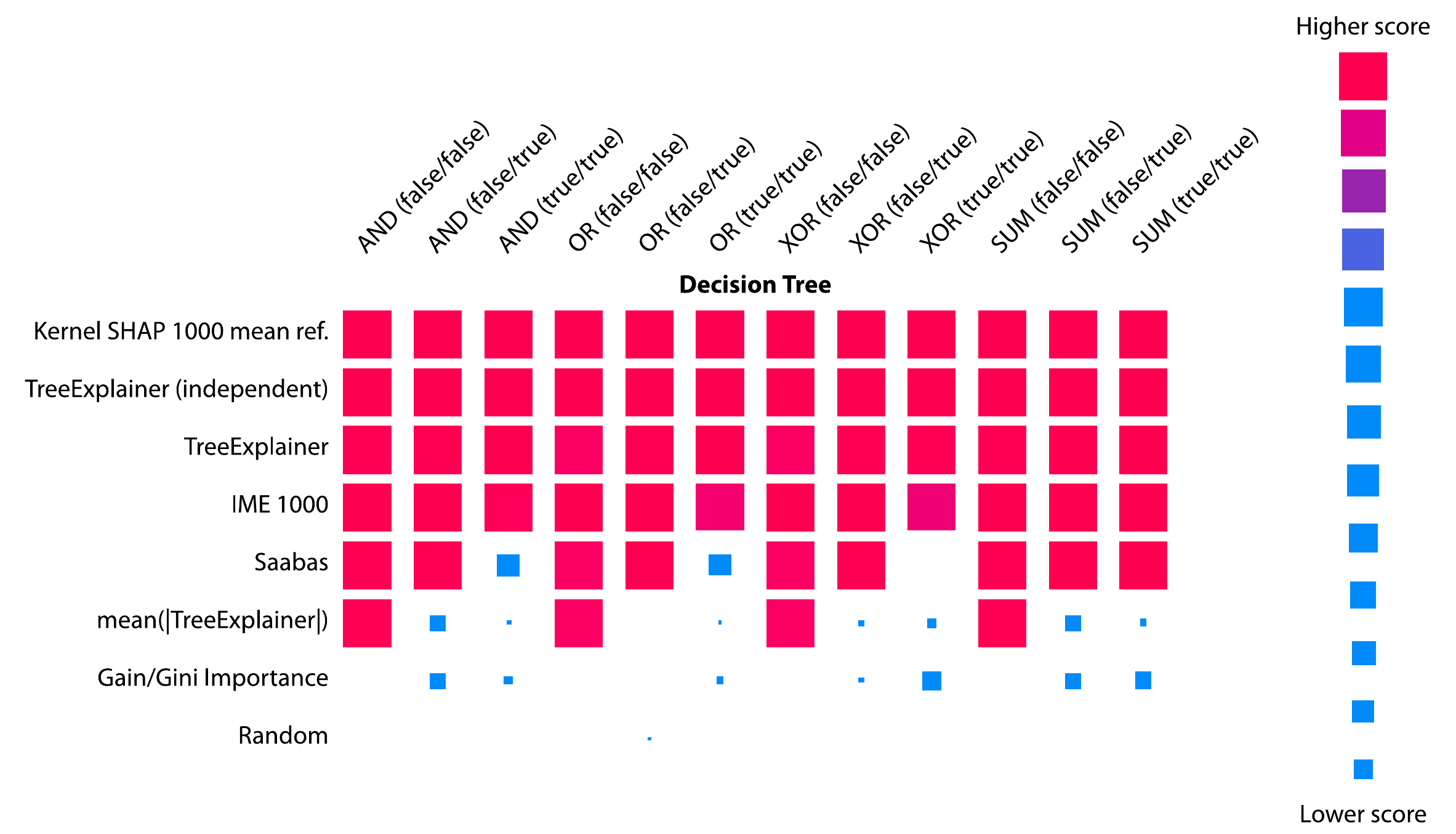}
  \caption{{\bf Shapley value based methods agree with human intuition.} We measured the most intuitive way to assign credit among input features by asking users to assign credit for predictions from four simple models (three predictions per model). The consensus allocation observed from a user study was then used as the ground truth and compared with the allocations given by different explanation methods. The sum of absolute differences was used to quantify agreement. All the Shapley value based methods had nearly perfect agreement across all the scenarios. The raw allocations for the cases where Saabas fails are shown in Supplementary Figures~\ref{fig:plot_human_decision_tree_human_and_11}-\ref{fig:plot_human_decision_tree_human_xor_11} (\ref{sec:methods_user_study}). Note that since these small (human understandable) models have only three features, model agnostic Shapley methods are accurate and so comparable with TreeExplainer.}
  \label{fig:human_tile}
\end{figure*}


\begin{figure*}
  \centering
  \includegraphics[width=1.0\textwidth]{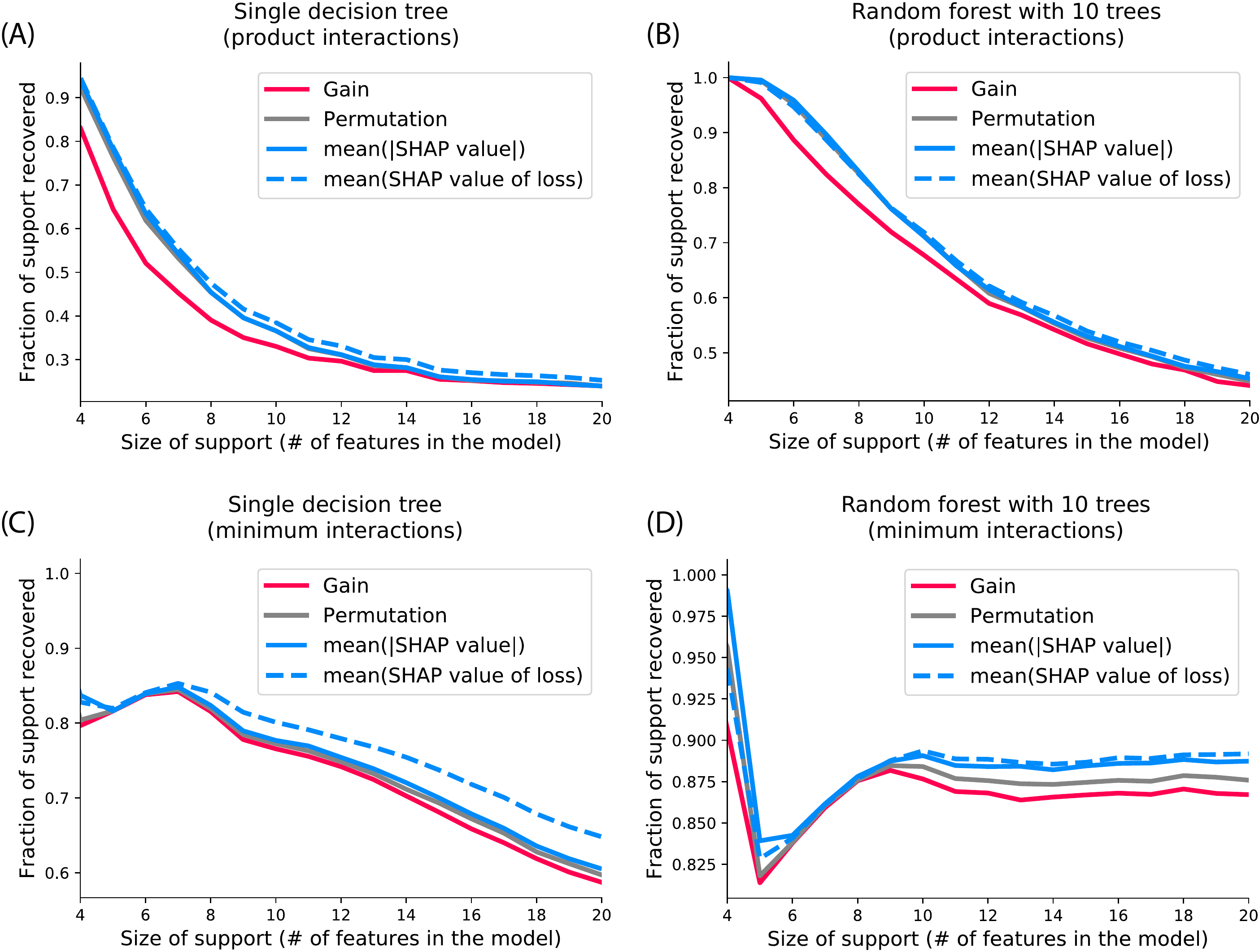}
  \caption{{\bf SHAP values can be combined to provide better feature selection power than traditional approaches.} Feature selection methods based on trees or ensembles of trees typically either use the total gain (reduction in train error when splitting) or a permutation test (scramble each feature and observe the change in error) to do feature selection. 
  To compare feature selection power we reproduce the same simulated independent features setting as \cite{kazemitabar2017variable} (but with strong 3rd order interactions) and compare the mean SHAP value of the loss vs. mean SHAP value magnitude vs. gain vs. permutation for feature selection. We also repeat the experiment replacing the product style interactions from \cite{kazemitabar2017variable} with minimum functions. For both a single tree (A,C) and an ensemble of ten trees (B,D) the SHAP values provide a better ranking of features. Perhaps because, unlike gain, they guarantee consistency as defined by Property~\ref{prop:consistency}, and unlike permutations, they account for high-order interaction effects. The x-axis of the plots represents the number of features used in the true model (out of 200 total features), while the y-axis represents the fraction of those true features recovered in the set of top ranked features of the same size. Results are averages over performance on 1000 simulated datasets. Both SHAP value based methods outperform gain and permutation testing in every figure, with all paired t-test P-values being $<10^{-7}$.}
  \label{fig:feature_selection}
\end{figure*}

\begin{figure*}
  \centering
  \includegraphics[width=0.6\textwidth]{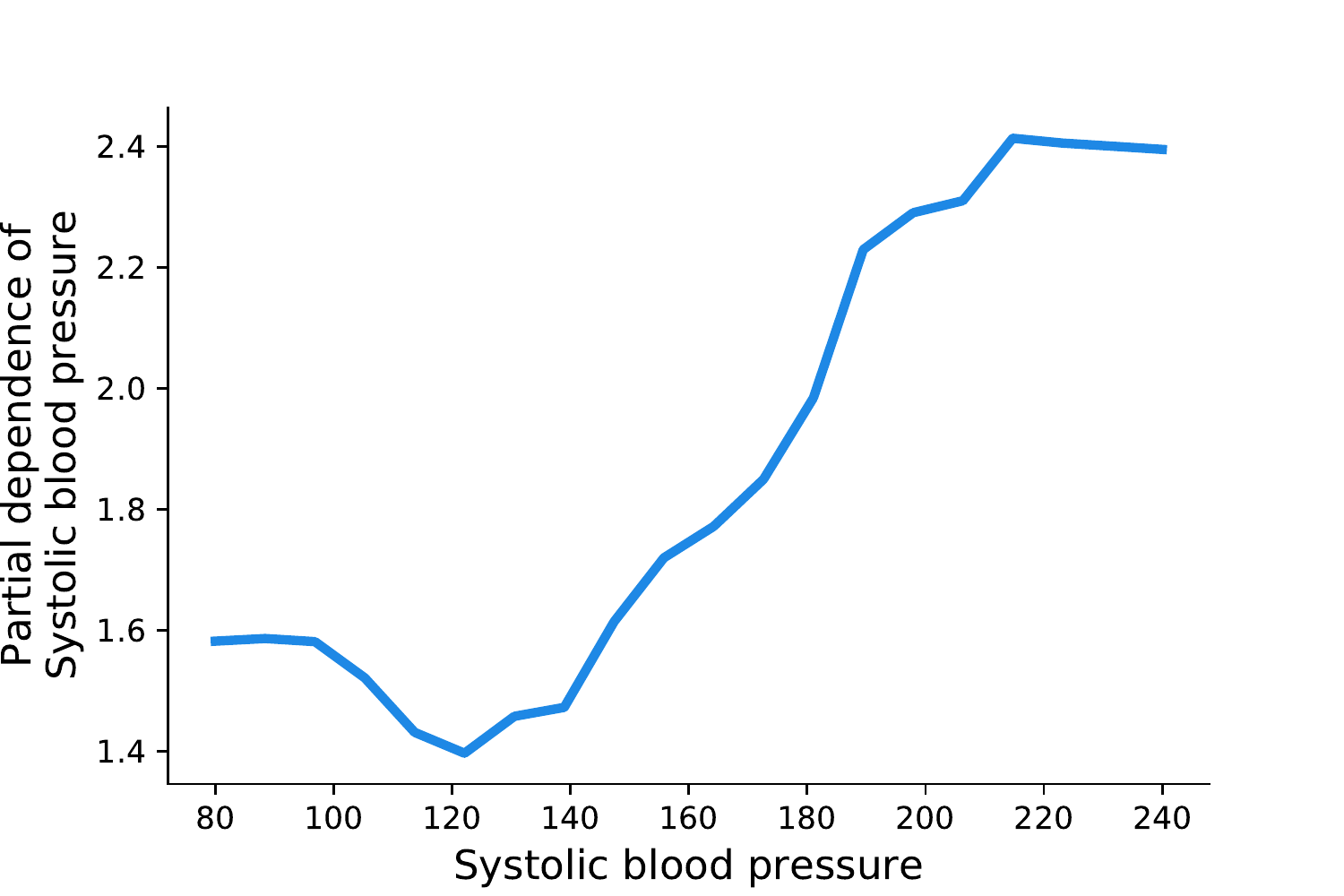}
  \caption{{\bf Partial dependence plot of systolic blood pressure in the mortality model.} Unlike the corresponding SHAP dependence plot in Figure~\ref{fig:overview_and_dependence}B, the partial dependence plot gives no indication of the heterogeneity between individuals caused by interaction effects in the model.}
  \label{fig:nhanes_pdp_sbp}
\end{figure*}

\begin{figure*}
  \centering
  \includegraphics[width=1.0\textwidth]{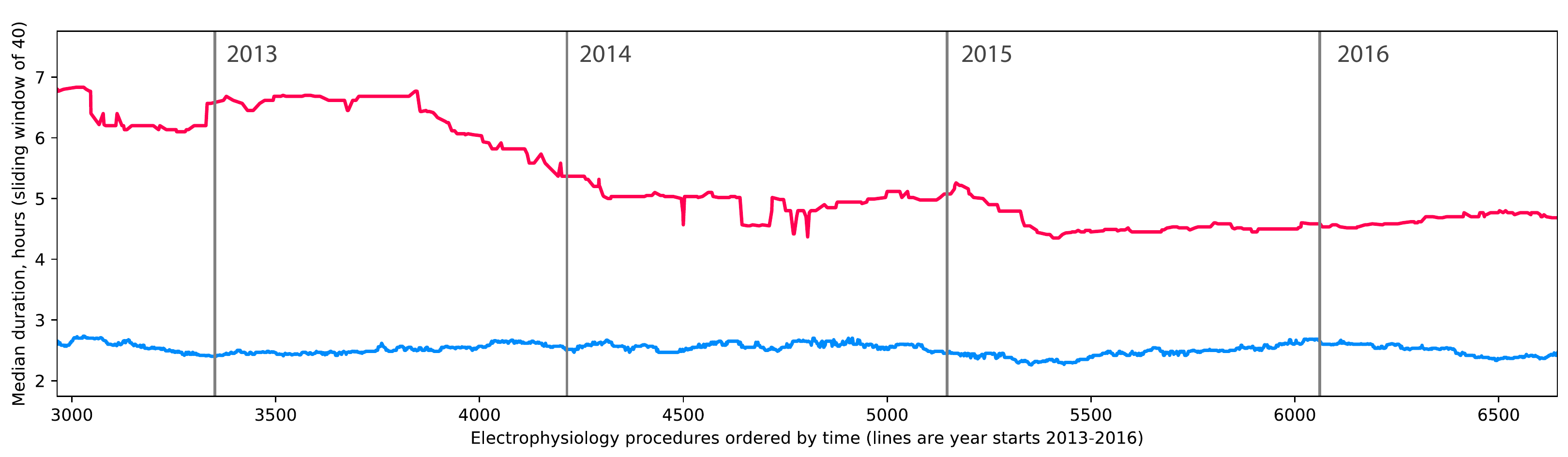}
  \caption{{\bf The average duration of ablation procedures for atrial fibrillation dropped significantly during 2014.} This data was obtained directly from the electrophysiology lab to diagnose why the atrial fibrillation feature was found (using a SHAP monitoring plot) to degrade model performance (Figure~\ref{fig:key_monitoring_plots}D).  The reason is that around 2014 (which was after the simulated model deployment) the duration of these procedures dropped significantly.}
  \label{fig:afib_duration_plot}
\end{figure*}

\begin{figure*}
  \centering
  \includegraphics[width=1.0\textwidth]{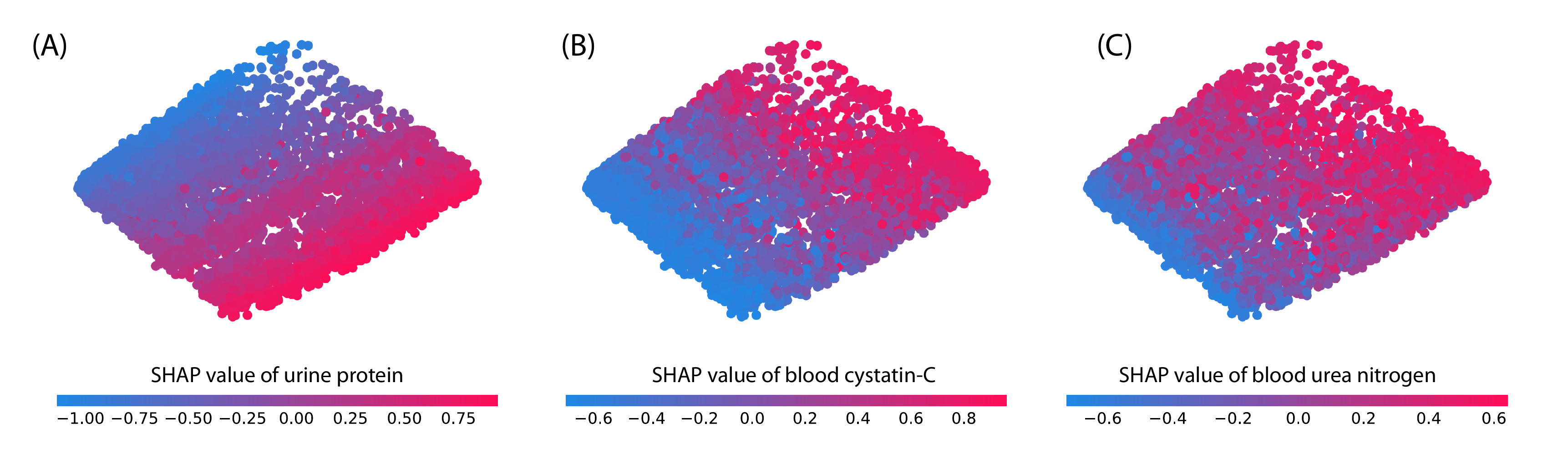}
  \caption{{\bf A local explanation embedding of kidney visits projected onto its first two principle components.} This shows the next three top features beyond those shown in Figures~\ref{fig:clustering_and_embedding}B-D. The fact that these features also align with the top principal components shows how many of the important features in the data set are capturing information along two largely orthogonal dimensions.}
  \label{fig:kidney_embedding2}
\end{figure*}

\begin{figure*}
  \centering
  \includegraphics[width=1.0\textwidth]{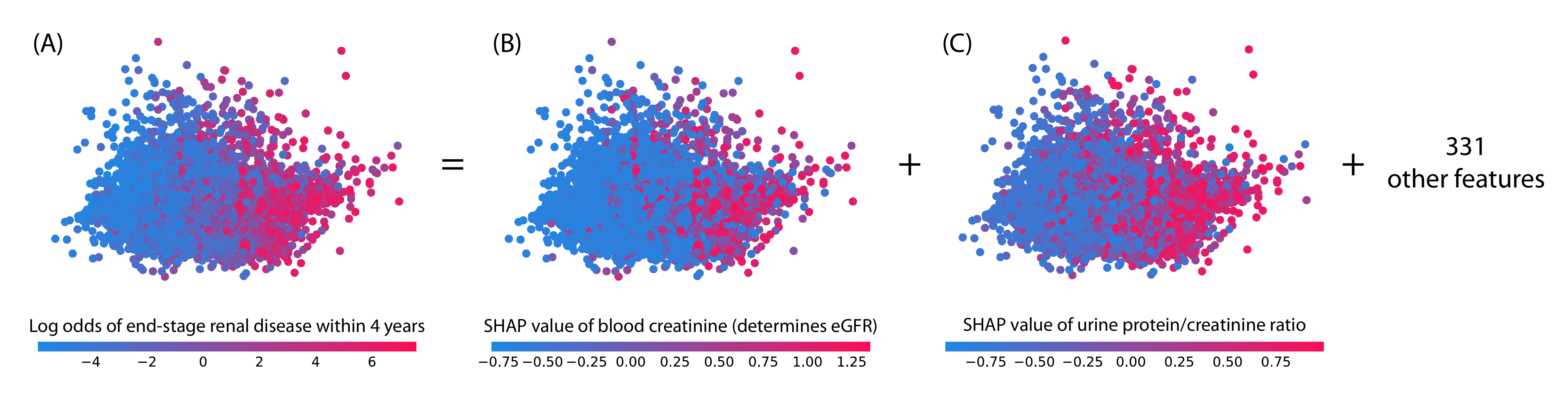}
  \caption{{\bf Principle component embedding of the chronic kidney disease dataset.} Unlike an embedding based in the local explanation space (Figures~\ref{fig:clustering_and_embedding}B-D), an unsupervised embedding of the data does not necessarily align with the outcome of interest in a dataset.}
  \label{fig:kidney_raw_pca}
\end{figure*}

\begin{figure*}
  \centering
  \includegraphics[width=1.0\textwidth]{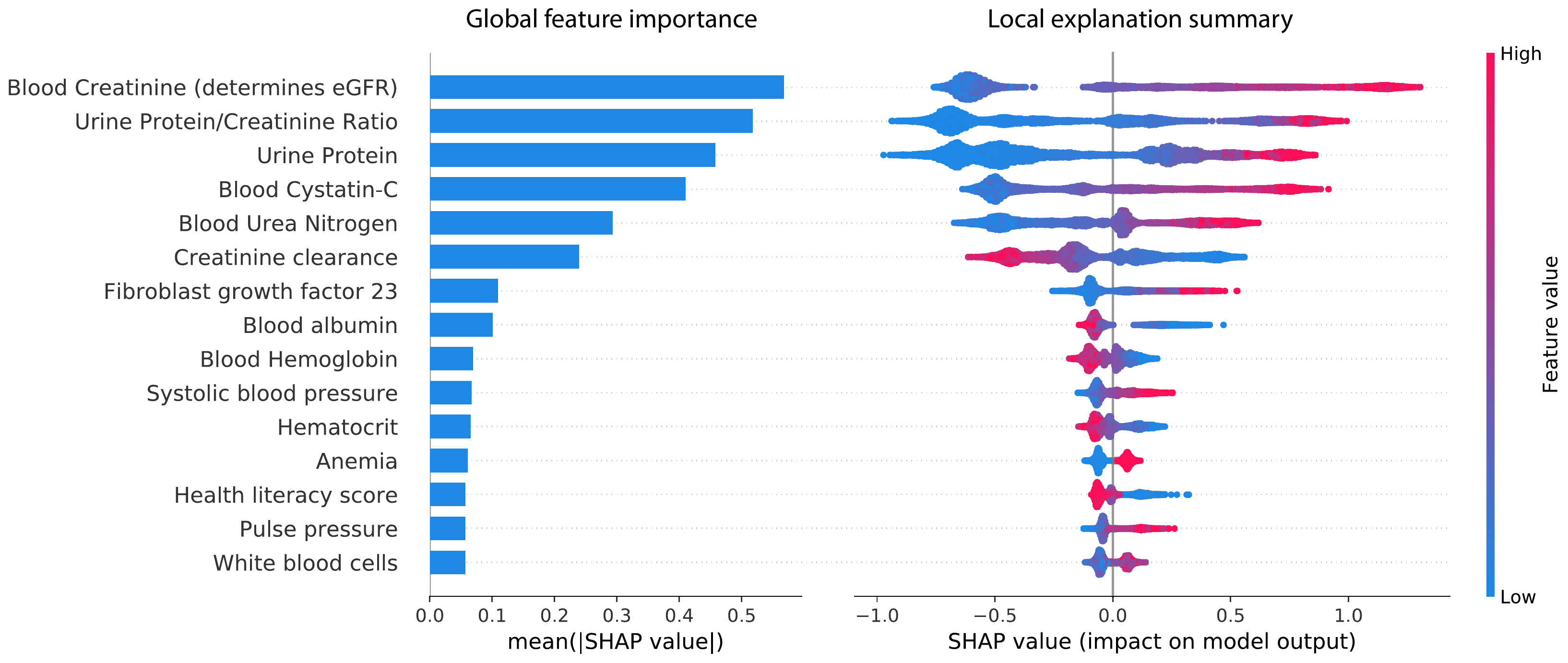}
  \caption{{\bf Bar chart (left) and summary plot (right) for a gradient boosted decision tree model trained on the chronic kidney disease data.} For the summary plot red dots indicate a high value of that feature for that individual, while blue dots represent a low feature value. The x-axis is the SHAP value of a feature for each individual's prediction, representing the change in the log-hazard ratio caused by observing that feature. High blood creatinine increases the risk of end-stage kidney disease. Conversely, low creatinine clearance increases the risk of end-stage kidney disease.}
  \label{fig:kidney_summary}
\end{figure*}

\begin{figure*}
  \centering
  \includegraphics[width=1.0\textwidth]{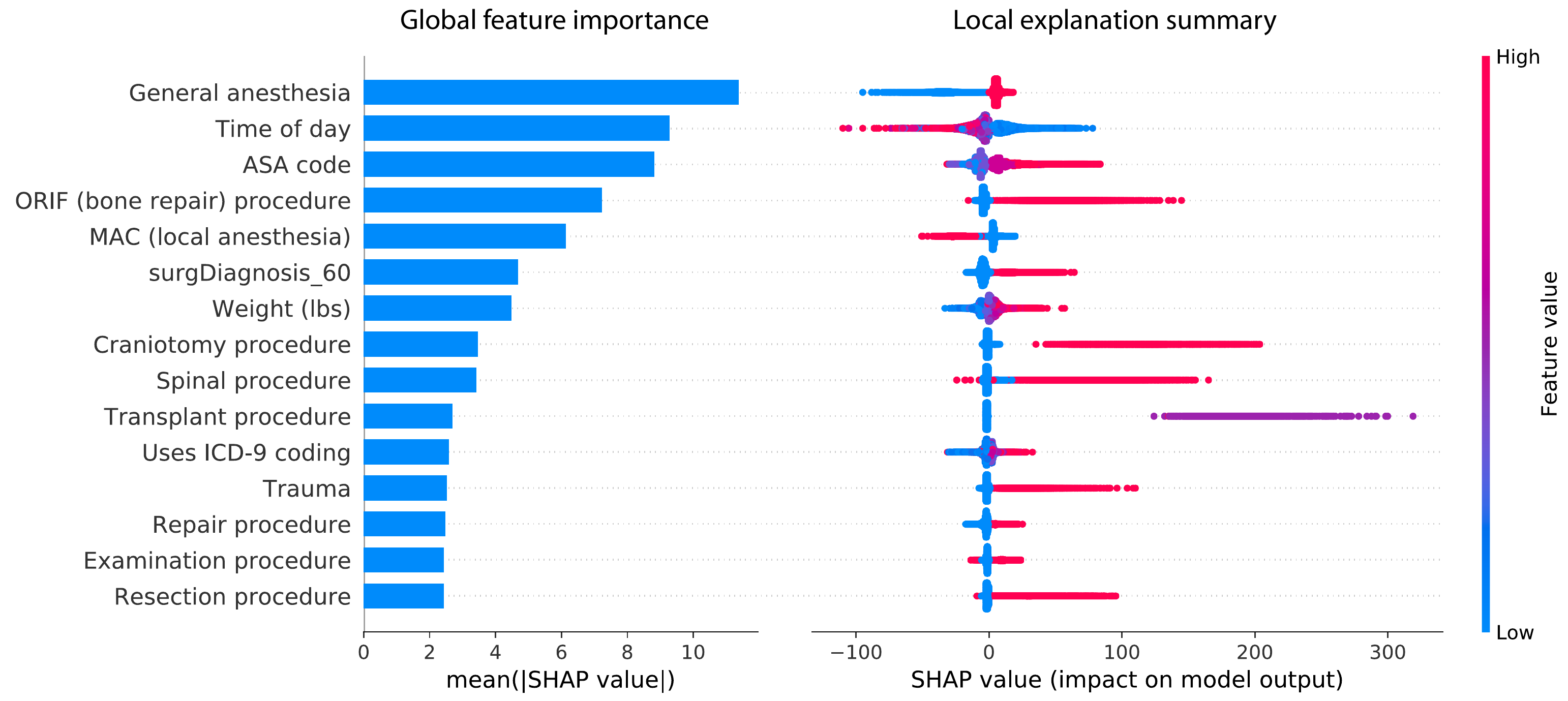}
  \caption{{\bf Bar chart (left) and summary plot (right) for a gradient boosted decision tree model trained on the hospital procedure duration data.} For the summary plot red dots indicate a high value of that feature for that individual, while blue dots represent a low feature value. The x-axis is the SHAP value of a feature for each individual's prediction, representing the change in the log-hazard ratio caused by observing that feature. Many of the features are bag-of-words counts that have only a few non-zero values. Because the model is very nonlinear, the impact of a flag being on (such as the trauma flag) can have very different effects for different procedures (as shown for trauma by the horizontal dispersion of the red dots).}
  \label{fig:hospital_summary}
\end{figure*}

\begin{figure*}
  \centering
  \includegraphics[width=1.0\textwidth]{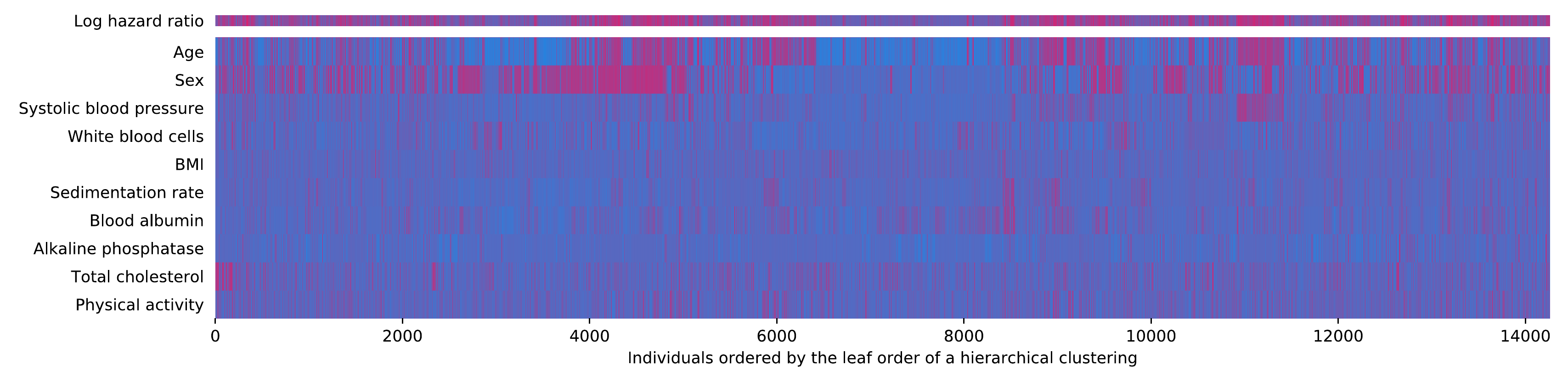}
  \caption{{\bf A clustering of 14,264 mortality study individuals by their normalized data values.} Standard complete linkage hierarchical clustering reveals fewer groups consistency with model risk than supervised clustering (Figure~\ref{fig:clustering_and_embedding}A    ). This is because unsupervised clustering has no bias towards clusters that share common risk characteristics. Row-normalized feature SHAP values are used for coloring, as in Figure~\ref{fig:clustering_and_embedding}A.}
  \label{fig:mortality_clustering_unsupervised}
\end{figure*}






\begin{figure*}
  \centering
  \includegraphics[width=1.0\textwidth]{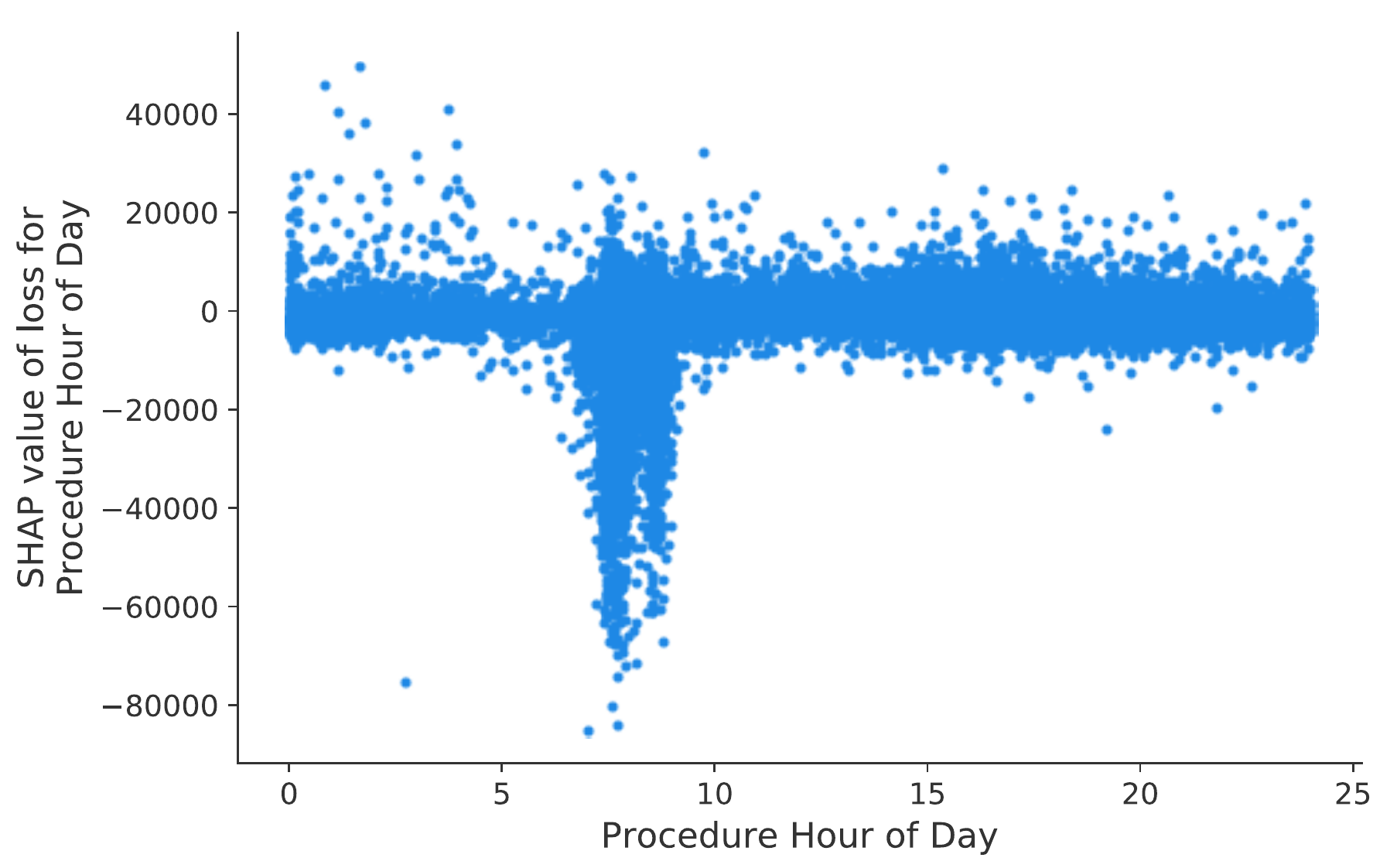}
  \caption{{\bf A dependence plot for time of day vs the SHAP value of time of day with respect to the model loss.} In our hospital system long running elective surgeries are scheduled at 7:20-7:30 AM on Monday/Tuesday/Thursday/Friday and at 8:20-8:30 AM on Wednesday. This plot shows that the primary way model is using time of day to reduce the model's loss is by detecting these surgery scheduling times. Each dot is a procedure, the x-axis is the time that procedure was scheduled to begin, and the y-axis is the impact knowing the time of day had on the model's loss for predicting that procedure's duration (lower is better).}
  \label{fig:hospital_time_of_day_loss_dependence}
\end{figure*}

\begin{figure*}
  \centering
  \includegraphics[width=1.0\textwidth]{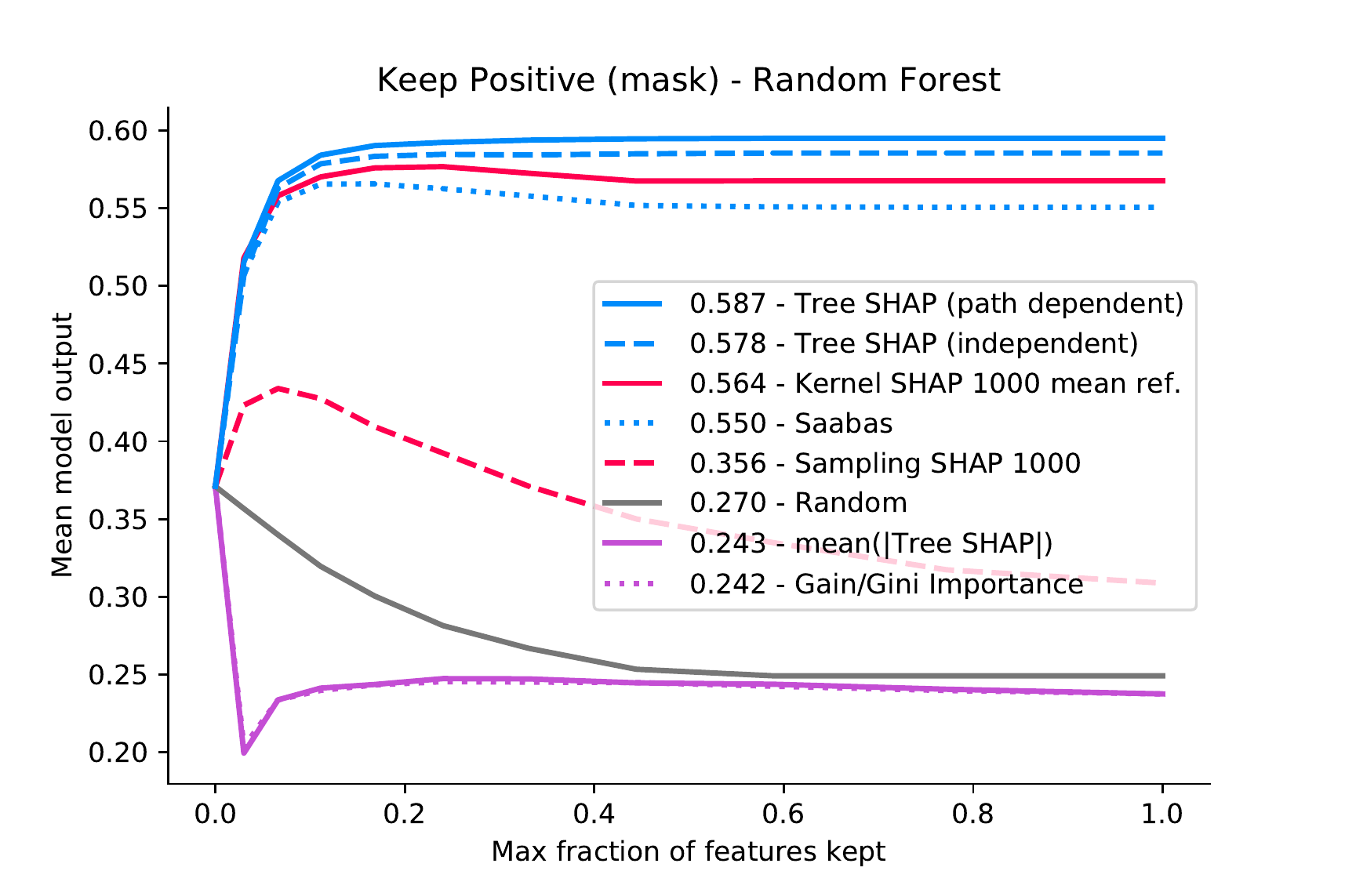}
  \caption{{\bf Keep positive (mask) metric for a random forest trained on the chronic kidney disease dataset.} Sorting the attribution values of an explanation method provides an ordering of the features for each prediction made by the model. Here we keep a fraction of the features ordered by how much they increase the model's output. Features that are not kept are masked with their mean value. If the ordering is good, as we include more and more features we push the model's output higher. Note that features with a negative contribution are always masked. The x-axis is the maximum fraction of features kept and the y-axis is the mean increase of the model over 100 predictions (averaged over 10 model's trained on different train/test splits of the dataset). Note that the Tree SHAP and Sampling SHAP algorithms correspond to TreeExplainer and IME \cite{vstrumbelj2014explaining}, respectively.} 
  \label{fig:plot_cric_random_forest_keep_positive_mask}
\end{figure*}

\begin{figure*}
  \centering
  \includegraphics[width=1.0\textwidth]{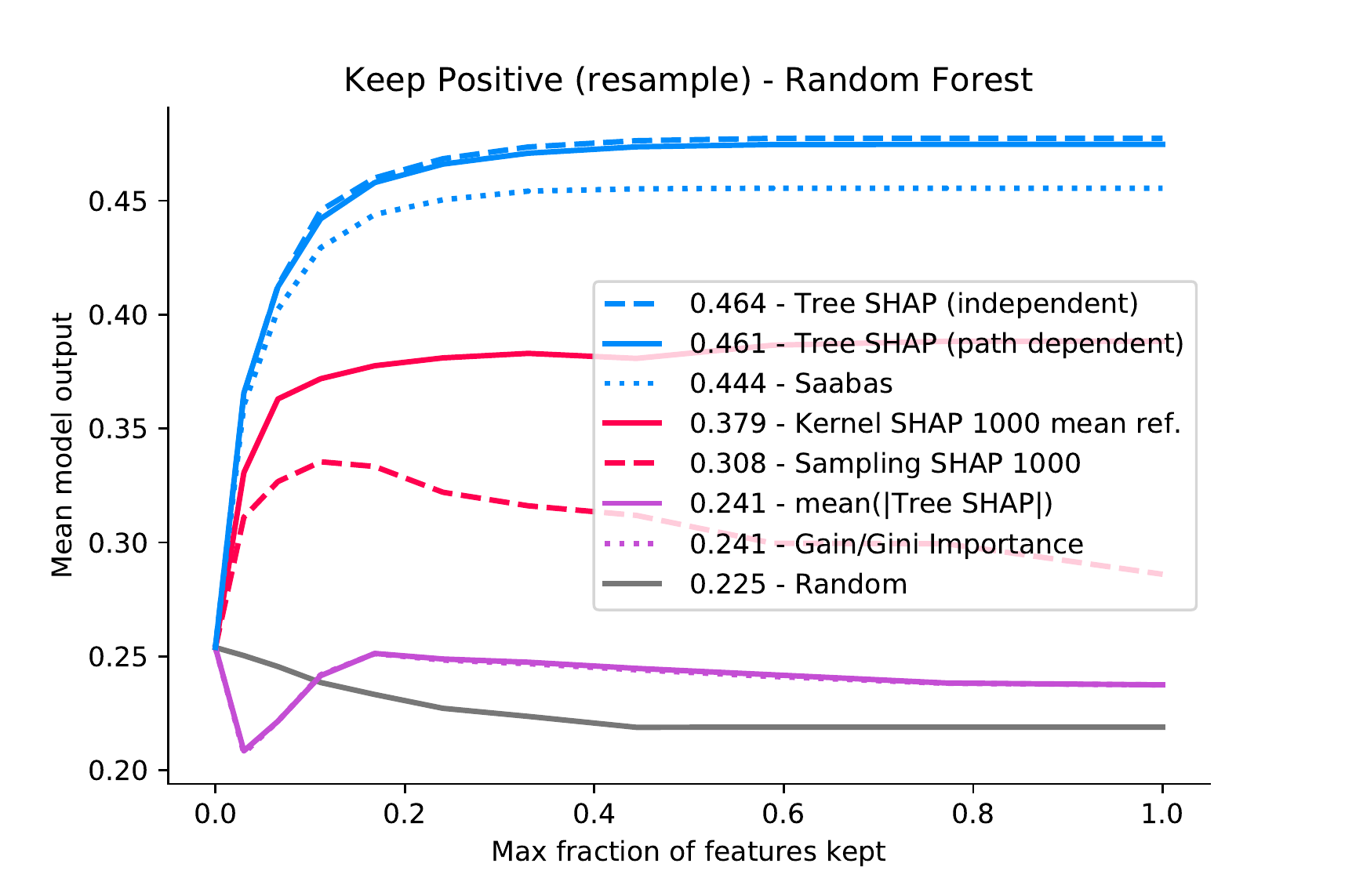}
  \caption{{\bf Keep positive (resample) metric for a random forest trained on the chronic kidney disease dataset.} This is just like Supplementary Figure~\ref{fig:plot_cric_random_forest_keep_positive_mask} except that instead of masking the hidden features with their mean value, we instead replace them with a random sample from the training dataset. This resampling process is averaged over 100 times to integrate over the distribution of background samples. If the input features are independent of one another then this effectively computes the conditional expectation of the model output conditioned on only the observed features. Note that the Tree SHAP and Sampling SHAP algorithms correspond to TreeExplainer and IME \cite{vstrumbelj2014explaining}, respectively.}
  \label{fig:plot_cric_random_forest_keep_positive_resample}
\end{figure*}

\begin{figure*}
  \centering
  \includegraphics[width=1.0\textwidth]{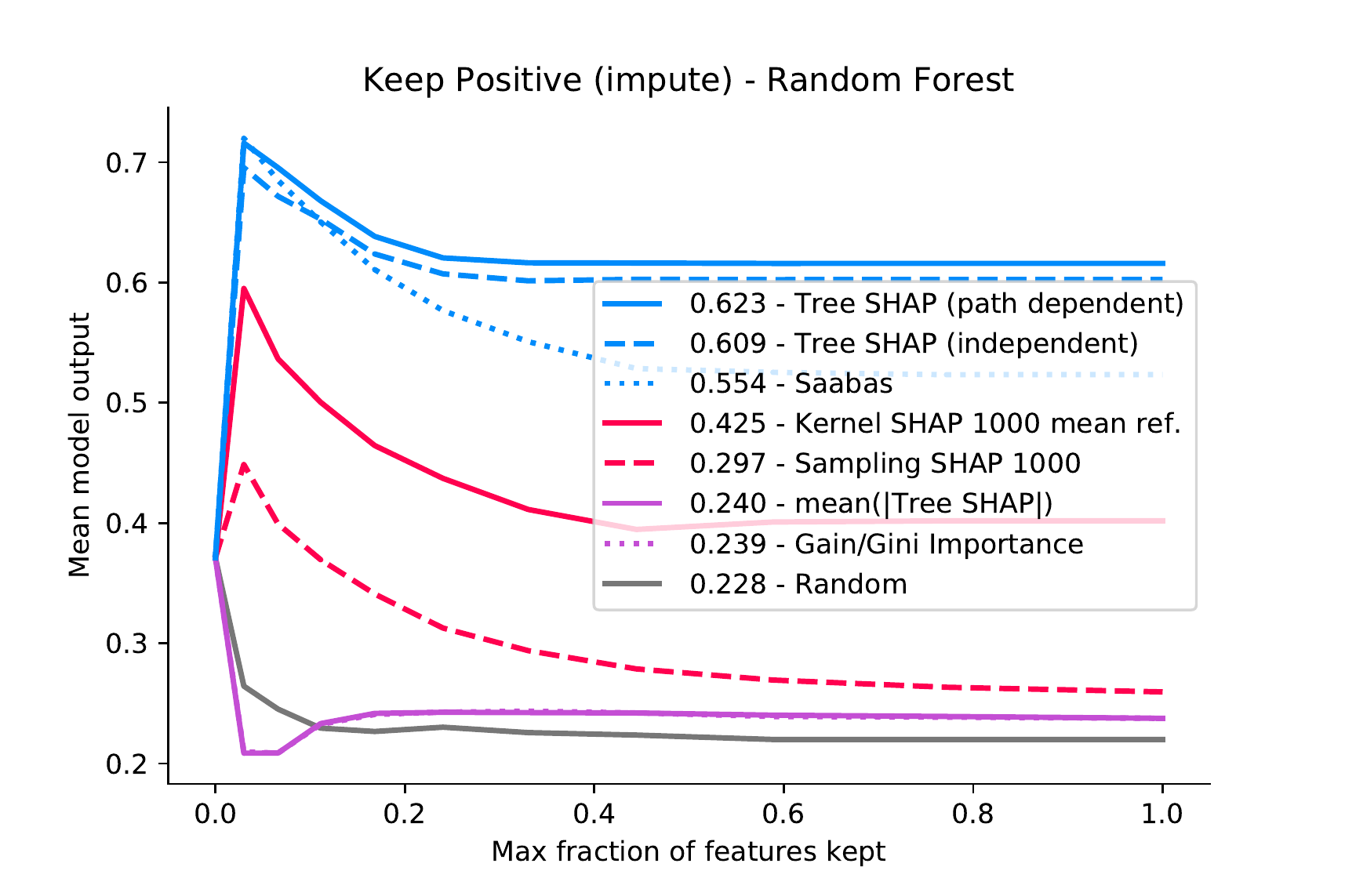}
  \caption{{\bf Keep positive (impute) metric for a random forest trained on the chronic kidney disease dataset.} This is just like Supplementary Figures~\ref{fig:plot_cric_random_forest_keep_positive_mask}~and~\ref{fig:plot_cric_random_forest_keep_positive_resample} except that instead of mean masking or resampling the hidden features, we instead impute them using the covariance matrix of the training data (this is maximum likelihood imputation if we assume the input features are multivariate normal). This imputation process seeks to avoid evaluating the model on unrealistic input data. In contrast with mean imputation or resampling we assume the input feature are independent and so many provide unrealistic combinations of input features (such as pregnant men). Note that the Tree SHAP and Sampling SHAP algorithms correspond to TreeExplainer and IME \cite{vstrumbelj2014explaining}, respectively.}
  \label{fig:plot_cric_random_forest_keep_positive_impute}
\end{figure*}

\begin{figure*}
  \centering
  \includegraphics[width=1.0\textwidth]{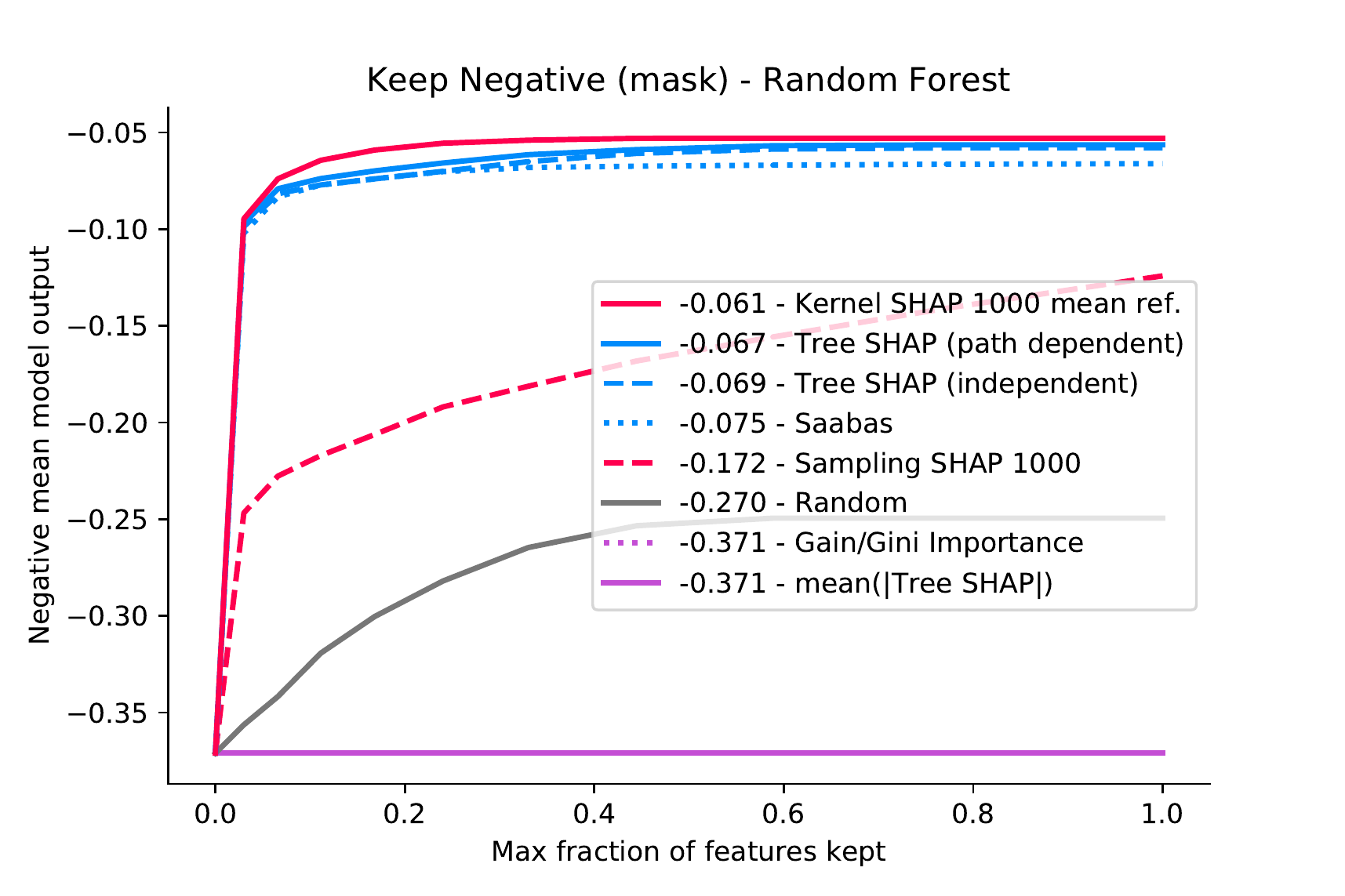}
  \caption{{\bf Keep negative (mask) metric for a random forest trained on the chronic kidney disease dataset.} This is just like Supplementary Figure~\ref{fig:plot_cric_random_forest_keep_positive_mask} except that we keep the most negative features instead of the most positive. Note that the Tree SHAP and Sampling SHAP algorithms correspond to TreeExplainer and IME \cite{vstrumbelj2014explaining}, respectively.}
  \label{fig:plot_cric_random_forest_keep_negative_mask}
\end{figure*}

\begin{figure*}
  \centering
  \includegraphics[width=1.0\textwidth]{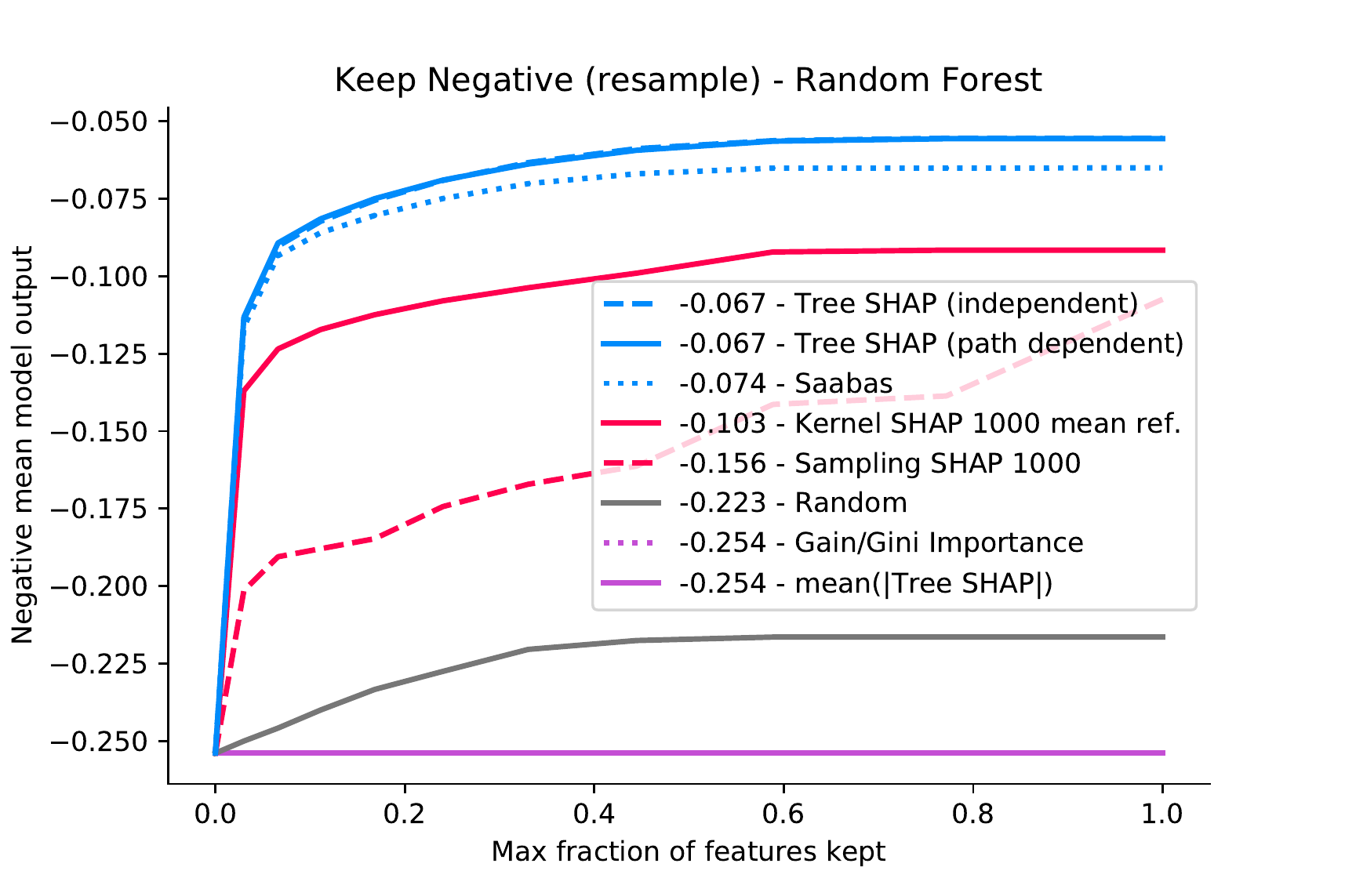}
  \caption{{\bf Keep negative (resample) metric for a random forest trained on the chronic kidney disease dataset.} This is just like Supplementary Figure~\ref{fig:plot_cric_random_forest_keep_positive_resample} except that we keep the most negative features instead of the most positive. Note that the Tree SHAP and Sampling SHAP algorithms correspond to TreeExplainer and IME \cite{vstrumbelj2014explaining}, respectively.}
  \label{fig:plot_cric_random_forest_keep_negative_resample}
\end{figure*}

\begin{figure*}
  \centering
  \includegraphics[width=1.0\textwidth]{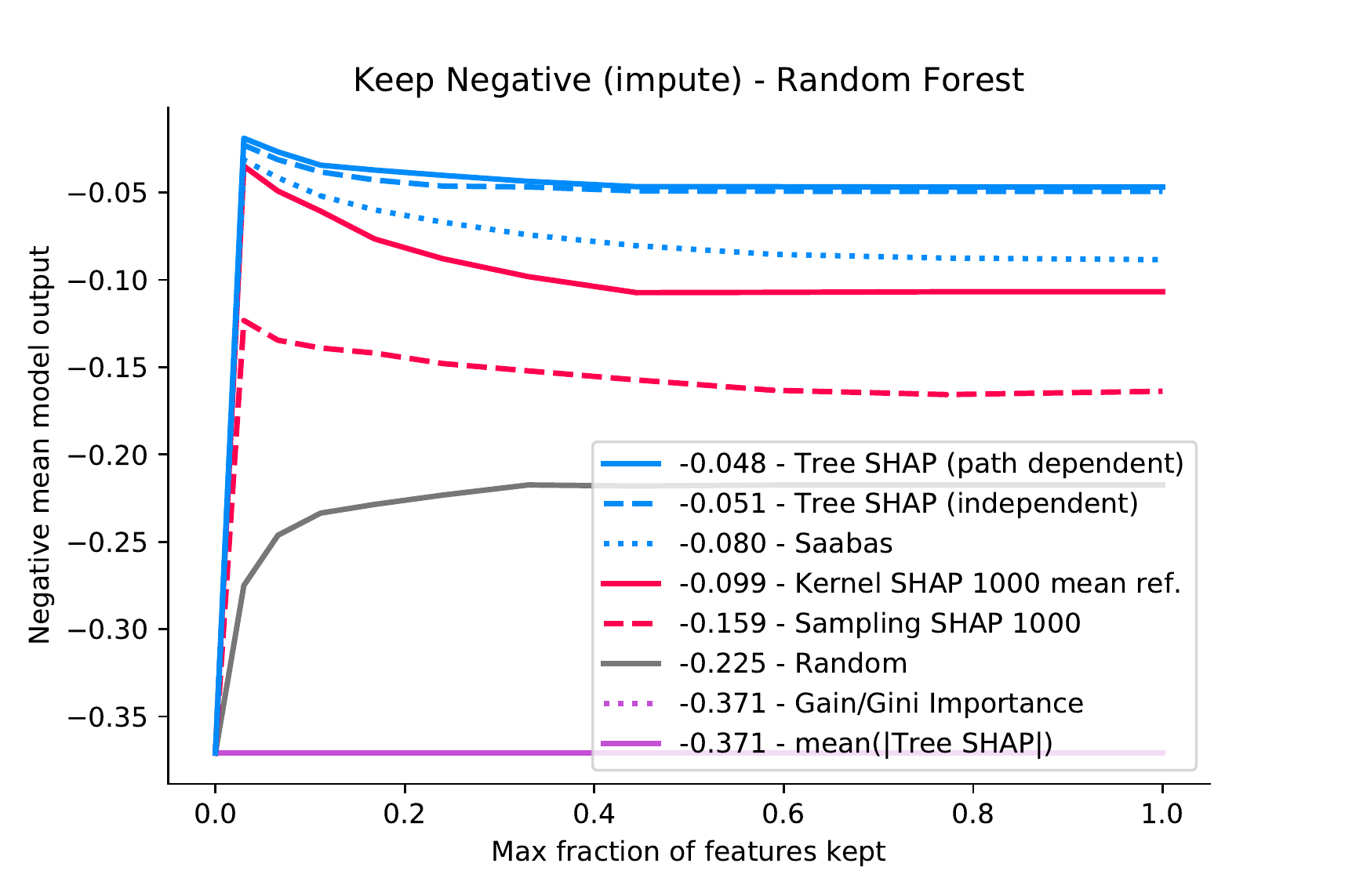}
  \caption{{\bf Keep negative (impute) metric for a random forest trained on the chronic kidney disease dataset.} This is just like Supplementary Figure~\ref{fig:plot_cric_random_forest_keep_positive_impute} except that we keep the most negative features instead of the most positive. Note that the Tree SHAP and Sampling SHAP algorithms correspond to TreeExplainer and IME \cite{vstrumbelj2014explaining}, respectively.}
  \label{fig:plot_cric_random_forest_keep_negative_impute}
\end{figure*}

\begin{figure*}
  \centering
  \includegraphics[width=1.0\textwidth]{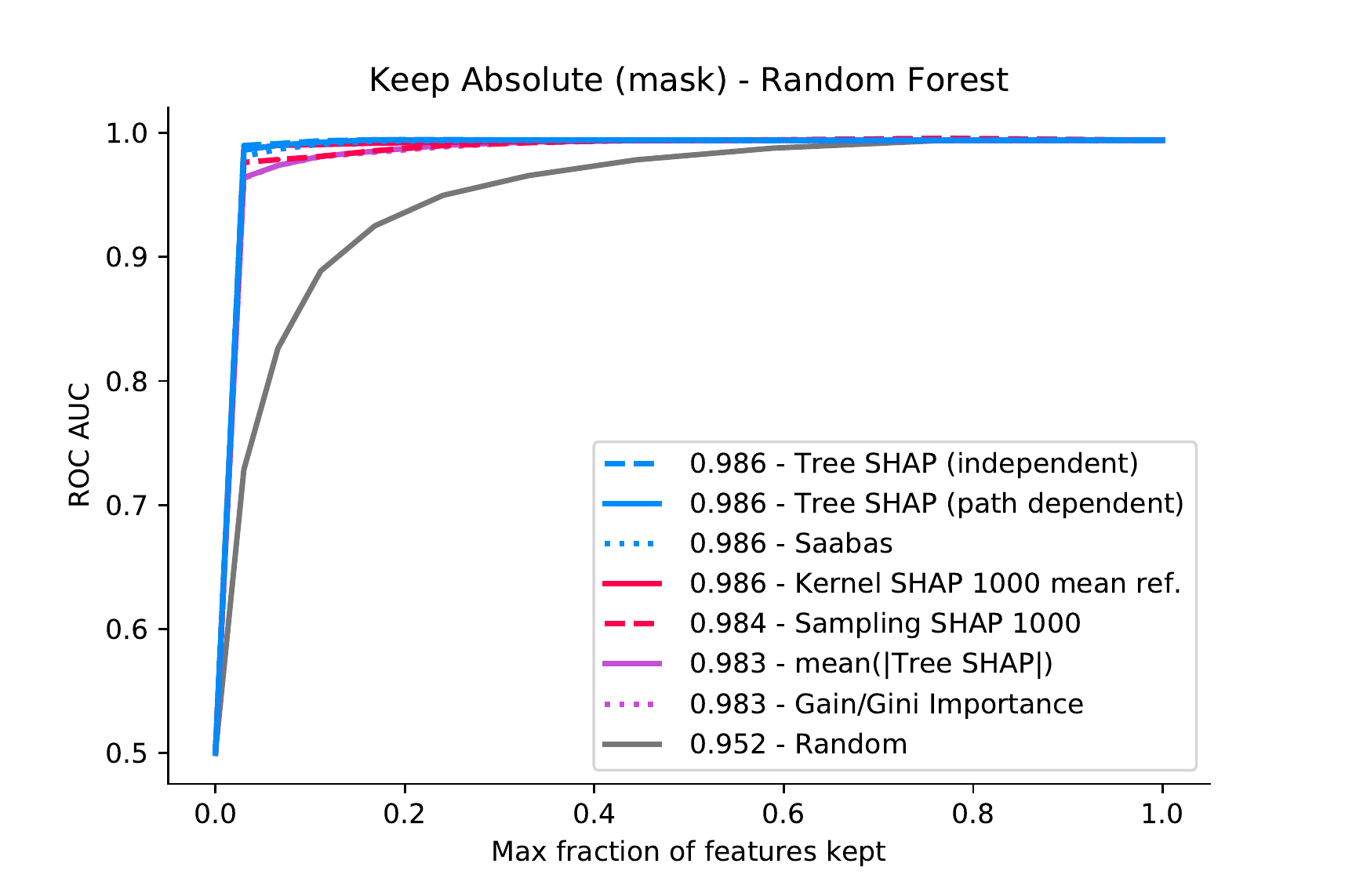}
  \caption{{\bf Keep absolute (mask) metric for a random forest trained on the chronic kidney disease dataset.} This is just like Supplementary Figures~\ref{fig:plot_cric_random_forest_keep_positive_mask}~and~\ref{fig:plot_cric_random_forest_keep_negative_mask} except that we keep the most important features by absolute value instead of the most positive or negative. Since this no longer specifically pushes the model output higher or lower, we instead measure the accuracy of the model. Good attribution methods will identify important features that when kept will result in better model accuracy, measured in this case by the area under the receiver operating characteristic (ROC) curve. Note that the Tree SHAP and Sampling SHAP algorithms correspond to TreeExplainer and IME \cite{vstrumbelj2014explaining}, respectively.}
  \label{fig:plot_cric_random_forest_keep_absolute_mask__roc_auc}
\end{figure*}

\begin{figure*}
  \centering
  \includegraphics[width=1.0\textwidth]{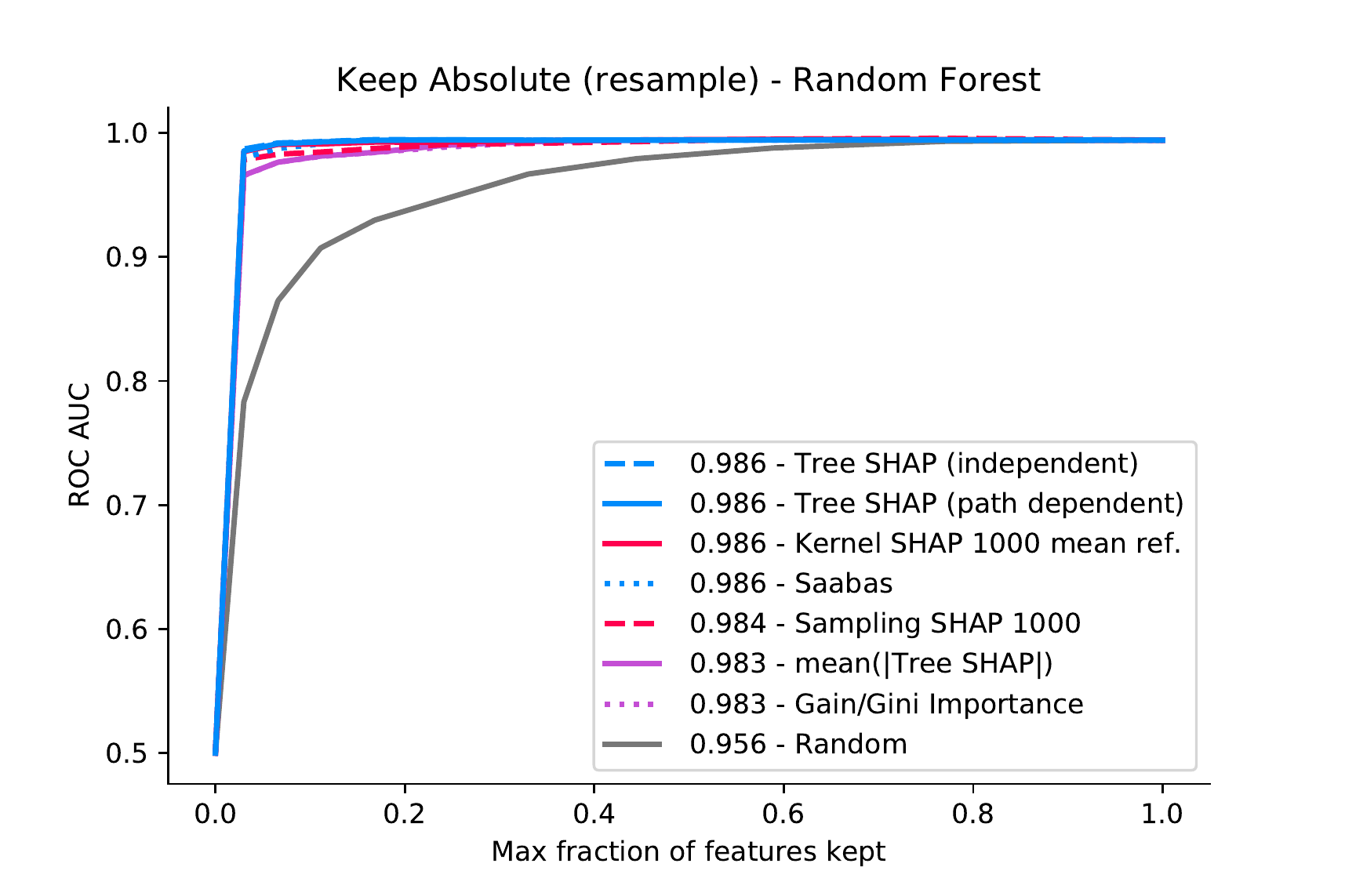}
  \caption{{\bf Keep absolute (resample) metric for a random forest trained on the chronic kidney disease dataset.} This is just like Supplementary Figures~\ref{fig:plot_cric_random_forest_keep_positive_resample}~and~\ref{fig:plot_cric_random_forest_keep_negative_resample} except that we keep the most important features by absolute value instead of the most positive or negative (as in Figure~\ref{fig:plot_cric_random_forest_keep_absolute_mask__roc_auc}). Note that the Tree SHAP and Sampling SHAP algorithms correspond to TreeExplainer and IME \cite{vstrumbelj2014explaining}, respectively.}
  \label{fig:plot_cric_random_forest_keep_absolute_resample__roc_auc}
\end{figure*}

\begin{figure*}
  \centering
  \includegraphics[width=1.0\textwidth]{figures/plot_cric_random_forest_keep_absolute_resample__roc_auc}
  \caption{{\bf Keep absolute (impute) metric for a random forest trained on the chronic kidney disease dataset.} This is just like Supplementary Figures~\ref{fig:plot_cric_random_forest_keep_positive_impute}~and~\ref{fig:plot_cric_random_forest_keep_negative_impute} except that we keep the most important features by absolute value instead of the most positive or negative (as in Figure~\ref{fig:plot_cric_random_forest_keep_absolute_mask__roc_auc}). Note that the Tree SHAP and Sampling SHAP algorithms correspond to TreeExplainer and IME \cite{vstrumbelj2014explaining}, respectively.}
  \label{fig:plot_cric_random_forest_keep_absolute_impute__roc_auc}
\end{figure*}

\begin{figure*}
  \centering
  \includegraphics[width=1.0\textwidth]{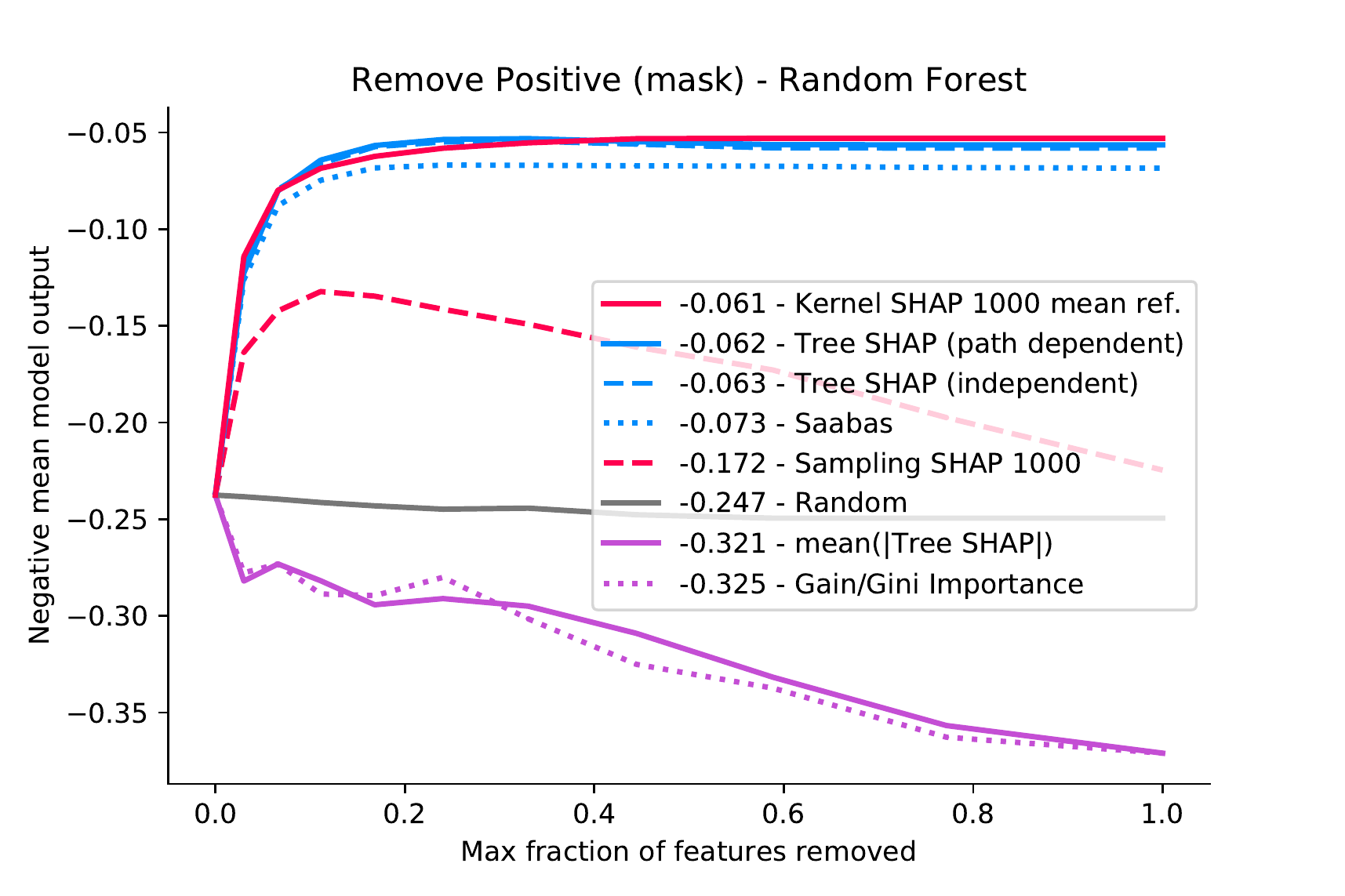}
  \caption{{\bf Remove positive (mask) metric for a random forest trained on the chronic kidney disease dataset.} This is just like Supplementary Figure~\ref{fig:plot_cric_random_forest_keep_positive_mask} except that we remove the most positive features instead of keeping them. Note that the Tree SHAP and Sampling SHAP algorithms correspond to TreeExplainer and IME \cite{vstrumbelj2014explaining}, respectively.}
  \label{fig:plot_cric_random_forest_remove_positive_mask}
\end{figure*}

\begin{figure*}
  \centering
  \includegraphics[width=1.0\textwidth]{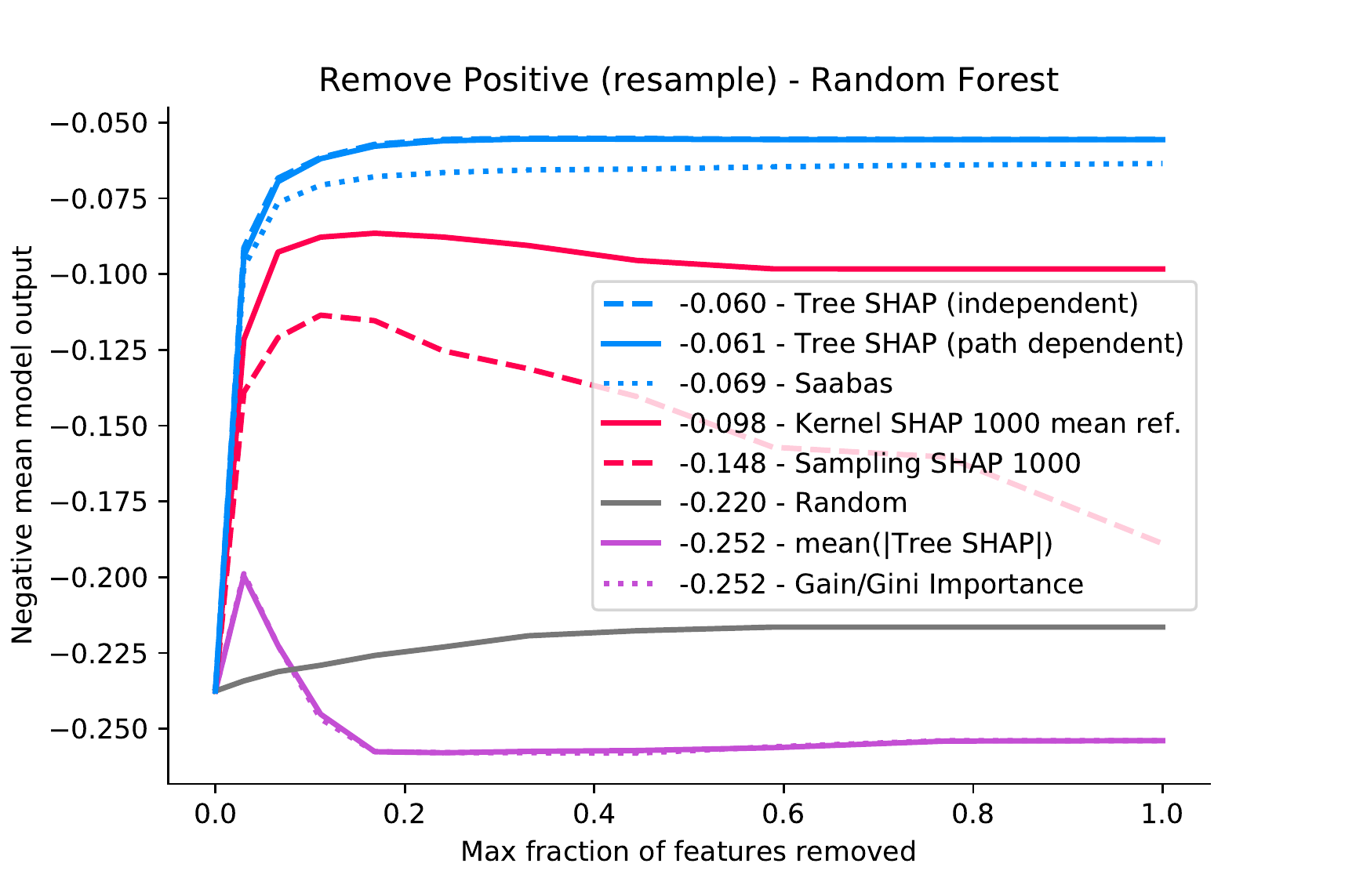}
  \caption{{\bf Remove positive (resample) metric for a random forest trained on the chronic kidney disease dataset.} This is just like Supplementary Figure~\ref{fig:plot_cric_random_forest_keep_positive_resample} except that we remove the most positive features instead of keeping them. Note that the Tree SHAP and Sampling SHAP algorithms correspond to TreeExplainer and IME \cite{vstrumbelj2014explaining}, respectively.}
  \label{fig:plot_cric_random_forest_remove_positive_resample}
\end{figure*}

\begin{figure*}
  \centering
  \includegraphics[width=1.0\textwidth]{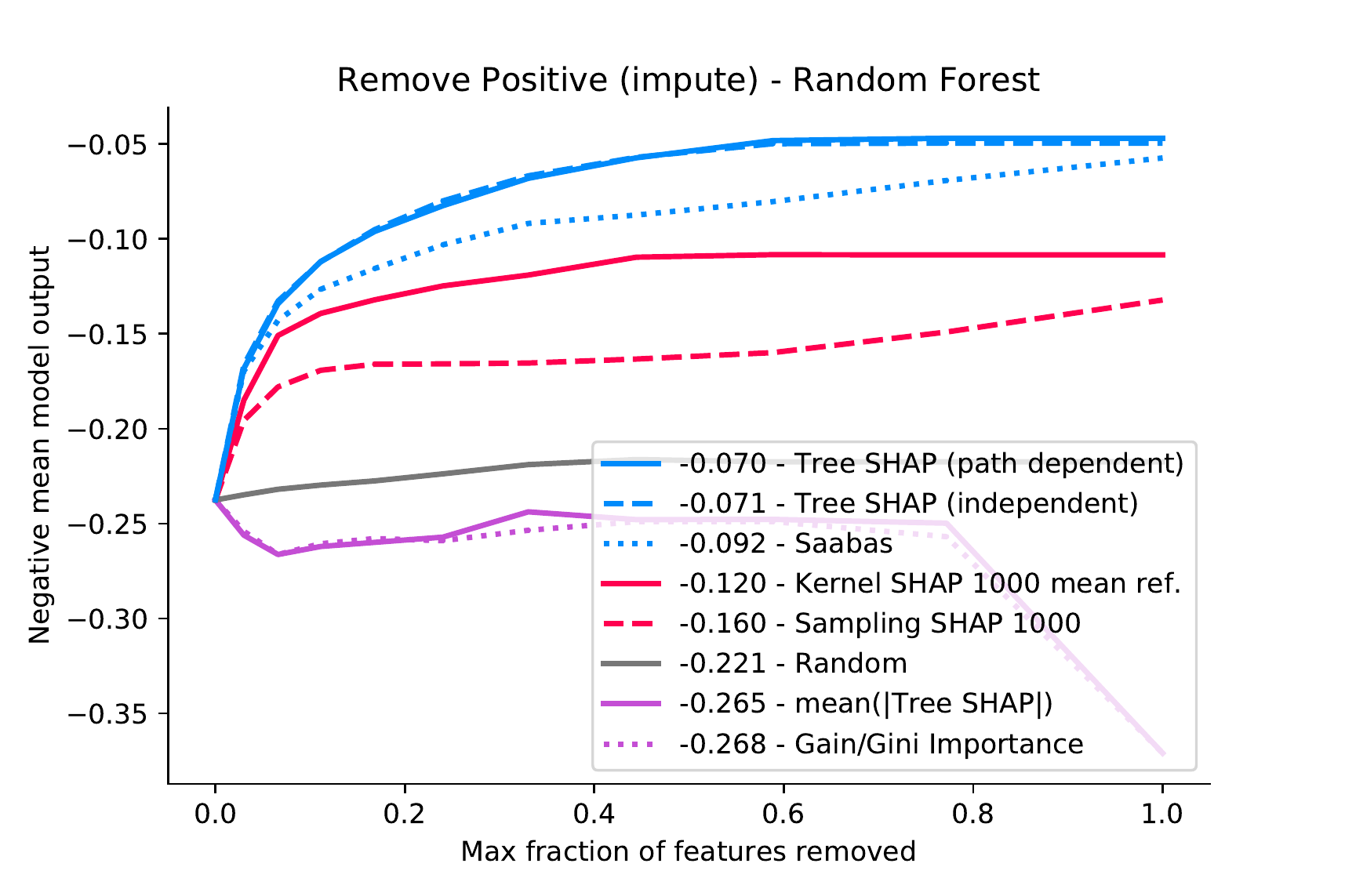}
  \caption{{\bf Remove positive (impute) metric for a random forest trained on the chronic kidney disease dataset.} This is just like Supplementary Figure~\ref{fig:plot_cric_random_forest_keep_positive_impute} except that we remove the most positive features instead of keeping them. Note that the Tree SHAP and Sampling SHAP algorithms correspond to TreeExplainer and IME \cite{vstrumbelj2014explaining}, respectively.}
  \label{fig:plot_cric_random_forest_remove_positive_impute}
\end{figure*}

\begin{figure*}
  \centering
  \includegraphics[width=1.0\textwidth]{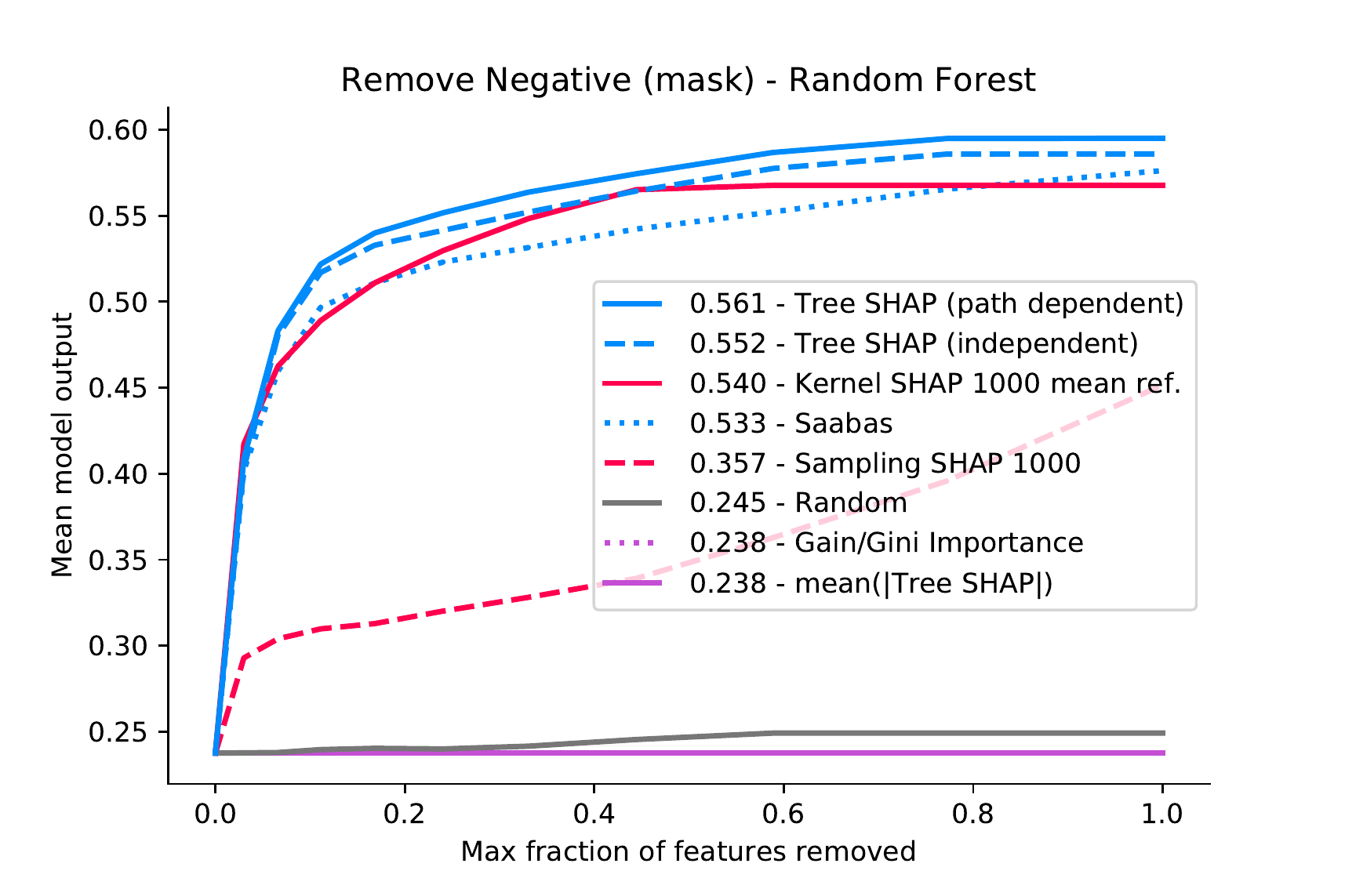}
  \caption{{\bf Remove negative (mask) metric for a random forest trained on the chronic kidney disease dataset.} This is just like Supplementary Figure~\ref{fig:plot_cric_random_forest_keep_negative_mask} except that we remove the most negative features instead of keeping them. Note that the Tree SHAP and Sampling SHAP algorithms correspond to TreeExplainer and IME \cite{vstrumbelj2014explaining}, respectively.}
  \label{fig:plot_cric_random_forest_remove_negative_mask}
\end{figure*}

\begin{figure*}
  \centering
  \includegraphics[width=1.0\textwidth]{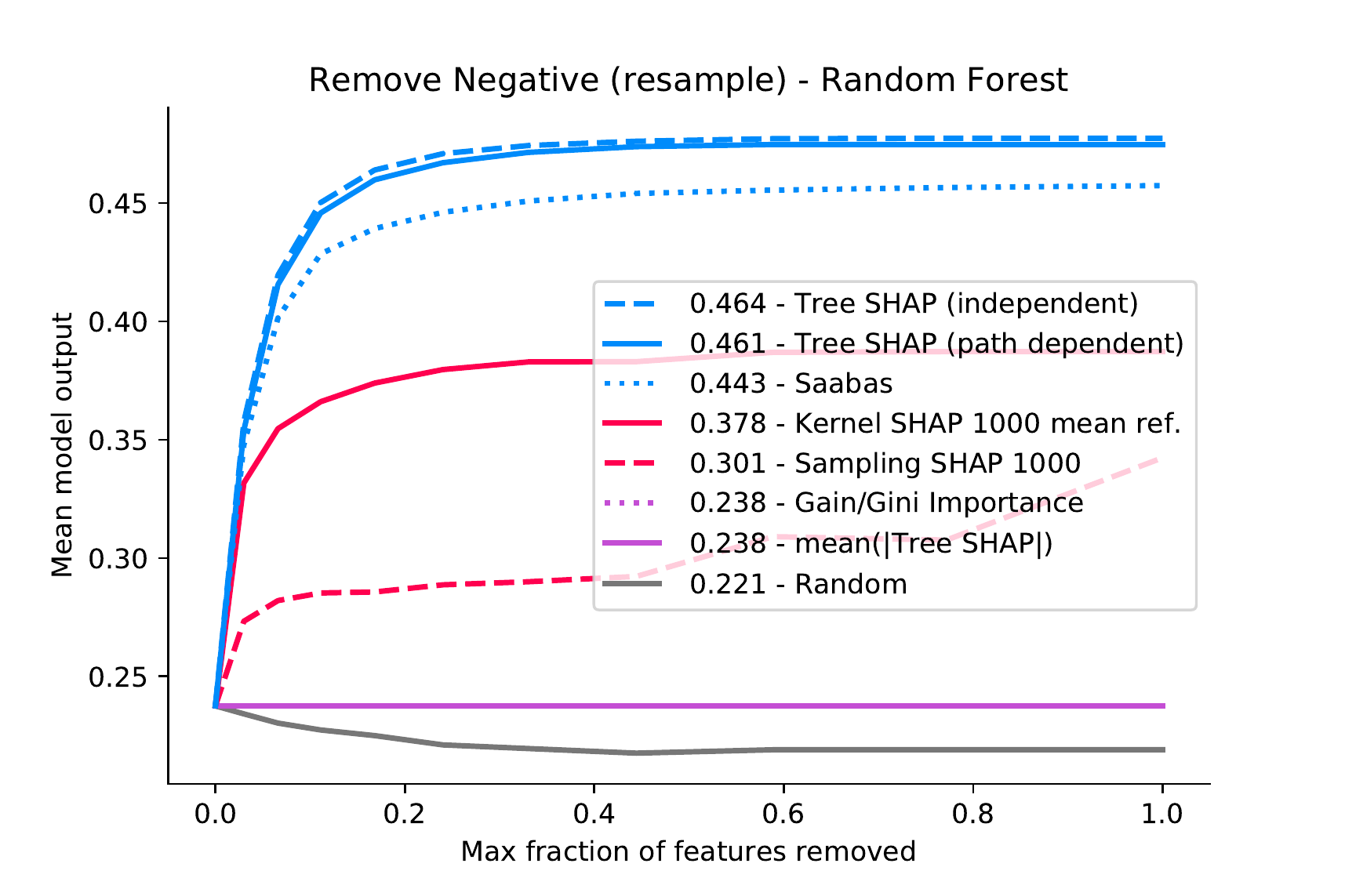}
  \caption{{\bf Remove negative (resample) metric for a random forest trained on the chronic kidney disease dataset.} This is just like Supplementary Figure~\ref{fig:plot_cric_random_forest_keep_negative_resample} except that we remove the most negative features instead of keeping them. Note that the Tree SHAP and Sampling SHAP algorithms correspond to TreeExplainer and IME \cite{vstrumbelj2014explaining}, respectively.}
  \label{fig:plot_cric_random_forest_remove_negative_resample}
\end{figure*}

\begin{figure*}
  \centering
  \includegraphics[width=1.0\textwidth]{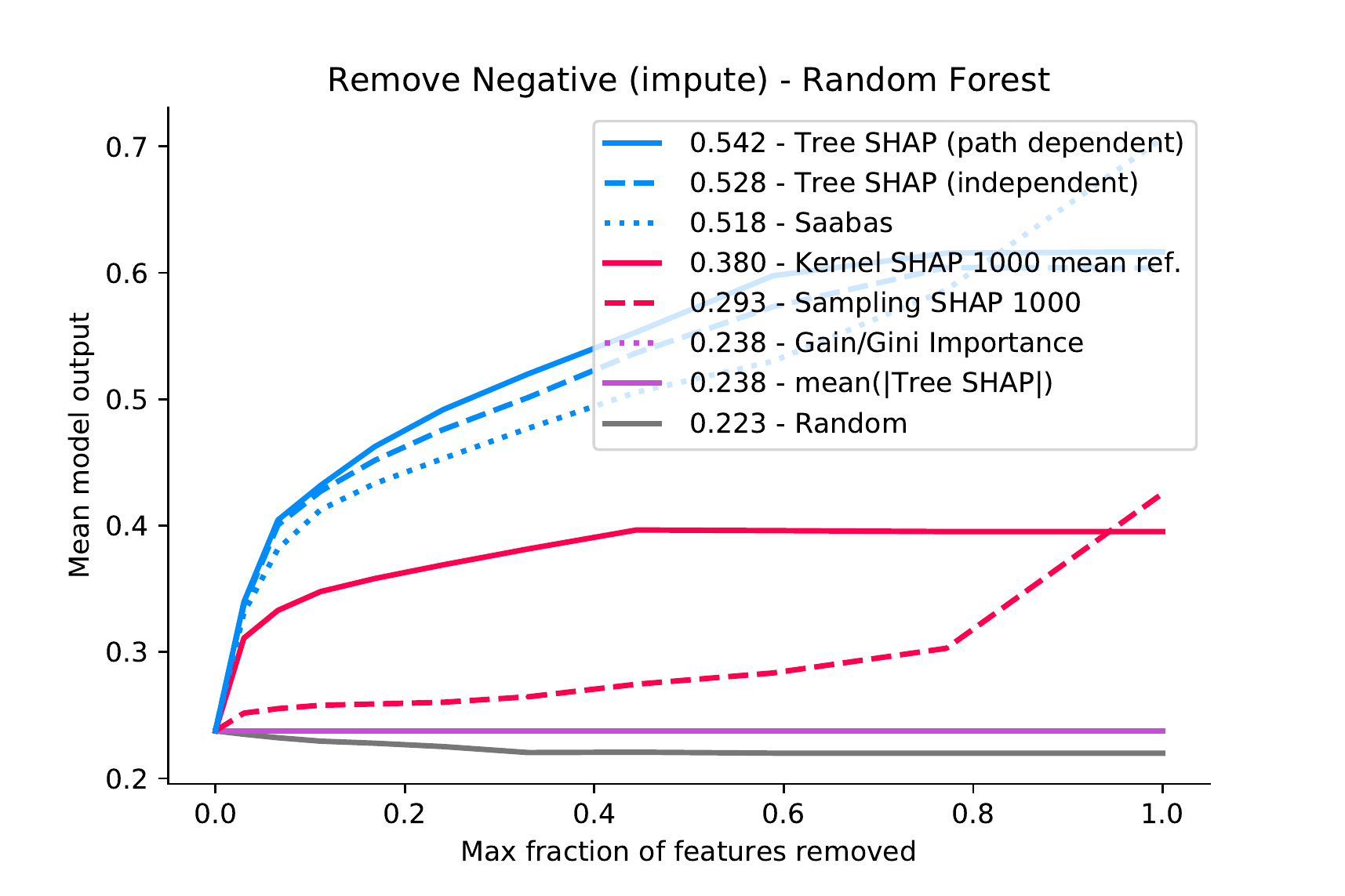}
  \caption{{\bf Remove negative (impute) metric for a random forest trained on the chronic kidney disease dataset.} This is just like Supplementary Figure~\ref{fig:plot_cric_random_forest_keep_negative_impute} except that we remove the most negative features instead of keeping them. Note that the Tree SHAP and Sampling SHAP algorithms correspond to TreeExplainer and IME \cite{vstrumbelj2014explaining}, respectively.}
  \label{fig:plot_cric_random_forest_remove_negative_impute}
\end{figure*}

\begin{figure*}
  \centering
  \includegraphics[width=1.0\textwidth]{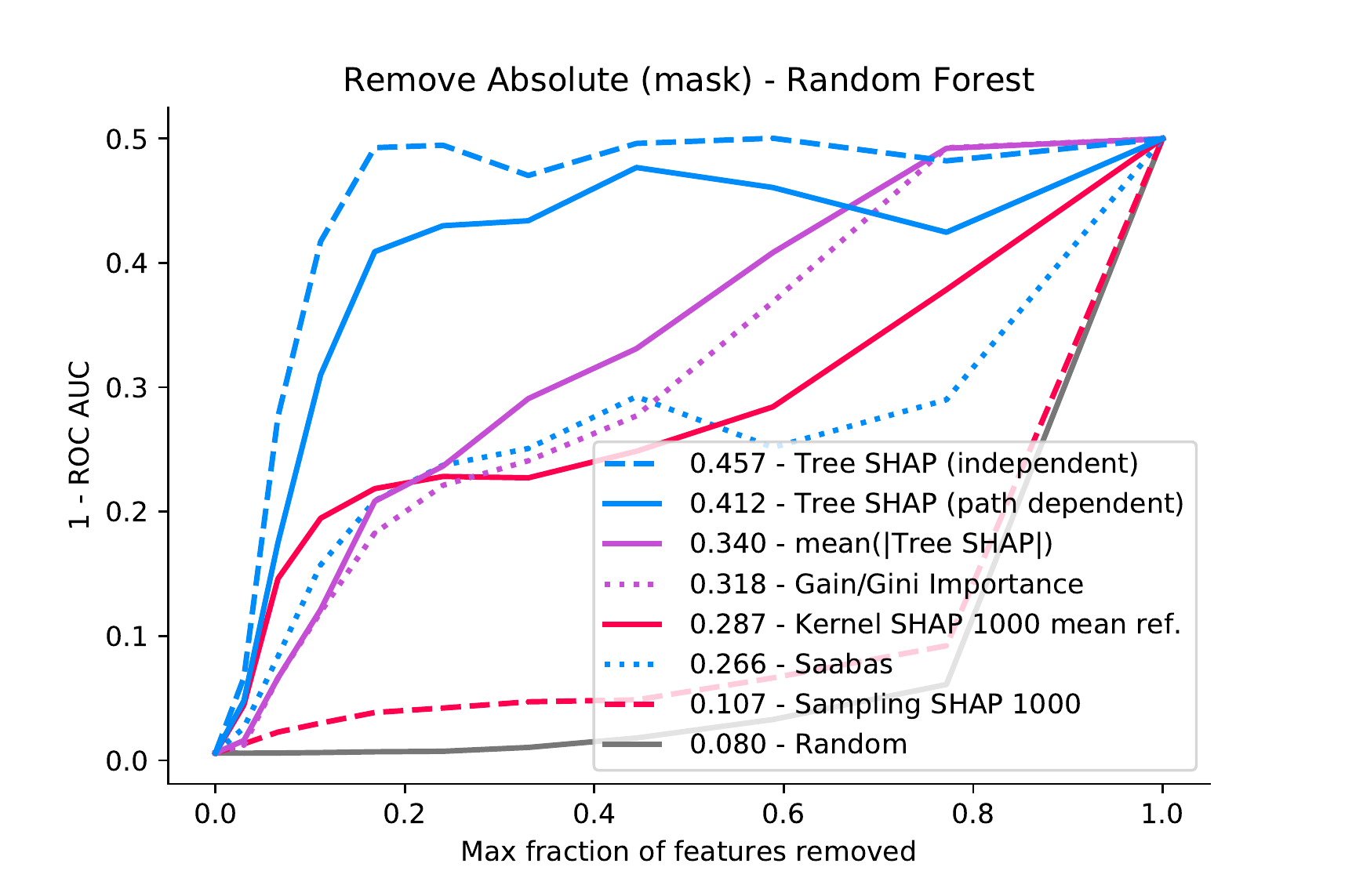}
  \caption{{\bf Remove absolute (mask) metric for a random forest trained on the chronic kidney disease dataset.} This is just like Supplementary Figure~\ref{fig:plot_cric_random_forest_keep_absolute_mask__roc_auc} except that we remove the most important features instead of keeping them. Note that the Tree SHAP and Sampling SHAP algorithms correspond to TreeExplainer and IME \cite{vstrumbelj2014explaining}, respectively.}
  \label{fig:plot_cric_random_forest_remove_absolute_mask__roc_auc}
\end{figure*}

\begin{figure*}
  \centering
  \includegraphics[width=1.0\textwidth]{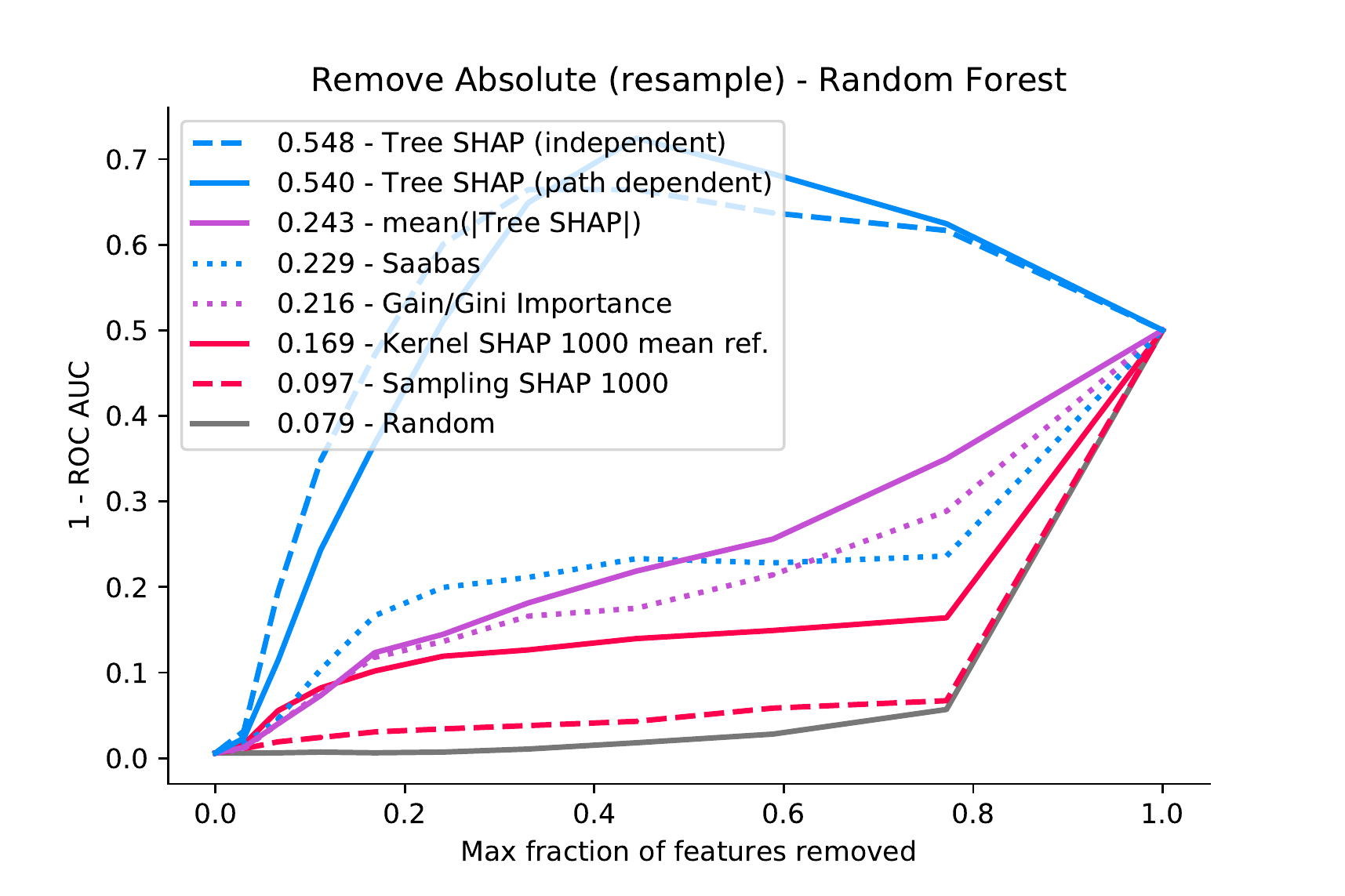}
  \caption{{\bf Remove absolute (resample) metric for a random forest trained on the chronic kidney disease dataset.} This is just like Supplementary Figure~\ref{fig:plot_cric_random_forest_keep_absolute_resample__roc_auc} except that we remove the most important features instead of keeping them. Note that the Tree SHAP and Sampling SHAP algorithms correspond to TreeExplainer and IME \cite{vstrumbelj2014explaining}, respectively.}
  \label{fig:plot_cric_random_forest_remove_absolute_resample__roc_auc}
\end{figure*}

\begin{figure*}
  \centering
  \includegraphics[width=1.0\textwidth]{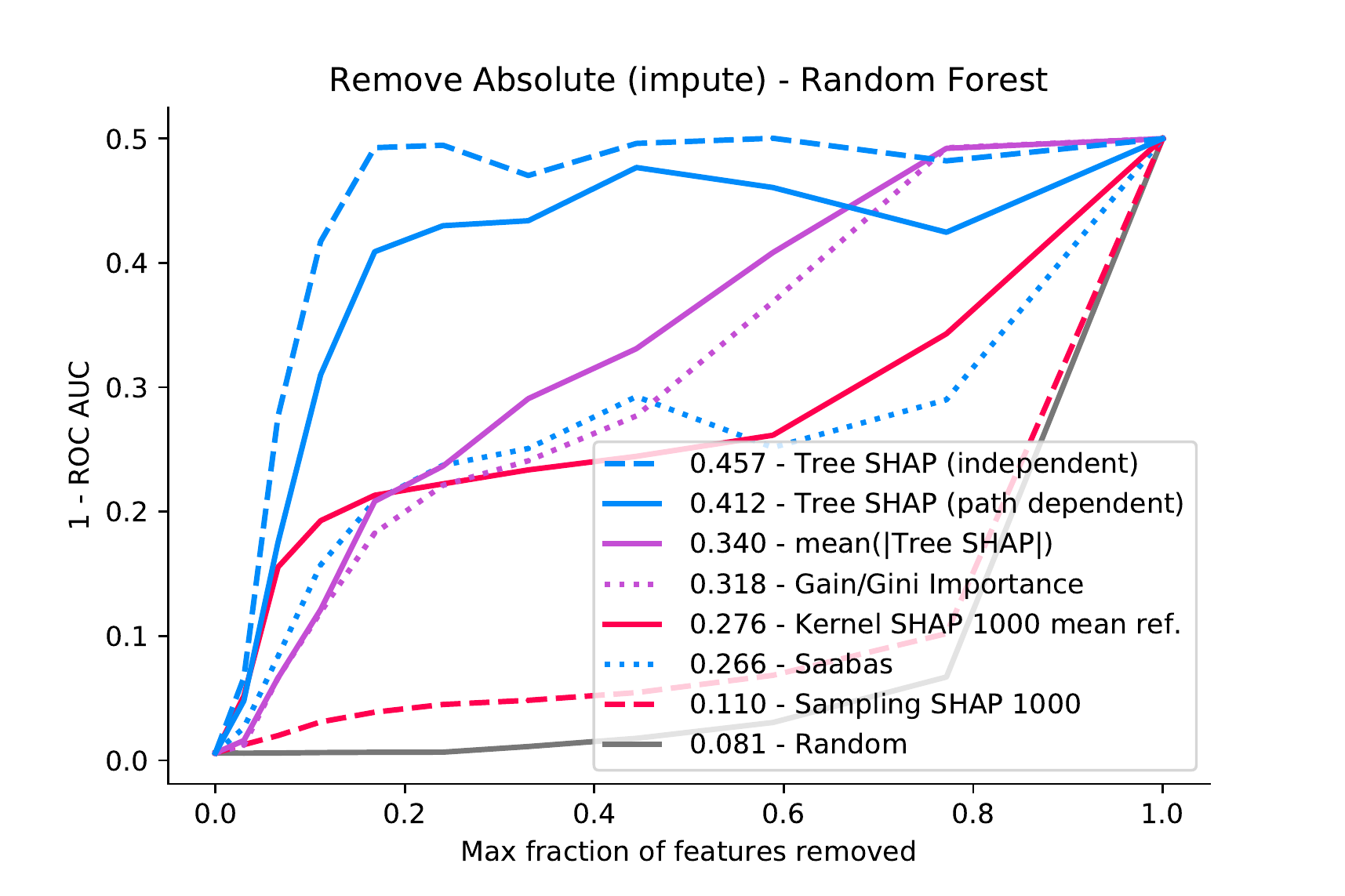}
  \caption{{\bf Remove absolute (impute) metric for a random forest trained on the chronic kidney disease dataset.} This is just like Supplementary Figure~\ref{fig:plot_cric_random_forest_keep_absolute_impute__roc_auc} except that we remove the most important features instead of keeping them. Note that the Tree SHAP and Sampling SHAP algorithms correspond to TreeExplainer and IME \cite{vstrumbelj2014explaining}, respectively.}
  \label{fig:plot_cric_random_forest_remove_absolute_impute__roc_auc}
\end{figure*}

\begin{figure*}
  \centering
  \includegraphics[width=1.0\textwidth]{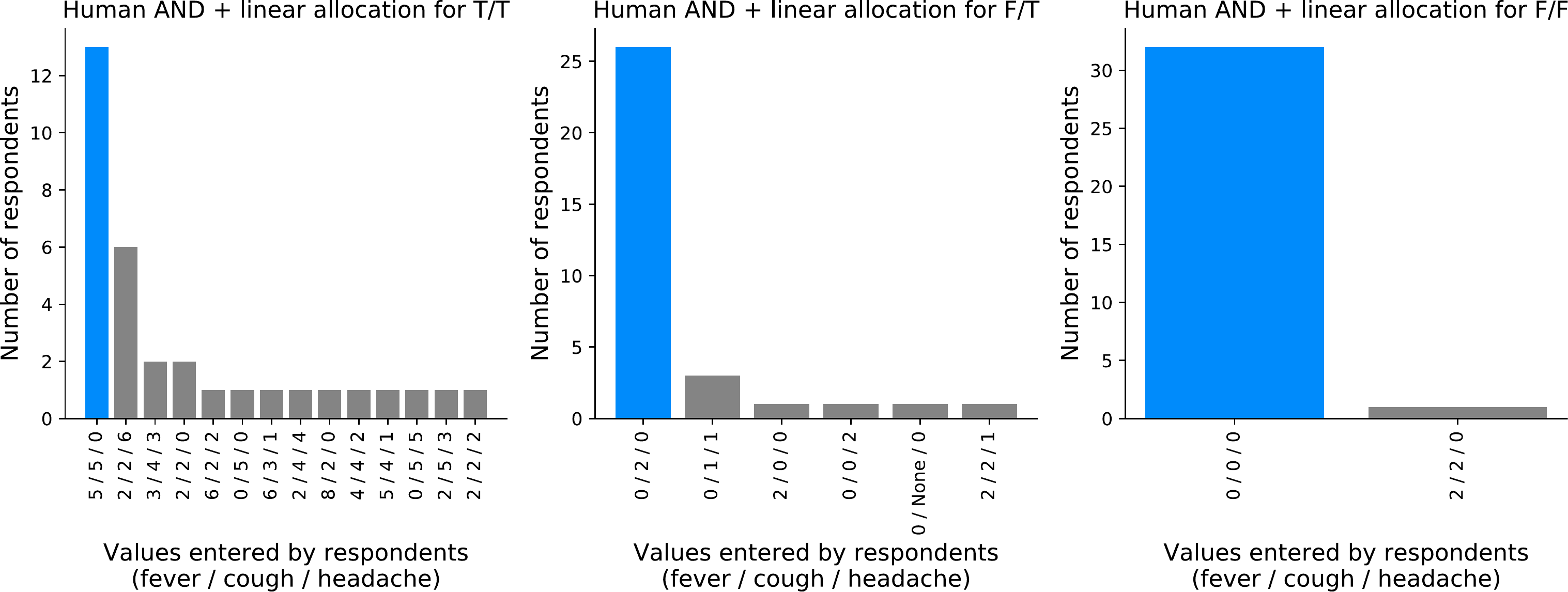}
  \caption{{\bf Consensus human intuition values for an AND function.} Human consensus values were measured by a user study over 33 participants for a simple AND-based function. The most popular allocation was chosen as the consensus (\ref{sec:methods_user_study}). The labels denote the allocation given by people as ``fever / cough / headache''. The title gives the input values for sample being explained as ``fever value/cough value'', where `T' is true and `F' is false; note that headache is always set to true.}
  \label{fig:human_and_survey}
\end{figure*}

\begin{figure*}
  \centering
  \includegraphics[width=1.0\textwidth]{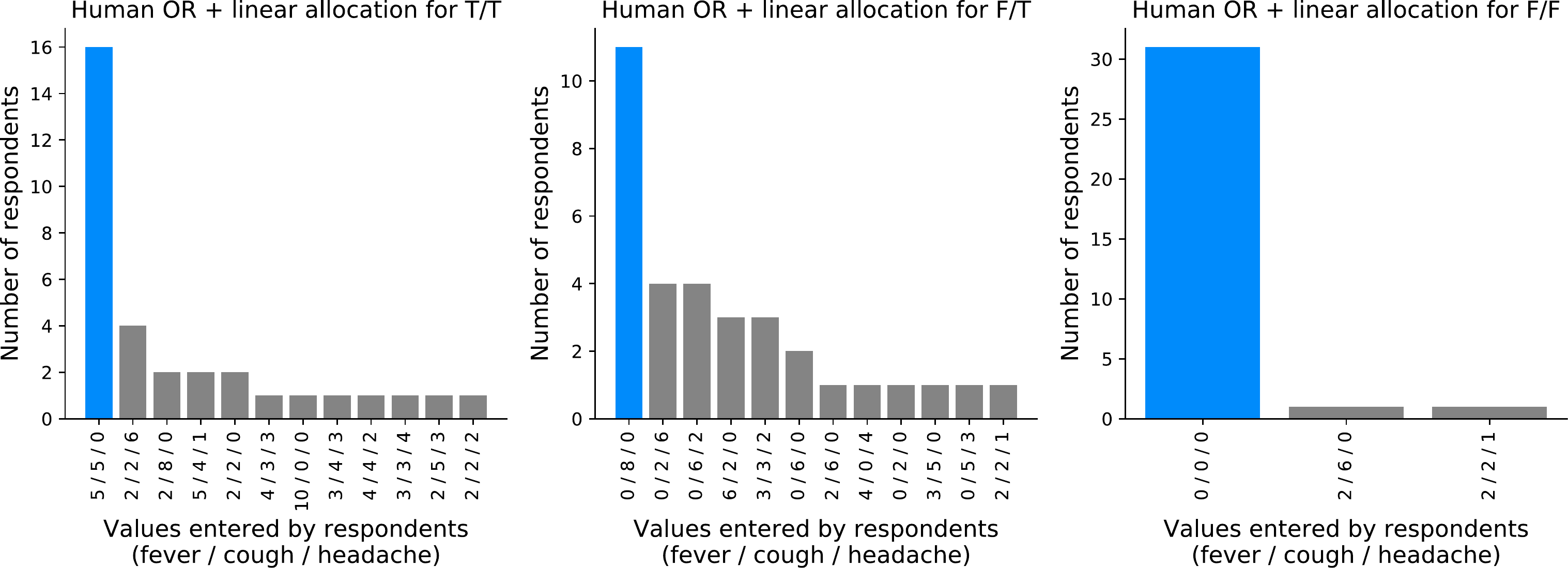}
  \caption{{\bf Consensus human intuition values for an OR function.} Human consensus values were measured by a user study over 33 participants for a simple OR-based function. The most popular allocation was chosen as the consensus (\ref{sec:methods_user_study}). The labels denote the allocation given by people as ``fever / cough / headache''. The title gives the input values for sample being explained as ``fever value/cough value'', where `T' is true and `F' is false; note that headache is always set to true.}
  \label{fig:human_or_survey}
\end{figure*}

\begin{figure*}
  \centering
  \includegraphics[width=1.0\textwidth]{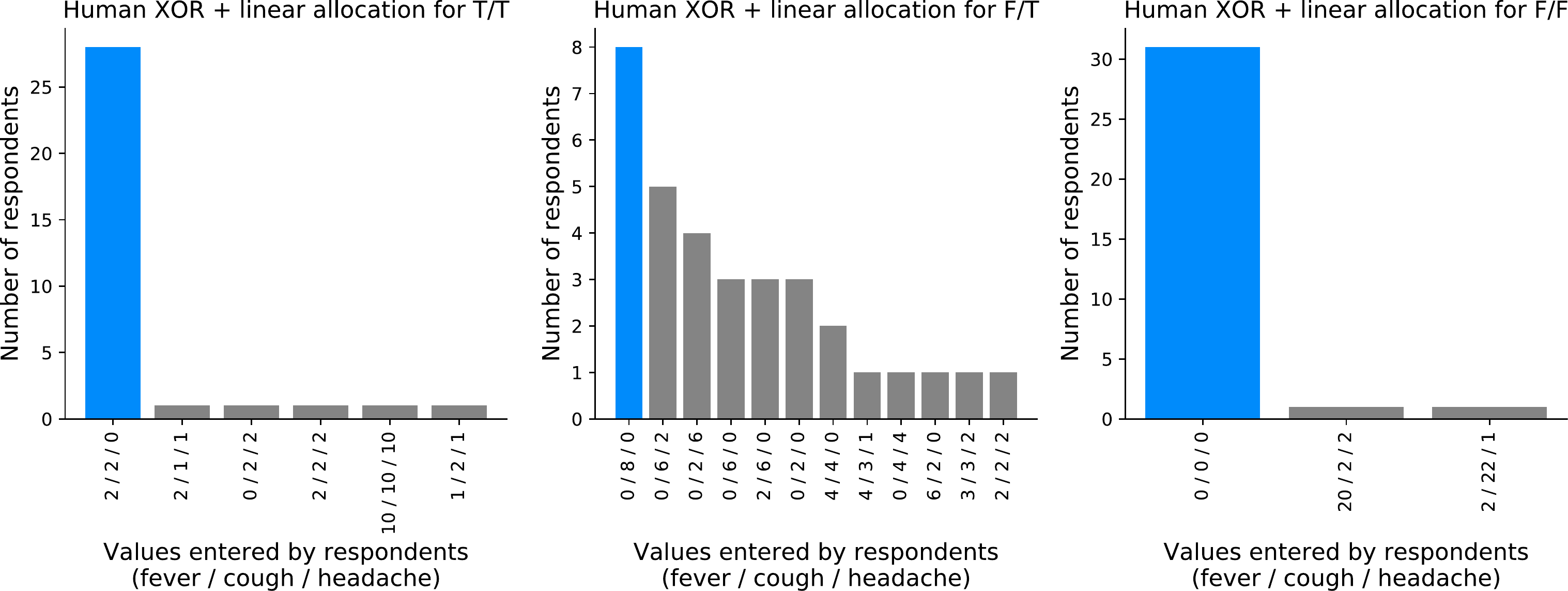}
  \caption{{\bf Consensus human intuition values for an eXclusive OR (XOR) function.} Human consensus values were measured by a user study over 33 participants for a simple XOR-based function. The most popular allocation was chosen as the consensus (\ref{sec:methods_user_study}). The labels denote the allocation given by people as ``fever / cough / headache''. The title gives the input values for sample being explained as ``fever value/cough value'', where `T' is true and `F' is false; note that headache is always set to true.}
  \label{fig:human_xor_survey}
\end{figure*}

\begin{figure*}
  \centering
  \includegraphics[width=1.0\textwidth]{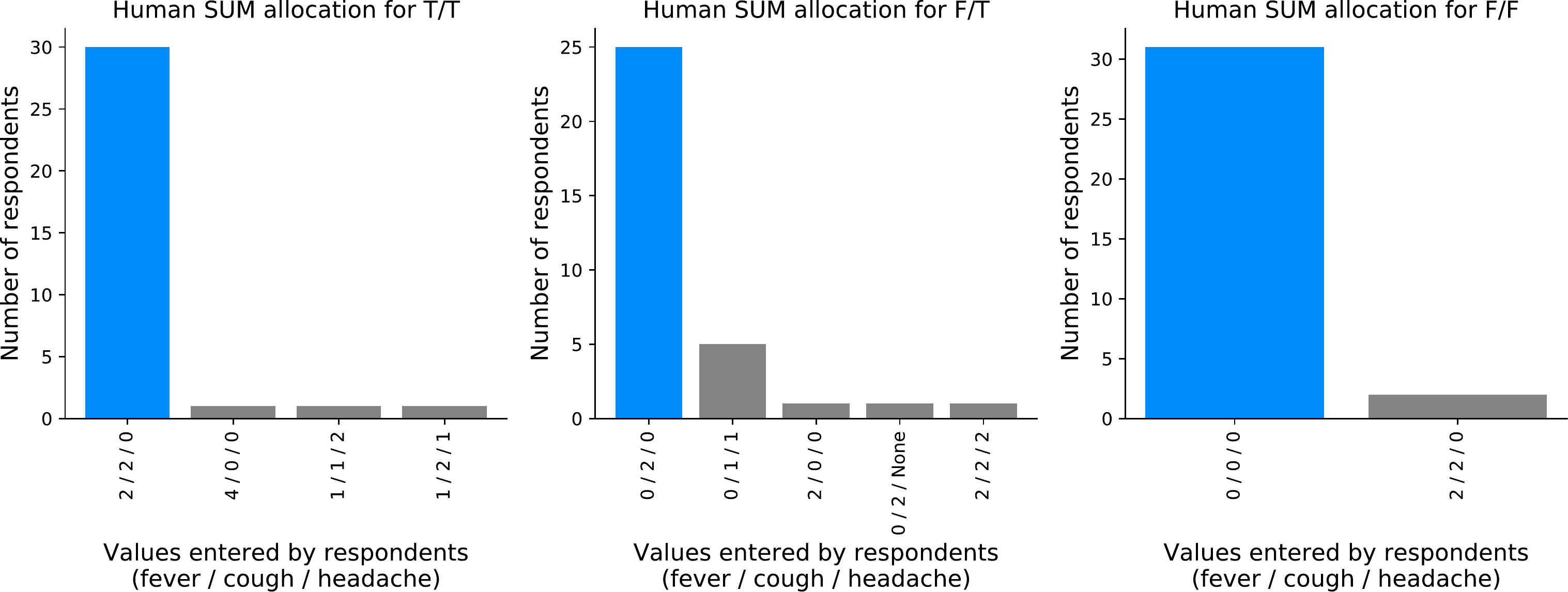}
  \caption{{\bf Consensus human intuition values for a SUM function.} Human consensus values were measured by a user study over 33 participants for a simple SUM function. The most popular allocation was chosen as the consensus (\ref{sec:methods_user_study}). The labels denote the allocation given by people as ``fever / cough / headache''. The title gives the input values for sample being explained as ``fever value/cough value'', where `T' is true and `F' is false; note that headache is always set to true.}
  \label{fig:human_sum_survey}
\end{figure*}

\begin{figure*}
  \centering
  \includegraphics[width=1.0\textwidth]{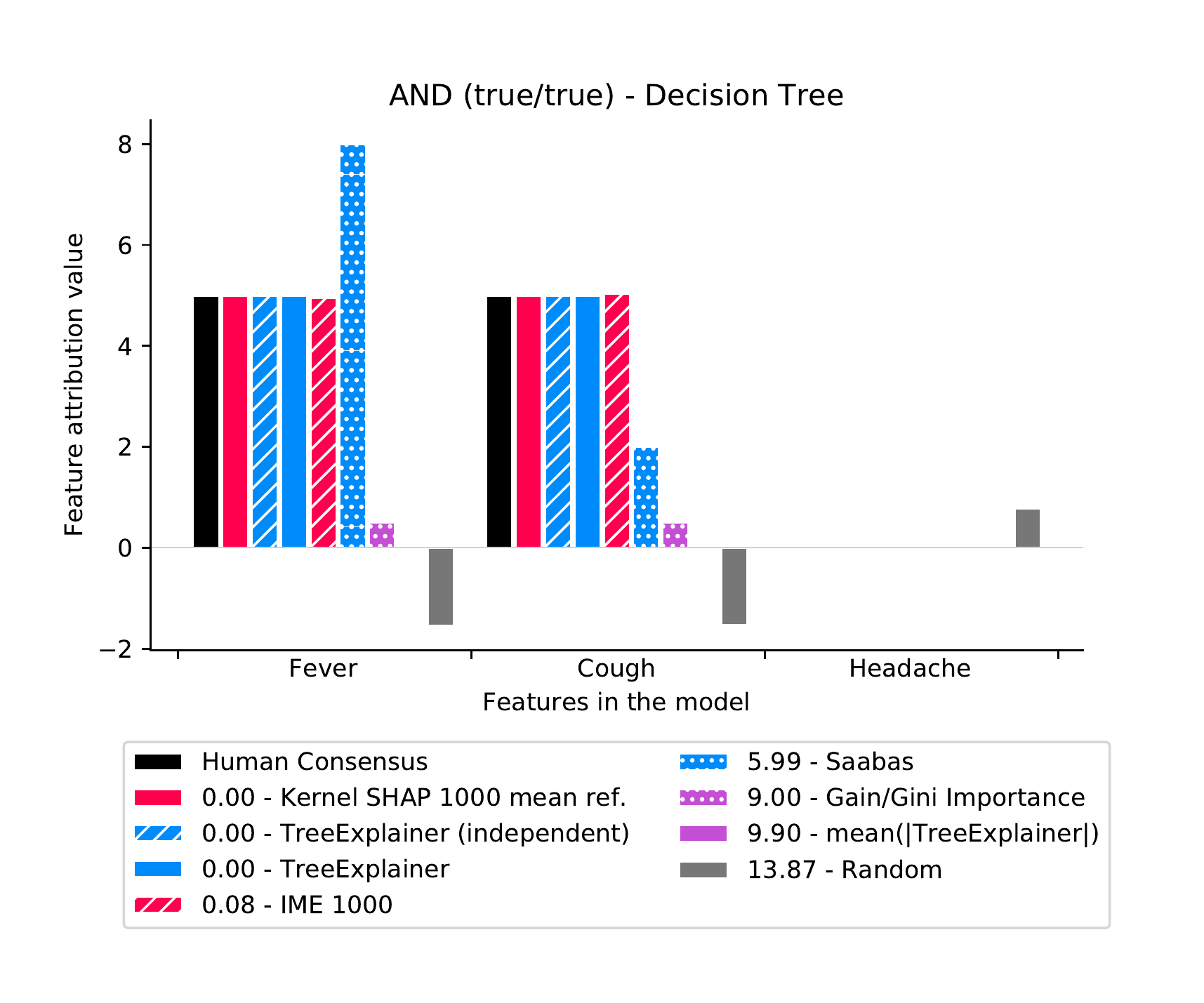}
  \caption{{\bf Comparison with human intuition for an AND function.} Human consensus values were measured by a user study for a simple AND-based function evaluated when all inputs were set to `true' (Supplementary Figure~\ref{fig:human_and_survey}). The results of different explanation methods were then compared to these consensus values to measure their consistency with human intuition (\ref{sec:methods_user_study}).}
  \label{fig:plot_human_decision_tree_human_and_11}
\end{figure*}

\begin{figure*}
  \centering
  \includegraphics[width=1.0\textwidth]{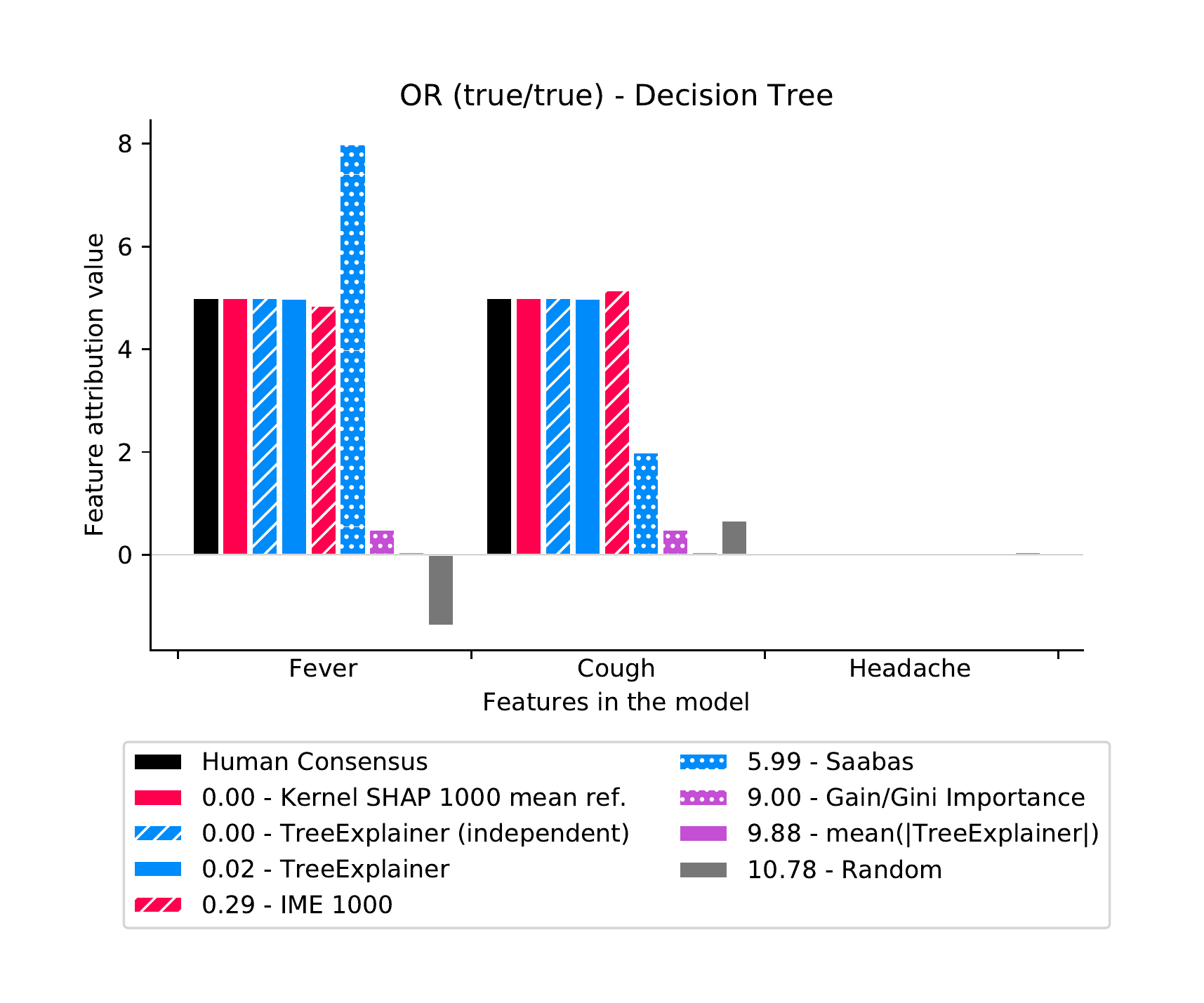}
  \caption{{\bf Comparison with human intuition for an OR function.} Human consensus values were measured by a user study for a simple OR-based function evaluated when all inputs were set to `true' (Supplementary Figure~\ref{fig:human_or_survey}). The results of different explanation methods were then compared to these consensus values to measure their consistency with human intuition (\ref{sec:methods_user_study}).}
  \label{fig:plot_human_decision_tree_human_or_11}
\end{figure*}

\begin{figure*}
  \centering
  \includegraphics[width=1.0\textwidth]{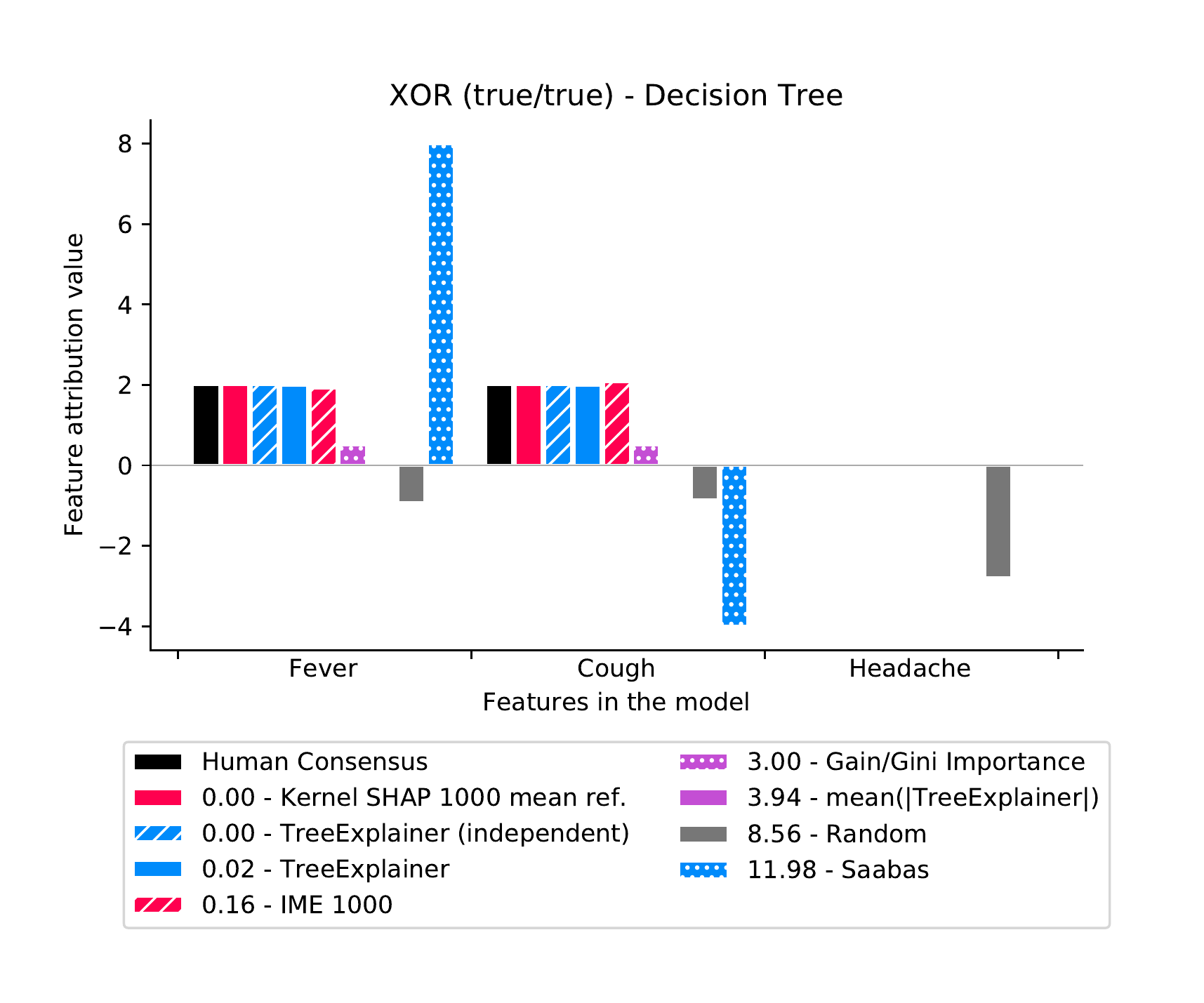}
  \caption{{\bf Comparison with human intuition for an eXclusive OR (XOR) function.} Human consensus values were measured by a user study for a simple XOR-based function evaluated when all inputs were set to `true' (Supplementary Figure~\ref{fig:human_xor_survey}). The results of different explanation methods were then compared to these consensus values to measure their consistency with human intuition (\ref{sec:methods_user_study}).}
  \label{fig:plot_human_decision_tree_human_xor_11}
\end{figure*}

\clearpage
\printbibliography

\end{document}